\documentclass{article}

\PassOptionsToPackage{sort&compress, numbers}{natbib}

\usepackage[preprint]{neurips_2026}

\usepackage[utf8]{inputenc}
\usepackage[T1]{fontenc}
\usepackage{hyperref}
\usepackage{url}
\usepackage{xurl}
\usepackage{booktabs}
\usepackage{amsfonts}
\usepackage{amsmath}
\usepackage{amssymb}
\usepackage{amsthm}
\usepackage{mathtools}
\usepackage{nicefrac}
\usepackage{microtype}
\usepackage[dvipsnames]{xcolor}
\usepackage{graphicx}
\usepackage{subcaption}
\usepackage{caption}
\usepackage{wrapfig}
\usepackage{float}
\usepackage{multirow}
\usepackage{makecell}
\usepackage{array,ragged2e}
\newcolumntype{P}[1]{>{\RaggedRight\arraybackslash}p{#1}}
\usepackage{tabularx}
\usepackage{longtable}
\usepackage{colortbl}
\usepackage{soul}
\usepackage{enumitem}
\usepackage{pifont}
\usepackage{bbding}
\usepackage{xspace}
\usepackage{afterpage}
\usepackage{fontawesome5}
\usepackage{listings}
\usepackage{algorithm}
\usepackage{algorithmic}
\usepackage{placeins}

\usepackage{tikz}
\usetikzlibrary{positioning, arrows.meta, calc, shapes.geometric, fit, backgrounds, decorations.pathreplacing}
\usepackage[most,skins,theorems]{tcolorbox}

\newtheorem{theorem}{Theorem}
\newtheorem{lemma}{Lemma}

\newtheorem{corollary}{Corollary}
\theoremstyle{definition}
\newtheorem{definition}{Definition}
\newtheorem{assumption}{Assumption}
\theoremstyle{remark}
\newtheorem{remark}{Remark}

\definecolor{darkblue}{rgb}{0, 0, 0.5}
\hypersetup{colorlinks=true, citecolor=darkblue, linkcolor=darkblue, urlcolor=darkblue}

\usepackage[capitalise,nameinlink]{cleveref}
\crefname{theorem}{Theorem}{Theorems}
\crefname{lemma}{Lemma}{Lemmas}
\crefname{proposition}{Proposition}{Propositions}
\crefname{corollary}{Corollary}{Corollaries}
\crefname{definition}{Definition}{Definitions}
\crefname{assumption}{Assumption}{Assumptions}
\crefname{remark}{Remark}{Remarks}
\crefname{claim}{Claim}{Claims}
\crefname{equation}{Eq.}{Eqs.}
\crefname{figure}{Figure}{Figures}
\crefname{table}{Table}{Tables}
\crefname{algorithm}{Algorithm}{Algorithms}
\crefname{section}{Section}{Sections}
\crefname{subsection}{Section}{Sections}
\crefname{subsubsection}{Section}{Sections}

\definecolor{lightorange}{HTML}{faa755}
\definecolor{lightblue}{RGB}{220,235,250}
\tcbset{
  takeawaysbox/.style={
    title=Takeaways,
    colback=lightblue!80,
    colframe=black,
    fonttitle=\bfseries\small,
    coltitle=white,
    colbacktitle=black,
    enhanced,
    attach boxed title to top left={xshift=2.5mm,yshift=-2.5mm},
    boxed title style={rounded corners, size=small, colframe=black, colback=black},
    width=\linewidth,
    arc=3.5mm
  }
}

\title{OPRD: On-Policy Representation Distillation}

\author{%
  Shenzhi Yang$^{1,2}$\thanks{Work in progress.} \quad
  Guangcheng Zhu$^{1,2}$ \quad
  Bowen Song$^{2}$ \quad
  Haobo Wang$^{1}$\thanks{Corresponding author.} \\[0.3em]
  \textbf{ Mingxuan Xia$^{1}$ \quad
  Xing Zheng$^{2}$ \quad
  Yingfan Ma$^{2}$ \quad 
  Zhongqi Chen$^{2}$ \quad } \\[0.3em]
  \textbf{ Weiqiang Wang$^{2}$ \quad
  Junbo Zhao$^{1,2}$ \quad
  Gang Chen$^{1}$} \\[0.6em]
  $^{1}$Zhejiang University \quad $^{2}$Ant Group
}

\begin{document}

\maketitle


\begin{abstract}
On-policy distillation (OPD) supervises the student exclusively in the \emph{output space} by matching next-token distributions.
This paradigm suffers from two limitations: (i)~a high-variance gradient estimator whose signal-to-noise ratio collapses as the student approaches the teacher, and (ii)~an LM-head information bottleneck that discards the teacher's intermediate hidden states.
We propose \textbf{On-Policy Representation Distillation (OPRD)}, the first method to lift on-policy distillation into the \emph{hidden-state space}.
OPRD aligns student and teacher representations across selected layers on the same on-policy rollouts, providing dense, deterministic, per-layer supervision while bypassing the LM head entirely.
Theoretically, OPRD provides a deterministic per-sample gradient, removing the token-level estimation variance that plagues OPD, and exposes structural information that any output-space objective necessarily discards.
Empirically, OPRD closes the student--teacher gap on competition mathematics benchmarks (AIME 2024, AIME 2025, AIMO) where every output-space baseline plateaus below the teacher, while training $1.44\times$ faster and using up to $54\%$ less memory.
We further extend OPRD to the \emph{cross-architecture} setting via \textbf{OPRD-Bridge}: by exploiting the observation that heterogeneous models share a low-rank representational structure, we construct a frozen projector pair that aligns representations across arbitrary depth/width mismatches, shifting the alignment from the output space (which depends on a shared vocabulary) to the representation space.
We validate OPRD-Bridge on both cross-architecture (Qwen3-4B $\to$ Qwen3-1.7B-Base) and cross-tokenizer (Phi-4-mini-reasoning $\to$ Qwen3-1.7B-Base) settings, demonstrating successful knowledge transfer even when the vocabulary-based alignment channel is unavailable. The code is available via \url{https://github.com/ShenzhiYang2000/OPRD}.
\end{abstract}

\section{Introduction}
\label{sec:intro}


On-policy distillation (OPD) has become a central building block in large language model (LLM) post-training.
By letting the student sample its own responses and then scoring each token against the teacher's conditional distribution, OPD provides a dense, token-level training signal that adapts to the student's current policy, avoiding the exposure bias inherent in training on static teacher outputs~\citep{bengio2015scheduled}.
Multiple production systems now rely on OPD as a primary post-training stage~\citep{yang2025qwen3,xiao2026mimo,zeng2026glm,deepseek2026deepseek}, positioning it alongside supervised fine-tuning and outcome-reward reinforcement learning.

Despite this momentum, the design space of OPD has remained surprisingly narrow.
Every variant proposed to date (sampled-token~\citep{xiao2026mimo,yang2026learning}, full-vocabulary, and top-$k$) differs only in \emph{how many output tokens} are evaluated per position, yet they all operate inside the same \emph{output space}: the divergence is computed over next-token probability distributions $p_t$ and $q_t$.
We argue that this output-only paradigm imposes two practical limitations that become increasingly damaging as training progresses.

\begin{figure}[!t]
  \centering
  \includegraphics[width=0.9\linewidth]{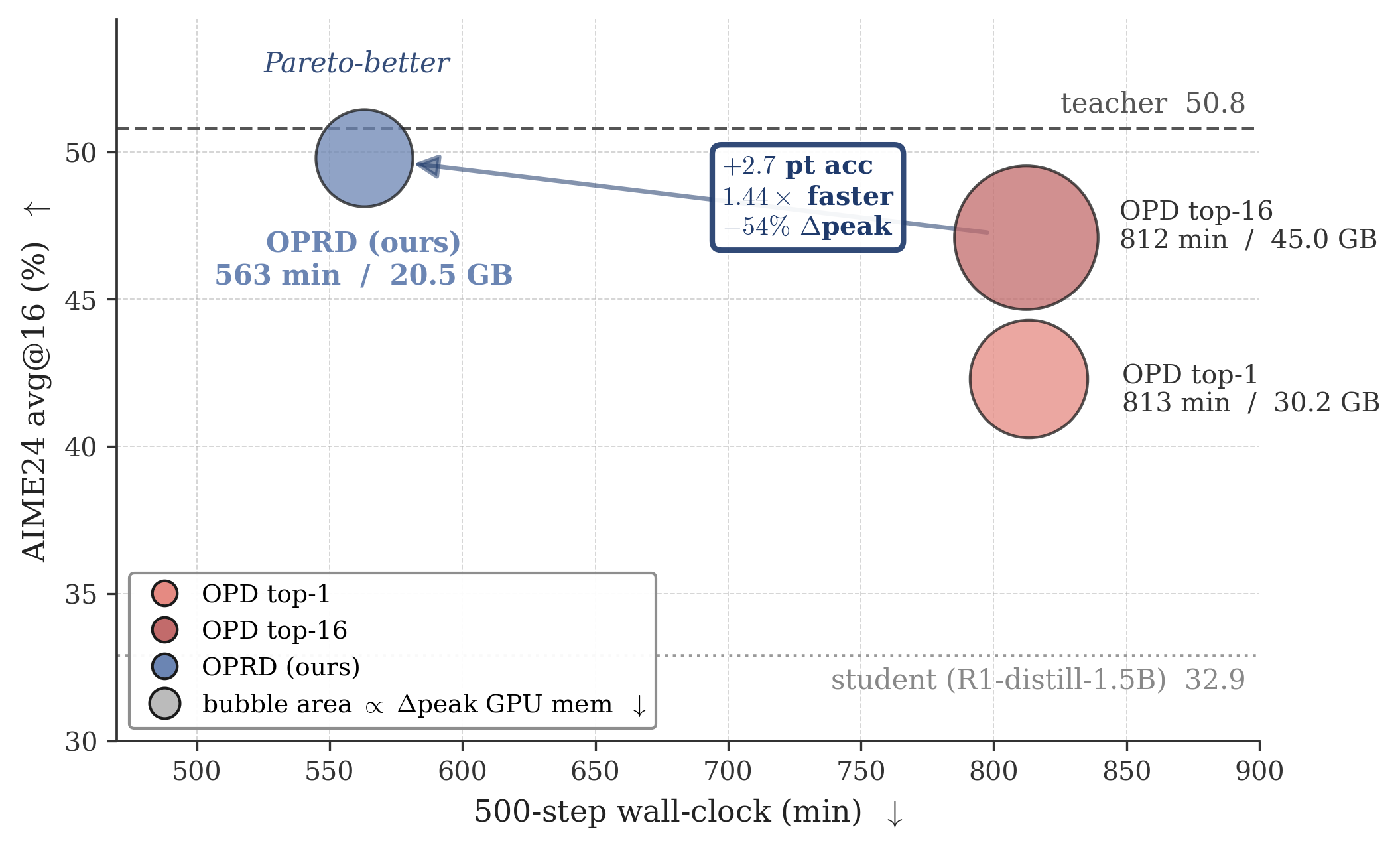}
  \caption{\textbf{OPRD is strictly Pareto-dominant on accuracy, training time, and GPU memory.}
    Each bubble is a method trained from the same R1-distill-1.5B student against JustRL-1.5B teacher for $500$ optimizer steps on $8\!\times\!$ A100 GPU (80G) FSDP (\S\ref{sec:experiments}).
    Axes carry the two ``compute'' costs (wall-clock $\downarrow$, AIME24 Avg@16 $\uparrow$); bubble area encodes the third cost (actor-update $\Delta$peak GPU memory $\downarrow$).
    OPRD (navy bubble) simultaneously dominates the strongest output-space baseline (OPD top-$16$) on all three axes: $2.7$\,pt accuracy gain, $1.44\times$ speed-up, and $54\%$ $\Delta$peak memory cut. Results on AIME25 and AIMO are qualitatively identical (\S\ref{sec:exp_main}).}
  \label{fig:teaser}
\end{figure}

\indent \textbf{Limitation 1: Variance dominates the late-stage signal.}\label{lim:variance}
Sampled-token OPD estimates each token-level reverse KL from a \emph{single} sample drawn from a vocabulary of size $|\mathcal{V}|$ (e.g., $\approx\!151$K for Qwen3).
The estimator is unbiased, but its variance shrinks much more slowly than the signal: letting $\delta \triangleq \|p_t - q_t\|$ measure the student--teacher gap, the expected gradient decays as $O(\delta^2)$ while the variance decays only as $O(\delta)$, so the signal-to-noise ratio collapses and training plateaus well below the teacher (\Cref{fig:training_curves}).
Top-$k$ OPD partially mitigates this by evaluating $k$ tokens per position, but trades sampling variance for truncation bias and still plateaus empirically.

\indent \textbf{Limitation 2: The output layer is an information bottleneck.}\label{lim:compute}
Every output-space variant treats the teacher as a \emph{black-box probability oracle}, querying only the LM-head output.
Yet the teacher has computed, at every position, an entire stack of $d$-dimensional hidden states encoding attention patterns, mid-layer reasoning state, and geometric structure.
Nearly all of this is destroyed by the LM-head projection $W_{\mathrm{head}}\!:\mathbb{R}^d \!\to\! \mathbb{R}^{|\mathcal{V}|}$ and the subsequent softmax: output distributions that agree to within an arbitrary tolerance can correspond to hidden states differing along entire affine subspaces of $\mathbb{R}^d$.
The student is therefore graded only on what survives this projection, and receives no signal about \emph{how} the teacher arrived at that distribution.
This is particularly wasteful in the on-policy regime, where the teacher's hidden states are already computed on every rollout but discarded before they reach the loss.

To overcome both limitations, we propose \textbf{On-Policy Representation Distillation (OPRD)}, the first method to lift on-policy distillation from the output space into the \emph{hidden-state space}.
On the same on-policy rollouts $(x, \hat{y})$ already used by standard OPD, OPRD aligns the student's intermediate hidden representations with the teacher's across selected transformer layers and response positions via a normalized mean-squared error objective.
A single design choice (supervising at the representation level rather than at the output level) simultaneously addresses both limitations.
\textbf{First, deterministic, low-variance gradients:} OPRD's MSE objective is a deterministic function of the rollout; its gradient carries \emph{zero} additional sampling variance, eliminating the late-stage signal-to-noise collapse of OPD by construction.
\textbf{Second, a richer supervision channel beyond logits:} OPRD taps the teacher at any subset of its $L$ intermediate layers, exposing (layers $\times$ positions $\times$ hidden-dim) scalars of structural supervision per sample, orders of magnitude more than the signal extracted at the output. The student is graded on the same intermediate representations the teacher actually computed, without filtering through the LM-head projection.
Both properties follow from a single conceptual shift: moving the supervision target from the output of the LM head to its input.
Because the loss path never materialises the $[B, T, |\mathcal{V}|]$ logits tensor, OPRD also reduces wall-clock time and peak GPU memory as a direct consequence of this design.
OPRD is a self-contained training objective that can be used on its own; it also composes additively with any OPD variant at essentially zero infrastructure cost.
Beyond the standard teacher--student setting studied here, we highlight two high-value scenarios where OPRD's advantages are especially pronounced.
\textbf{(1)~Multi-model RL merging.}
State-of-the-art RL pipelines increasingly merge multiple teacher or reward-model checkpoints into a single student.
In this setting, full-vocabulary OPD requires materialising a $[B, T, |\mathcal{V}|]$ logit tensor \emph{per teacher}, quickly exhausting GPU memory and demanding heavy infrastructure work~\citep{deepseek2026deepseek}.
Top-$k$ OPD alleviates memory but introduces a truncation bias (tail tokens are ignored) and still plateaus below the teacher empirically.
OPRD sidesteps both: its hidden-state loss never touches the vocabulary dimension, so memory and wall-clock scale with $d$ rather than $|\mathcal{V}|$, while its deterministic gradient avoids the variance trap entirely.
\textbf{(2)~On-policy self-distillation (OPSD).}
A growing line of work constructs the teacher from the student itself by injecting privileged information (e.g., ground-truth solutions, step-level verification signals) into the prompt.
Because teacher and student share exactly the same weights, the same-architecture requirement is satisfied by construction, and the hidden-state alignment signal is maximally informative.
OPRD can therefore serve as a drop-in replacement for the output-space reverse-KL in any OPSD pipeline, delivering lower variance and lower cost without any architectural modification.
We discuss both applications in detail in \S\ref{sec:discussion}.
The strict Pareto improvement over all output-space baselines is summarized in \Cref{fig:teaser}.

The above formulation, which we call \textbf{OPRD-Vanilla}, however, requires teacher and student to share the \emph{same} architecture: identical depth, identical hidden dimension, and compatible initialization.
The moment these conditions are violated, the method breaks down, since hidden states of different dimensionality cannot be directly compared, and even when dimensions happen to match across different architectures, the cosine similarity between corresponding hidden states is empirically zero.
This same-architecture constraint excludes precisely the most common distillation scenario, namely distilling a large, expensive teacher into a smaller, deployable student, which inherently involves heterogeneous model pairs.

A useful perspective on this problem comes from asking why \emph{output-space} distillation does not suffer from the same limitation.
The answer is that logit distillation possesses a natural \emph{bridge}: the shared vocabulary.
Both teacher and student project their hidden states through their respective LM heads into the same probability simplex, making outputs directly comparable regardless of internal architecture.
The vocabulary serves as a fixed, pre-established interface that neither model modifies during training, and knowledge flows through it.
This bridge works, but it has a critical limitation of its own: it requires both models to share the same vocabulary.
Moreover, it operates at only a single point in the network (the final layer) and is lossy, compressing all representational structure into a single next-token distribution.
The natural question, then, is whether one can construct an analogous bridge for \emph{representations}, one that carries richer information, operates at every layer, and does not depend on a shared vocabulary.

The Platonic Representation Hypothesis~\citep{huh2024platonic} provides theoretical grounds to believe such a bridge should exist.
It posits that strong neural networks, regardless of architecture, are converging toward a shared statistical model of reality in their representation spaces.
Empirically, models with higher downstream performance exhibit higher mutual representation alignment~\citep{kornblith2019similarity}, and a single linear projection suffices to bridge representations across architectures and even across modalities~\citep{moschella2022relative,merullo2022linearly}.
We confirm this directly: pairing a Qwen3-4B teacher (36 layers, $d\!=\!2560$) with a Qwen3-1.7B-Base student (28 layers, $d\!=\!2048$), a rank-8 linear subspace achieves \textbf{95\% cosine similarity} between the projected representations of the two models, despite their different depths and widths.
Notably, this similarity \emph{decreases} with higher rank, indicating that the shared structure concentrates in very few principal directions and that additional dimensions introduce noise rather than signal.

This finding leads directly to \textbf{OPRD-Bridge}.
We construct a pair of linear projectors, one per model, that map their respective hidden states into a shared low-rank subspace.
The teacher's projector is obtained via PCA of its hidden-state covariance (extracting the principal variation directions); the student's projector is trained to align with it.
Once both projectors converge, they are \emph{frozen} and serve as a fixed bridge through which representational structure flows from teacher to student during the main distillation phase, which optimizes only the student backbone.
The design mirrors logit distillation: just as the vocabulary is a frozen interface through which output-level knowledge flows, the projector pair $(P_T, P_S)$ is a frozen interface through which representation-level knowledge flows, but at every layer rather than only the last, and decoupled from the vocabulary.
We validate this capability empirically by distilling across completely disjoint tokenizers (Phi-4-mini-reasoning $\to$ Qwen3-1.7B), a setting where the vocabulary-based alignment channel is unavailable.

Our main contributions are as follows:
\begin{enumerate}[leftmargin=16pt, itemsep=2pt]
    \item \textbf{OPRD-Vanilla: representation-level on-policy distillation.} We propose On-Policy Representation Distillation, the first method to lift on-policy distillation from the output space into the hidden-state space. We provide a two-perspective theoretical analysis showing that OPRD (i)~yields a deterministic per-sample gradient, removing the token-level estimation variance of OPD's gradient estimator, and (ii)~exposes per-layer structural information that the LM-head projection necessarily discards.

    \item \textbf{OPRD-Bridge: cross-architecture extension.} We show that heterogeneous teacher--student pairs share a low-rank representational structure, and exploit this to construct a frozen projector pair that enables representation distillation across arbitrary architecture pairs (different depth, width, and potentially vocabulary), the first such method in the on-policy regime.

    \item \textbf{Empirical validation.} On competition mathematics benchmarks (AIME 2024, AIME 2025, AIMO), OPRD-Vanilla closes the student--teacher gap where every output-space baseline plateaus, while training $1.44\times$ faster and using up to $54\%$ less memory.
\end{enumerate}

\section{Background and Problem Setup}
\label{sec:preliminaries}

This section formalizes the on-policy distillation problem we build upon.
We introduce the necessary notation in \S\ref{sec:notation}, define the on-policy distillation framework in \S\ref{sec:opd_framework}, and catalogue the three output-space supervision granularities used in prior work in \S\ref{sec:opd_variants}.
We close in \S\ref{sec:opd_limitations} by isolating the common structural property of these variants that motivates our hidden-state approach in the next section.

\subsection{Notation}
\label{sec:notation}

We consider two autoregressive language models with a shared vocabulary $\mathcal{V}$: a \emph{student} $\pi_\theta$ with trainable parameters $\theta$, and a fixed \emph{teacher} $\pi_T$.
A training instance is a prompt $x = (x_1, \ldots, x_n)$ drawn from a prompt distribution $\mathcal{D}_x = \{x^{(i)}\}_{i=1}^N$; a model response is a token sequence $y = (y_1, \ldots, y_m)$ produced autoregressively.
For brevity we write the prefix up to step $t$ as $y_{<t} \triangleq (y_1, \ldots, y_{t-1})$, and use $\pi(\cdot \mid x, y_{<t})$ to denote either model's next-token distribution over $\mathcal{V}$ conditioned on $(x, y_{<t})$.
The notation $y \sim \pi_\theta(\cdot \mid x)$ refers to an autoregressive sample drawn from the student.

Both models follow the standard transformer template: a stack of self-attention blocks producing intermediate hidden states $h^{(l)} \in \mathbb{R}^{d}$ at each layer $l$ and position, followed by a language-model head $W_{\mathrm{head}} \in \mathbb{R}^{|\mathcal{V}| \times d}$ that maps the final hidden state to logits.
We use $L$ and $d$ when the two models share the same depth and width; when they differ (\S\ref{sec:oprd_bridge}), we write $L_S, L_T$ and $d_S, d_T$ explicitly.
We write $h_{\theta,t}^{(l)}$ and $h_{T,t}^{(l)}$ for the student and teacher hidden states at layer $l$ and response position $t$, with both networks evaluated on the same input sequence.

\subsection{The On-Policy Distillation Framework}
\label{sec:opd_framework}

\paragraph{Setup.}
On-policy distillation (OPD) departs from classical knowledge distillation by drawing the supervision distribution from the student rather than from a fixed dataset.
Concretely, at each training step the student first samples a response
$\hat{y} = (\hat{y}_1, \ldots, \hat{y}_T) \sim \pi_\theta(\cdot \mid x)$
of length $T \triangleq |\hat{y}|$, after which both models are evaluated on the student-generated prefixes.
For each position $t \in \{1, \ldots, T\}$ this yields a pair of next-token distributions over $\mathcal{V}$:
\begin{equation}
  p_t(v) \;\triangleq\; \pi_\theta(v \mid x, \hat{y}_{<t}),
  \qquad
  q_t(v) \;\triangleq\; \pi_T(v \mid x, \hat{y}_{<t}),
  \qquad v \in \mathcal{V}.
\end{equation}
The defining feature of OPD is that the teacher is queried on \emph{student-visited states}, namely prefixes that arise from the current policy, rather than on canonical teacher trajectories.
This eliminates the exposure-bias gap between training and inference distributions that plagues fixed-target distillation.

\paragraph{Objective.}
The canonical OPD objective minimizes the trajectory-level reverse KL divergence between the student and teacher policies on student rollouts.
By the chain rule for KL divergence, this trajectory-level quantity decomposes exactly into a sum of token-level reverse KL terms:
\begin{equation}
  \mathcal{L}_{\mathrm{OPD}}(\theta)
  \;=\;
  \mathbb{E}_{x \sim \mathcal{D}_x,\;
    \hat{y} \sim \pi_\theta(\cdot \mid x)}
  \!\left[
    \sum_{t=1}^{T} D_{\mathrm{KL}}(p_t \,\|\, q_t)
  \right],
  \label{eq:opd_exact_token_decomp}
\end{equation}
where the token-level reverse KL at position $t$ is
$D_{\mathrm{KL}}(p_t \,\|\, q_t) = \sum_{v \in \mathcal{V}} p_t(v) \log [p_t(v)/q_t(v)]$.
Eq.~\eqref{eq:opd_exact_token_decomp} is conceptually clean but computationally inconvenient: it requires summing over the full vocabulary $\mathcal{V}$ at every position, which is prohibitive for modern LLMs with $|\mathcal{V}|$ in the hundreds of thousands.
Practical implementations differ in how they approximate this sum, and we review the three dominant choices below.

\subsection{Three Output-Space Variants}
\label{sec:opd_variants}

We use a unified template to describe each variant: at each position $t$, define a token subset $S_t \subseteq \mathcal{V}$ and a per-position loss $\ell_t$ that depends only on $\{p_t(v), q_t(v) : v \in S_t\}$.
The three variants below correspond to different choices of $S_t$.

\paragraph{(a) Sampled-token OPD ($S_t = \{\hat{y}_t\}$).}
The most lightweight and by far the most widely adopted choice in production deployments~\citep{xiao2026mimo,yang2026learning}.
A single token $\hat{y}_t \sim p_t$ already drawn during rollout is reused as the supervision target, and the per-position loss takes the form of a log-ratio:
\begin{equation}
  \ell_t^{\mathrm{sample}}
  \;\triangleq\;
  \log p_t(\hat{y}_t) - \log q_t(\hat{y}_t),
  \qquad
  \mathcal{L}_{\mathrm{OPD}}^{\mathrm{sample}}(\theta)
  =
  \mathbb{E}_{x, \hat{y}}\!\left[\sum_{t=1}^{T} \ell_t^{\mathrm{sample}}\right].
  \label{eq:opd_sampled_obj}
\end{equation}
A straightforward calculation gives
$\mathbb{E}_{\hat{y}_t \sim p_t}[\ell_t^{\mathrm{sample}}] = D_{\mathrm{KL}}(p_t \,\|\, q_t)$,
so $\ell_t^{\mathrm{sample}}$ is an unbiased \emph{single-sample} estimator of the token-level reverse KL.
Memory cost is $O(BT)$ for batch size $B$ and response length $T$; teacher queries amount to one log-probability per token.

\paragraph{(b) Full-vocabulary OPD ($S_t = \mathcal{V}$).}
At the opposite extreme, one materializes the entire teacher distribution and computes the exact token-level KL at every position:
\begin{equation}
  \mathcal{L}_{\mathrm{OPD}}^{\mathrm{full}}(\theta)
  =
  \mathbb{E}_{x, \hat{y}}\!\left[
    \sum_{t=1}^{T}
    \sum_{v \in \mathcal{V}} p_t(v)\,\log\!\tfrac{p_t(v)}{q_t(v)}
  \right].
  \label{eq:opd_full_obj}
\end{equation}
The gradient signal is the densest possible, but the price is steep: storing teacher logits demands $O(BT|\mathcal{V}|)$ memory, which becomes infeasible for long-context training at modern vocabulary sizes. 

\paragraph{(c) Top-$k$ OPD ($S_t = \mathrm{TopK}(p_t, k)$).}
Top-$k$ OPD interpolates between the two extremes by restricting attention to the $k$ tokens that the \emph{student} ranks highest at position $t$, then computing a KL between renormalized distributions on this support:
\begin{equation}
  \mathcal{L}_{\mathrm{OPD}}^{\text{top-}k}(\theta)
  =
  \mathbb{E}_{x, \hat{y}}\!\left[
    \sum_{t=1}^{T}
    D_{\mathrm{KL}}\!\left(
      \bar{p}_t^{(S_t)} \,\big\|\, \bar{q}_t^{(S_t)}
    \right)
  \right],
  \quad
  \bar{p}_t^{(S_t)}(v) = \frac{p_t(v)\,\mathbf{1}[v \in S_t]}{\sum_{u \in S_t} p_t(u)},
  \label{eq:opd_topk_obj}
\end{equation}
and analogously for $\bar{q}_t^{(S_t)}$.
The hyperparameter $k$ trades supervision density against teacher-query cost, with $k=1$ recovering (a deterministic version of) sampled-token OPD and $k = |\mathcal{V}|$ recovering full-vocabulary OPD.
Typical implementations use $k \in [4, 64]$. We measure this cost empirically in \S\ref{sec:exp_efficiency}, where top-$16$ OPD's actor-update transient memory is more than $2\times$ larger than OPRD's at the same setting.

\begin{figure}[!t]       
    \centering              
    \includegraphics[width=1.0\textwidth]{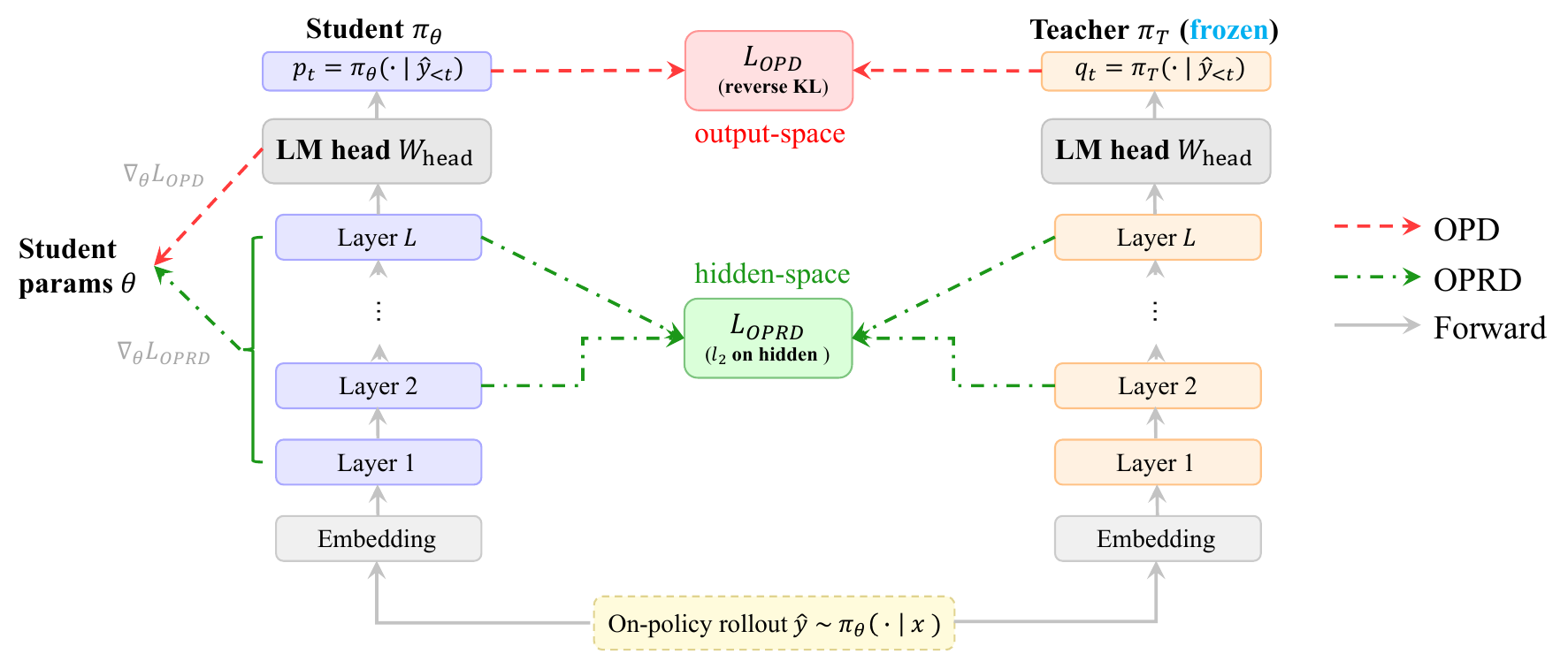}  
    \caption{%
        \textbf{Architecture of OPRD vs.\ output-space OPD.}
        Both methods share the same on-policy rollout $\hat{y} \sim \pi_\theta(\cdot\!\mid\!x)$, which is fed to the student (blue, trainable) and the teacher (orange, frozen).
        \textcolor{red!75}{\textbf{OPD}} extracts supervision \emph{after} the LM head, comparing output distributions $p_t$ and $q_t$ via reverse KL on a token subset.
        \textcolor{green!55!black}{\textbf{OPRD (ours)}} extracts supervision \emph{before} the LM head, comparing intermediate hidden states $h^{(l)}$ at selected layers via masked $\ell_2$ loss.
        OPRD is used on its own by default, but can optionally be combined with $\mathcal{L}_{\mathrm{OPD}}$ as $\mathcal{L}_{\mathrm{OPD}} + \mu\,\mathcal{L}_{\mathrm{OPRD}}$.
        OPRD taps the teacher \emph{before} the LM-head projection, exposing per-layer structural information that any output-space objective discards.
    }
    \label{fig:oprd_architecture}
\end{figure}

\subsection{A Shared Structural Limitation}
\label{sec:opd_limitations}

The three variants above span the full design space studied in prior work, yet they share a defining structural property:
\emph{the supervision signal is always a function of the next-token distributions $p_t$ and $q_t$ that the LM head produces.}
Equivalently, the only way teacher knowledge reaches the student is through the projection
$W_{\mathrm{head}}: \mathbb{R}^d \to \mathbb{R}^{|\mathcal{V}|}$
applied to the final hidden state.
The internal representations $\{h_{T,t}^{(l)}\}_{l < L}$, which are the very features that encode the teacher's intermediate reasoning, never enter the loss.
This output-only view has two immediate consequences that will become focal points of our analysis.
\emph{(i)~Statistical:} the most popular variant (sampled-token OPD) estimates each token-level KL from a single Monte Carlo draw, introducing variance that scales unfavorably with $|\mathcal{V}|$ and dominates the optimization signal once $p_t$ approaches $q_t$.
\emph{(ii)~Informational:} because $W_{\mathrm{head}}$ is low-rank ($d \ll |\mathcal{V}|$), the loss imposes only $d$ effective constraints per position regardless of $|S_t|$, leaving large directions of the hidden-state space unsupervised.
Our method, introduced next, attacks both issues by replacing $W_{\mathrm{head}}\!\circ\!h$ with $h$ itself as the alignment target.


\section{On-Policy Representation Distillation}
\label{sec:method}

We now present \textbf{On-Policy Representation Distillation (OPRD)}, a novel distillation framework that supervises the student in the hidden-state space on student-generated trajectories.
We define the method (\Cref{sec:oprd_def}) and state two theorems, one on gradient variance and one on the LM-head information bottleneck (\Cref{thm:variance},~\Cref{thm:bottleneck}), that explain why hidden-state supervision is a principled and effective complement to output-space distillation.

\begin{tcolorbox}[takeawaysbox]
\begin{itemize}[topsep=0pt, partopsep=0pt, leftmargin=12pt, itemsep=2pt]
    \item \textbf{Hidden-state supervision: richer and lower-variance.} OPRD supervises the student at intermediate layers, exposing structural information that the LM-head projection compresses away. The resulting MSE gradient is deterministic and carries (layers $\times$ positions $\times$ hidden-dim) scalars per sample, orders of magnitude denser than sampled-token OPD.
    \item \textbf{Efficient.} Operating entirely before the LM head, OPRD never materialises the $[B, T, |\mathcal{V}|]$ logits tensor, reducing wall-clock time by $1.44\times$ and cutting actor-update $\Delta$peak GPU memory by up to $54\%$.
    \item \textbf{Cross-architecture via a frozen low-rank bridge.} Teacher and student representations, despite living in different-dimensional spaces, share a low-rank structure that can be aligned to high cosine similarity through a low-rank projection. Once identified, this projector pair is frozen and serves as a fixed, architecture-agnostic interface through which supervision flows.
\end{itemize}
\end{tcolorbox}

\subsection{The OPRD Objective}
\label{sec:oprd_def}

The three OPD variants in \S\ref{sec:opd_variants} all operate in the \emph{output space} by matching next-token distributions $p_t$ and $q_t$.
OPRD instead supervises the student in the \emph{hidden-state space} on the same on-policy trajectories.
Intuitively, OPD asks the student to assign similar probabilities to tokens, whereas OPRD asks the student to produce similar internal representations at selected layers and positions.
We refer to this basic same-architecture formulation as \textbf{OPRD-Vanilla} (or simply OPRD when unambiguous); the cross-architecture extension, \textbf{OPRD-Bridge}, is presented in \S\ref{sec:oprd_bridge}.

Let $\mathcal{L}_{\mathrm{layer}} \subseteq \{1,\ldots,L\}$ be the set of distilled layers
(e.g.\ the last layer, all layers, or a parity subset such as even/odd layers), and let
$\mathcal{P}(\hat{y}) \subseteq \{1,\ldots,T\}$ be the set of supervised response positions
(e.g.\ all tokens, the first $k$ tokens, or the last $k$ tokens).
We use a position mask $m_t \in \{0,1\}$ to indicate whether $t \in \mathcal{P}(\hat{y})$; for short responses, positions beyond the valid length are masked out rather than padded into the loss.
OPRD minimizes a layer-averaged, position-masked mean-squared error between student and teacher representations:
\begin{equation}
  \mathcal{L}_{\mathrm{OPRD}}(\theta)
  =
  \mathbb{E}_{x \sim \mathcal{D}_x,\;
      \hat{y} \sim \pi_\theta(\cdot \mid x)}
  \left[
    \frac{1}{|\mathcal{L}_{\mathrm{layer}}|}
    \sum_{l \in \mathcal{L}_{\mathrm{layer}}}
    \frac{1}{\sum_{t=1}^{T}m_t}
    \sum_{t=1}^{T}
      m_t \,
      \frac{1}{d}
      \Bigl\|
        h_{\theta,t}^{(l)} - \mathrm{sg}\!\bigl(h_{T,t}^{(l)}\bigr)
      \Bigr\|_2^2
  \right],
  \label{eq:oprd_obj}
\end{equation}
where $\mathrm{sg}(\cdot)$ denotes the stop-gradient operator on the teacher representation and $d$ is the hidden dimension.
The $1/d$ factor normalizes the loss across architectures with different hidden sizes; the position averaging $1/\sum_t m_t$ makes the loss invariant to the choice of $|\mathcal{P}(\hat{y})|$.
The two design knobs ($\mathcal{L}_{\mathrm{layer}}$, $\mathcal{P}(\hat{y})$) offer flexibility along two axes: \emph{depth} of supervision (single-layer vs.\ multi-layer) and \emph{breadth} of supervision (single-position vs.\ all-position).
For long chain-of-thought (CoT) responses common in mathematical reasoning, we typically set $\mathcal{P}(\hat{y})$ to the last $k$ response tokens and $\mathcal{L}_{\mathrm{layer}}$ to all transformer layers, yielding dense layer-wise supervision on a compact suffix while keeping memory bounded.
We empirically study the effect of these design choices in \S\ref{sec:exp_mechanism}.
OPRD is a self-contained training objective and our main results (\S\ref{sec:experiments}) are reported in the OPRD-only setting.
For completeness, OPRD also composes additively with any output-space OPD variant as
\begin{equation}
  \mathcal{L}(\theta) = \mathcal{L}_{\mathrm{OPD}}(\theta) + \mu\,\mathcal{L}_{\mathrm{OPRD}}(\theta),
  \quad \mu \ge 0,
  \label{eq:oprd_opd_combined}
\end{equation}
at essentially zero infrastructure cost since both terms are computed on the same on-policy rollout and share a single teacher forward pass.

\subsection{Why OPRD Works}
\label{sec:oprd_theory}%
Two complementary properties, in one-to-one correspondence with the two limitations of \S\ref{sec:intro}, explain why hidden-state supervision is a principled complement to output-space OPD. We state both as informal theorems; precise statements and proofs are deferred to \Cref{sec:oprd_theorems_appendix}.

\begin{theorem}[Zero-variance gradient]
\label{thm:variance}%
\phantomsection\label{sec:oprd_variance}%
Let $g_{\mathrm{OPD}}$ and $g_{\mathrm{OPRD}}$ be the per-sample stochastic gradients of sampled-token OPD and OPRD~\eqref{eq:oprd_obj} on an on-policy rollout $\hat{y}\sim\pi_\theta(\cdot\!\mid\!x)$. Conditioned on $(x,\hat{y})$,
\begin{equation}
  \mathrm{Var}\!\left[g_{\mathrm{OPRD}}\,\big|\,x,\hat{y}\right] = 0,
  \qquad
  \mathrm{Var}\!\left[g_{\mathrm{OPD}}\,\big|\,x,\hat{y}\right]
  \;\propto\;
  \mathrm{Var}_{\hat{y}_t\sim p_t}\!\bigl[\log p_t(\hat{y}_t)-\log q_t(\hat{y}_t)\bigr],
  \label{eq:variance_gap}
\end{equation}
where the right-hand variance is over per-position token sampling.
\end{theorem}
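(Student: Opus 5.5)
We read ``conditioned on $(x,\hat{y})$'' as conditioning on the sequence of visited prefixes $\{(x,\hat{y}_{<t})\}_{t=1}^T$. The remaining randomness in sampled-token OPD is the per-position draw of the supervision token $\hat{y}_t\sim p_t$, made independently at each position given its prefix. The two claims then split cleanly: a determinism argument for OPRD, and a short variance computation for OPD.

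\textbf{OPRD.} Fix the prefixes. For every $l\in\mathcal{L}_{\mathrm{layer}}$ and $t$, the hidden states $h_{\theta,t}^{(l)}$ and $h_{T,t}^{(l)}$ are outputs of deterministic forward passes (no dropout in post-training) on the same input sequence. The mask $m_t$ and the normalisers $1/|\mathcal{L}_{\mathrm{layer}}|$, $1/\sum_t m_t$, $1/d$ are fixed functions of $\hat{y}$. Because of the stop-gradient, the per-sample loss inside the expectation in Eq.~\eqref{eq:oprd_obj} is a deterministic differentiable function of $\theta$ given $(x,\hat{y})$. We would write its gradient explicitly as
\[
g_{\mathrm{OPRD}}=\frac{2}{d\,|\mathcal{L}_{\mathrm{layer}}|\sum_t m_t}\sum_{l}\sum_t m_t\,\bigl(J^{(l)}_{\theta,t}\bigr)^{\!\top}\bigl(h^{(l)}_{\theta,t}-h^{(l)}_{T,t}\bigr),
\]
where $J^{(l)}_{\theta,t}=\partial h^{(l)}_{\theta,t}/\partial\theta$. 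This expression contains no sampled quantity, so it is a point mass given $(x,\hat{y})$ and its conditional variance is zero.

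\textbf{OPD.} We would use the form in which sampled-token OPD is actually implemented: the per-token log-ratio $\ell_t=\log p_t(\hat{y}_t)-\log q_t(\hat{y}_t)$ is detached and used as a negative advantage. This gives
\[
g_{\mathrm{OPD}}=\sum_t \ell_t\,s_t,\qquad s_t=\nabla_\theta\log p_t(\hat{y}_t).
\]
Two preliminary facts follow from $\mathbb{E}_{p_t}[s_t]=0$. First, $\mathbb{E}[\ell_t s_t]=\nabla_\theta D_{\mathrm{KL}}(p_t\|q_t)$, so the estimator is unbiased. Second, the tokens are drawn independently across positions given the prefixes, so the conditional covariance is block-additive: $\mathrm{Var}[g_{\mathrm{OPD}}\mid x,\hat{y}]=\sum_t \mathrm{Var}_{p_t}[\ell_t s_t]$.

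It remains to relate $\mathrm{Var}_{p_t}[\ell_t s_t]$ to $\mathrm{Var}_{p_t}[\ell_t]$. We would decompose $\ell_t=D_{\mathrm{KL}}(p_t\|q_t)+\tilde{\ell}_t$, where $\tilde{\ell}_t$ is the centred fluctuation, and expand the second moment. Under the standard simplification that the score direction is (approximately) decoupled from the log-ratio fluctuation, this yields
\[
\operatorname{tr}\mathrm{Var}[\ell_t s_t]\approx \mathbb{E}\|s_t\|^2\,\mathrm{Var}_{p_t}[\ell_t]+D_{\mathrm{KL}}(p_t\|q_t)^2\,\mathbb{E}\|s_t\|^2 .
\]
Near convergence the second term is $O(\delta^4)$, while the first is $O(\delta^2)\cdot\mathbb{E}\|s_t\|^2$. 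The first term therefore dominates, giving the stated proportionality with constant $\mathbb{E}\|s_t\|^2$.

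\textbf{Main obstacle.} The hard step is making ``$\propto$'' rigorous, because $\ell_t$ and $s_t$ are in general correlated. We would first prove a two-sided sandwich under the bound $c_{\min}\le\|s_t\|^2\le c_{\max}$, which holds if the score norms are bounded away from zero and infinity over the vocabulary:
\[
c_{\min}\,\mathrm{Var}[\ell_t]\;\lesssim\;\operatorname{tr}\mathrm{Var}[\ell_t s_t]\;\le\;c_{\max}\,\mathbb{E}[\ell_t^2].
\]
The gap between $\mathbb{E}[\ell_t^2]$ and $\mathrm{Var}[\ell_t]$ is the squared KL, which is higher order in $\delta$. This gives the precise appendix statement.

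If the lower bound fails because of correlations between $\ell_t$ and $s_t$, we would fall back to projecting onto a fixed unit direction $u$ and invoking a conditional-independence assumption on $\langle u,s_t\rangle$ given $\ell_t$. That suffices for the informal claim in \Cref{thm:variance}.
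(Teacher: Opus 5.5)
Your OPRD argument matches the paper's: the stop-gradient makes the per-sample loss a deterministic function of $\theta$ given the prefixes, and the gradient has the Jacobian closed form. Your reading of the conditioning (prefixes fixed, supervision token redrawn from $p_t$, additive across positions) also matches the paper's per-position formalization. Your OPD setup is the paper's Lemma~\ref{lem:score_decomp} and identity~\eqref{eq:opd_var_exact}: the form $\sum_t \ell_t s_t$, unbiasedness from $\mathbb{E}_{p_t}[s_t]=0$, and $\mathrm{tr}\,\mathrm{Var}[\ell_t s_t]=\mathbb{E}[\ell_t^2\|s_t\|^2]-\|\bar g_t\|^2$. The upper half of your sandwich, $\mathrm{tr}\,\mathrm{Var}[\ell_t s_t]\le c_{\max}\mathbb{E}[\ell_t^2]$, is valid.

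\textbf{The gap is the lower half.} That is where the content of ``$\propto$'' lies; the paper's formal version is exactly an $\Omega(\mathrm{Var}_p(u_t))$ lower bound. A pointwise bound $\|s_t(v)\|^2\ge c_{\min}$ only gives
\[
\mathrm{tr}\,\mathrm{Var}[\ell_t s_t]\;\ge\; c_{\min}\mathbb{E}[\ell_t^2]-\|\bar g_t\|^2,
\qquad \bar g_t=\nabla_\theta D_{\mathrm{KL}}(p_t\|q_t).
\]
The subtracted term is not higher order. In logit coordinates $\bar g_t\approx(\mathrm{diag}(p_t)-p_tp_t^\top)\varepsilon$, where $\varepsilon$ is the logit gap, so $\bar g_t$ is linear in the gap. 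Hence $\|\bar g_t\|^2$ has the same order as $\mathrm{Var}[\ell_t]$ and can cancel the leading term. Your decoupling heuristic silently drops it: exact decoupling forces $\bar g_t=\mathbb{E}[\ell_t]\mathbb{E}[s_t]=0$, which assumes away the nonzero gradient you just derived.

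Here is a concrete counterexample. Take two tokens and a scalar logit parameter at $p_t=(\tfrac12,\tfrac12)$, with $q_t=(\tfrac12-\eta,\tfrac12+\eta)$.
\begin{itemize}
\item Then $s_t=\pm\tfrac12$, so $c_{\min}=c_{\max}=\tfrac14$.
\item The two values of $\ell_t s_t$ are $\mp\tfrac12\log(1\mp2\eta)$, which differ by only $2\eta^2+O(\eta^4)$.
\item Hence $\mathrm{Var}[\ell_t s_t]=\eta^4+O(\eta^6)$, while $\mathrm{Var}[\ell_t]=4\eta^2+O(\eta^4)$.
\end{itemize}
No constant-ratio lower bound survives. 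The paper's proof does not close this step either. It dismisses $\|\bar g\|^2$ as $O(D_{\mathrm{KL}}^2)$, which is the same omission. Its constant $\mathcal{F}_{\min}$ is also zero whenever $\dim\theta>|\mathcal{V}|-1$, since the single-position Fisher matrix has rank at most $|\mathcal{V}|-1$.

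\textbf{The missing idea} is to remove the mean term by projection rather than by an independence assumption. Let $\Pi_t$ be the orthogonal projector onto $\bar g_t^{\perp}$. Since $\Pi_t\bar g_t=0$,
\[
\mathrm{tr}\,\mathrm{Cov}[\ell_t s_t]\;\ge\;\mathrm{tr}\,\mathrm{Cov}[\Pi_t\ell_t s_t]\;=\;\mathbb{E}_{p_t}\bigl[\ell_t^2\,\|\Pi_t s_t\|^2\bigr].
\]
So the right nondegeneracy hypothesis is a pointwise bound $\|\Pi_t s_t(v)\|^2\ge c_\perp>0$ on the score component orthogonal to the mean gradient, not on $\|s_t(v)\|^2$. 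That bound yields $\mathrm{tr}\,\mathrm{Var}[\ell_t s_t]\ge c_\perp\mathbb{E}[\ell_t^2]\ge c_\perp\mathrm{Var}[\ell_t]$. Together with your upper bound, this gives an honest two-sided version of the claim under a stated assumption.

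Your fallback of projecting onto a fixed $u$ works exactly when $u\perp\bar g_t$. Independence of $\langle u,s_t\rangle$ from $\ell_t$ buys nothing beyond $\langle u,\bar g_t\rangle=0$, so state that condition directly. In the two-token example $\Pi_t s_t\equiv 0$, which is consistent with the vanishing variance there.
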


The OPD variance in \eqref{eq:variance_gap} does \emph{not} vanish as $p_t\!\to\! q_t$, and through the score-function term $\nabla_\theta\log p_t(\hat{y}_t)$ it dominates the policy gradient late in training; this is the mechanism behind the late-stage stagnation of pure OPD (Limitation~1 in \S\ref{sec:intro}). OPRD adds zero conditional variance and therefore provides a stable optimization signal even after the output distribution has nearly converged.

\paragraph{Comparison with top-$k$: no truncation bias.}
Top-$k$ OPD eliminates the per-position sampling variance of sampled-token OPD by deterministically selecting the $k$ highest-probability tokens.
However, it introduces a \emph{truncation bias}: the loss is computed only over the student's top-$k$ support $S_t$, so any teacher probability mass outside $S_t$ is invisible to the gradient.
When the student's top-$k$ set does not coincide with the teacher's (a common situation early in training and on difficult tokens), the student receives no signal to shift probability toward the teacher's preferred tokens that lie outside $S_t$.
This bias is systematic and cannot be reduced by training longer or increasing the batch size.
OPRD avoids this problem entirely: its MSE objective $\|h_{\theta,t}^{(l)} - h_{T,t}^{(l)}\|_2^2$ operates on continuous $d$-dimensional vectors with no token selection or truncation step, so every dimension of the teacher's hidden state contributes to the gradient at every supervised position.

\begin{theorem}[Hidden-state information beyond the LM head]
\label{thm:bottleneck}%
\phantomsection\label{sec:oprd_complexity}%
Let $W_{\mathrm{head}}\!\in\!\mathbb{R}^{|\mathcal{V}|\times d}$ have singular values $\sigma_1\!\ge\!\cdots\!\ge\!\sigma_d\!>\!0$ with right-singular vectors $v_1,\ldots,v_d$, and define the \emph{effective null space} $\mathcal{N}_W \triangleq \{\Delta h\in\mathbb{R}^d : W_{\mathrm{head}}\,\Delta h \in \mathrm{span}\{\mathbf{1}\}\}$, i.e., the set of hidden-state perturbations whose image under $W_{\mathrm{head}}$ is an additive softmax-invariant shift. For any last-layer student/teacher hidden states $h_\theta,h_T\!\in\!\mathbb{R}^d$ and any output-space OPD loss $\ell_{\mathrm{out}}$ (sampled-token, top-$k$, or full-vocabulary reverse KL),
\begin{equation}
  \ell_{\mathrm{out}}(h_\theta,h_T) = 0
  \quad\text{whenever}\quad
  h_\theta - h_T \in \mathcal{N}_W,
  \label{eq:bottleneck_null}
\end{equation}
and along $h_\theta-h_T = \alpha v_d$ with $\|v_d\|=1$,
\begin{equation}
  \|h_\theta-h_T\|^2 / \ell_{\mathrm{out}}(h_\theta,h_T)\;\gtrsim\;(\sigma_1/\sigma_d)^2,
  \label{eq:bottleneck_gap}
\end{equation}
where $\gtrsim$ hides a constant depending only on $\ell_{\mathrm{out}}$ and the logit range (made precise in \Cref{sec:oprd_theorems_bottleneck}).
\end{theorem}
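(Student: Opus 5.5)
The statement has two parts, and I would prove them separately. Both rest on one structural fact: every output-space loss $\ell_{\mathrm{out}}$ in \S\ref{sec:opd_variants} depends on $(h_\theta,h_T)$ only through the softmax distributions $p=\mathrm{softmax}(W_{\mathrm{head}}h_\theta)$ and $q=\mathrm{softmax}(W_{\mathrm{head}}h_T)$. For the top-$k$ variant it depends on them only through the renormalized restrictions $\bar p^{(S)},\bar q^{(S)}$.

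\textbf{Null-space claim \eqref{eq:bottleneck_null}.} If $h_\theta-h_T\in\mathcal{N}_W$, then $W_{\mathrm{head}}h_\theta=W_{\mathrm{head}}h_T+c\mathbf{1}$ for some scalar $c$. Softmax is invariant to additive constants, so $p=q$. Since the student and teacher share the same weights, the support $S=\mathrm{TopK}(p,k)$ is the same set for both, and renormalization preserves equality, giving $\bar p^{(S)}=\bar q^{(S)}$. It then remains to check the three losses:
\begin{itemize}
\item the full and top-$k$ reverse KL vanish because $D_{\mathrm{KL}}(p\|p)=0$;
\item the sampled-token loss $\log p(\hat y)-\log q(\hat y)$ vanishes pointwise for every $\hat y$, not just in expectation.
\end{itemize}

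\textbf{Ratio bound \eqref{eq:bottleneck_gap}.} I would treat this as a local comparison. Set $z=W_{\mathrm{head}}h_T$ and $\Delta=\alpha v_d$, so the logit perturbation is $W_{\mathrm{head}}\Delta=\alpha\sigma_d u_d$ with $\|u_d\|=1$.

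\emph{Step 1: upper bound on the KL.} For any pair of logit vectors, reverse KL is controlled by the logit gap:
\[
D_{\mathrm{KL}}(p\|q)\le \tfrac12\,\|\delta z-\bar c\mathbf{1}\|^2_{\infty}\le \tfrac12\|\delta z\|_2^2 .
\]
One route is Taylor expansion, where the Hessian of $z\mapsto D_{\mathrm{KL}}$ at $q$ is $\mathrm{Cov}_q\preceq \tfrac14 I$ in the centered sense. A cleaner global route uses the log-sum-exp smoothness bound $D_{\mathrm{KL}}(\mathrm{softmax}(z+\delta)\|\mathrm{softmax}(z))\le \tfrac12\|\delta\|_\infty^2$, which I believe holds with this constant; this is the step I would verify carefully. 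Applying it with $\delta z=\alpha\sigma_d u_d$ gives $\ell_{\mathrm{out}}\le C\alpha^2\sigma_d^2$. Since $\|\Delta\|^2=\alpha^2$, the ratio is at least $1/(C\sigma_d^2)$.

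\emph{Step 2: normalization.} To obtain the stated $(\sigma_1/\sigma_d)^2$ form rather than just $1/\sigma_d^2$, I would compare against the worst-case direction $v_1$. Adopting the paper's convention that the hidden constant depends on the logit range, I normalize by $\sigma_1^2$, i.e. measure the hidden-state error in units where the most sensitive direction has unit gain, $\|W_{\mathrm{head}}\|=\sigma_1$. Concretely, I would state the precise version as
\[
\frac{\|\Delta\|^2}{\ell_{\mathrm{out}}}\ \ge\ \frac{c_{\mathrm{range}}}{\sigma_d^2}=\frac{c_{\mathrm{range}}}{\sigma_1^2}\Bigl(\frac{\sigma_1}{\sigma_d}\Bigr)^2 .
\]
The corresponding matching lower bound along $v_1$ is $\ell_{\mathrm{out}}\ge c'\alpha^2\sigma_1^2$, which uses strong convexity of KL in logits restricted to the complement of $\mathbf{1}$, with a constant depending on $\min_v q(v)$, hence on the logit range. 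Together these show that the same hidden-state error is penalized $(\sigma_1/\sigma_d)^2$ times less along $v_d$.

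\emph{Other losses.} The top-$k$ case follows because renormalized restriction to $S$ is itself a softmax over the sub-logits $z_S$, which are again perturbed by at most $\alpha\sigma_d$. For sampled-token OPD I would interpret $\ell_{\mathrm{out}}$ as its expectation, which equals the reverse KL.

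\textbf{Main obstacle.} The step most likely to fail is making the constants honest, specifically the lower-bound side. It needs $q$ bounded away from zero, which brings in the logit range, and it needs $u_1$ to have nontrivial component orthogonal to $\mathbf{1}$ so that it is not itself in $\mathcal{N}_W$. I would handle the latter by assuming $\mathbf{1}\notin\mathrm{range}(W_{\mathrm{head}})$, or by replacing $\sigma_i$ with the singular values of the centered head $(I-\tfrac1{|\mathcal V|}\mathbf{1}\mathbf{1}^\top)W_{\mathrm{head}}$.
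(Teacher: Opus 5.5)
Your proposal is correct and follows essentially the paper's route. The null-space identity comes from the shift-invariance of softmax. The gap comes from a quadratic upper bound on $\ell_{\mathrm{out}}$ in the logit perturbation $\alpha\sigma_d u_d$, which gives $\|\Delta h\|^2/\ell_{\mathrm{out}}\ge c/\sigma_d^2$; the $(\sigma_1/\sigma_d)^2$ form is obtained, exactly as in the paper, by absorbing $1/\sigma_1^2$ into the constant and contrasting with a strong-convexity lower bound along $v_1$.

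Your global bound $D_{\mathrm{KL}}\le\tfrac12\|\delta z\|_\infty^2$ is true and is a cleaner substitute for the paper's compact-region Taylor lemma: it follows from the Bregman form of log-sum-exp plus Popoviciu's inequality. Reading the sampled-token loss through its expectation is the right move, because pointwise that log-ratio is first order in $\delta z$ and can be negative. One inaccuracy: $\mathrm{Cov}_q\preceq\tfrac14 I$ is false as an $\ell_2$ operator bound, since its top eigenvalue reaches $\tfrac12$ for $q$ uniform on two tokens. Your $\ell_\infty$ route never uses it.
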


The ratio in \eqref{eq:bottleneck_gap} scales as $(\sigma_1/\sigma_d)^2$, which is typically very large for production LLMs due to the ill-conditioned singular spectrum of $W_{\mathrm{head}}$. This means hidden-state deviations along low-singular-value directions can be orders of magnitude larger than along top directions while producing the same output-space loss; moreover output-space OPD has no mechanism to constrain intermediate hidden states $h^{(l)}$ for $l\!<\!L$. OPRD~\eqref{eq:oprd_obj} penalizes exactly the directions in $\mathcal{N}_W$ and supervises any subset of intermediate layers, exposing (layers $\times$ positions $\times$ hidden-dim) scalars of structural information per sample that the LM-head projection necessarily compresses away (Limitation~2 in \S\ref{sec:intro}).

\subsection{OPRD-Bridge: Cross-Architecture Extension}
\label{sec:oprd_bridge}

The OPRD-Vanilla objective defined in \S\ref{sec:oprd_def} implicitly assumes that the student and teacher share a \emph{well-aligned representation space}: the MSE in Eq.~\eqref{eq:oprd_obj} directly compares $h_{\theta,t}^{(l)}$ and $h_{T,t}^{(l)}$ in the same $\mathbb{R}^d$, which is meaningful only when corresponding dimensions carry comparable semantics.
Sharing the same model architecture is a sufficient condition but not the fundamental one.
In the same-architecture experiments of \S\ref{sec:oprd_def}, the two models share a common origin: JustRL-1.5B (teacher) was obtained by applying RL to R1-Distill-1.5B (student), which perturbs the representation space only mildly; the two models therefore retain a high degree of representational alignment, and the full $\mathbb{R}^d$ channel acts as a high-quality all-pass filter that transmits every dimension of the teacher's hidden state faithfully.
Conversely, two models with identical architecture but trained from different initialisations or with different tokenizers would have misaligned representation spaces, making direct MSE comparison meaningless despite matching dimensionality.

The true bottleneck, therefore, is \emph{representational alignment}, not architectural identity.
When this alignment is absent (as is typically the case for models that differ in depth ($L_S \neq L_T$), width ($d_S \neq d_T$), or training history), the hidden states are incommensurable, and we need a mechanism to bridge the gap.
This section introduces \textbf{OPRD-Bridge}, which identifies and exploits a shared low-rank structure between heterogeneous models to construct such a bridge.

The key observation is that \emph{although the full representations are incomparable, their principal variation directions are not.}
We first demonstrate this empirically (\S\ref{sec:shared_structure}), then show how to exploit it (\S\ref{sec:bridge_construction},\,\ref{sec:bridge_distillation}), and finally explain why the resulting design takes the specific form it does (\S\ref{sec:design_philosophy}).

\begin{figure}[!t]       
    \centering              
    \includegraphics[width=0.9\textwidth]{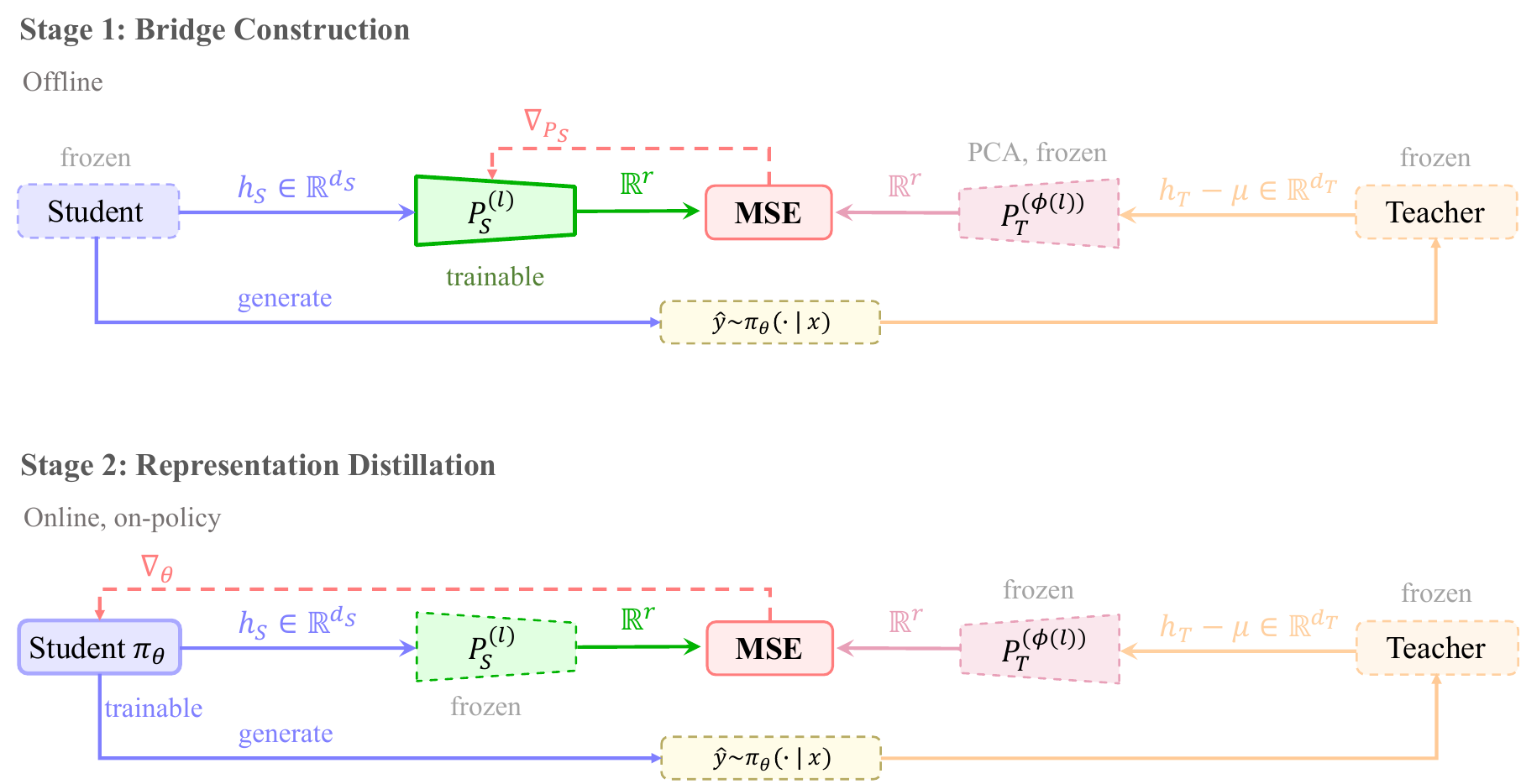}  
    \caption{%
        \textbf{Overview of OPRD-Bridge.}
        \emph{Top (Stage 1):} both models are frozen; $P_T$ is obtained via PCA of teacher hidden states and $P_S$ is trained to align with $P_T$ in $\mathbb{R}^r$.
        After convergence, both projectors are frozen.
        \emph{Bottom (Stage 2):} the bridge $(P_T, P_S)$ serves as a fixed interface; only the student backbone $\theta$ is updated via on-policy rollouts through the frozen bridge.
    }
    \label{fig:oprd_bridge_overview}
\end{figure}

\subsubsection{The Shared Low-Rank Structure}
\label{sec:shared_structure}

Before presenting the method, we establish the empirical fact that motivates it.
We take a Qwen3-4B teacher (36 layers, $d_T\!=\!2560$) and a Qwen3-1.7B student (28 layers, $d_S\!=\!2048$) as an illustrative example, sample 2K prompts from DAPO-Math-17K, generate responses with the student, and collect both models' hidden states on these responses.
We then perform PCA on the centered teacher hidden states at each layer, extracting the top-$r$ principal directions as a basis $P_T \in \mathbb{R}^{r \times d_T}$.
Next, we train a simple linear map $P_S \in \mathbb{R}^{r \times d_S}$ to minimize $\|P_S h_S - P_T(h_T - \mu_T)\|_2^2$ with both models frozen.

The result is striking.
At $r\!=\!8$, the cosine similarity between $P_S(h_S)$ and $P_T(h_T - \mu_T)$ reaches \textbf{95\%} averaged across all layer pairs.
Eight dimensions suffice to capture the shared structure between two models that, in their full representation spaces, show zero alignment.

Equally revealing: this similarity \emph{decreases} as $r$ increases.
At $r\!=\!32$ it drops to $94.1\%$, at $r\!=\!256$ to $87.9\%$, and continues declining to $77.0\%$ at full rank $r\!=\!2048$.
This is not a failure of the method; it is a signal about the data.
The shared structure between teacher and student is concentrated in very few principal directions.
Beyond those directions, the two models diverge, encoding architecture-specific information that is not shared and should not be distilled.

This observation has a direct methodological implication: \emph{the bridge should be low-rank by design, not as a computational shortcut, but because the shared structure itself is low-rank.}
Higher rank does not mean more information transfer; it means more noise.

\subsubsection{Stage 1: Bridge Construction}
\label{sec:bridge_construction}

The bridge is built with \emph{both model weights frozen}; only the projectors are involved.
We write $h_S$ for the student hidden states in this stage (emphasizing that the student backbone is fixed), reserving $h_\theta$ for Stage~2 where the backbone parameters $\theta$ are updated.

We first establish a layer correspondence.
When $L_S \neq L_T$, we pair each student layer $l_S$ with a teacher layer via proportional spacing:
\begin{equation}
  \phi(l_S) = \mathrm{round}\!\left(\frac{l_S - 1}{L_S - 1} \cdot (L_T - 1)\right) + 1,
  \label{eq:layer_mapping}
\end{equation}
so that the first layers align, the last layers align, and intermediate layers are evenly distributed.

For each teacher layer $l_T$, we compute the PCA basis from the centered hidden states:
\begin{equation}
  P_T^{(l_T)} = \mathrm{TopPrincipalDirections}\!\left(
    h_{T}^{(l_T)} - \mu_T^{(l_T)},\; r
  \right) \in \mathbb{R}^{r \times d_T}.
  \label{eq:teacher_pca}
\end{equation}
This is computed once and permanently frozen.
It captures the directions along which the teacher's representations vary most.
In practice, we obtain $P_T^{(l_T)}$ via PCA on the sample covariance rather than a thin SVD of the hidden-state matrix.
Stacking the teacher's response-token hidden states into $H \in \mathbb{R}^{n \times d_T}$, the two are equivalent up to centering: the top-$r$ right singular vectors of the centered matrix $H - \mathbf{1}\mu_T^\top$ coincide with the top-$r$ eigenvectors of the sample covariance $\Sigma = \frac{1}{n-1}(H - \mathbf{1}\mu_T^\top)^\top(H - \mathbf{1}\mu_T^\top) \in \mathbb{R}^{d_T \times d_T}$.
We therefore form $\Sigma$ and take its eigendecomposition, which is far cheaper than an SVD of $H$ in the typical regime $n \gg d_T$ (the cost is governed by the $d_T \times d_T$ eigensolve rather than the large row count $n$), and we subsample at most $16{,}384$ token rows to estimate $\Sigma$.
A single eigendecomposition per layer yields the full ordered basis; the rank-$r$ projector for any $r$ is then obtained by slicing the top-$r$ eigenvectors, without re-running the decomposition.

For each student layer $l_S$, we then train a linear projector $P_S^{(l_S)} \in \mathbb{R}^{r \times d_S}$ to minimize:
\begin{equation}
  \mathcal{L}_{\mathrm{bridge}}
  = \sum_{l_S} \sum_{t}
    \left\|
      P_S^{(l_S)} h_{S,t}^{(l_S)} - P_T^{(\phi(l_S))}\!\bigl(h_{T,t}^{(\phi(l_S))} - \mu_T^{(\phi(l_S))}\bigr)
    \right\|_2^2.
  \label{eq:bridge_train}
\end{equation}
After 20 epochs of training (with both models frozen), $P_S$ converges and is also frozen.
The bridge is now complete: a pair of fixed linear maps $(P_T, P_S)$ that project both architectures into a shared $r$-dimensional subspace where their representations are directly comparable.

\subsubsection{Stage 2: Bridge-Based Distillation}
\label{sec:bridge_distillation}

With the bridge frozen, we perform on-policy representation distillation.
The student generates rollouts, both models perform forward passes, and the loss is computed \emph{through the frozen bridge}:
\begin{equation}
  \ell^{(l)}(\theta; \hat{y})
  =
  \frac{1}{\sum_{t} m_t}
  \sum_{t=1}^{T}
    m_t \,
    \left\|
      P_S^{(l)}(h_{\theta,t}^{(l)}) - P_T^{(\phi(l))}\!\bigl(h_{T,t}^{(\phi(l))} - \mu_T^{(\phi(l))}\bigr)
    \right\|_2^2,
  \label{eq:per_layer_loss}
\end{equation}
where $m_t$ is the same position mask as in Eq.~\eqref{eq:oprd_obj} (defaulting to the last $k\!=\!2000$ tokens).
The full objective averages over layers:
\begin{equation}
  \mathcal{L}_{\text{OPRD-Bridge}}(\theta)
  =
  \mathbb{E}_{x,\, \hat{y} \sim \pi_\theta(\cdot \mid x)}
  \left[
    \frac{1}{|\mathcal{L}_{\mathrm{layer}}|}
    \sum_{l \in \mathcal{L}_{\mathrm{layer}}}
    \ell^{(l)}(\theta; \hat{y})
  \right].
  \label{eq:xoprd_obj}
\end{equation}
Only the student backbone $\theta$ receives gradients.
In practice, we L2-normalize both projected vectors before computing MSE, making the loss scale-invariant.
As with OPRD-Vanilla, the bridge objective composes additively with any output-space OPD variant via Eq.~\eqref{eq:oprd_opd_combined}.

\Cref{alg:oprd_bridge} summarizes the complete pipeline.

\begin{algorithm}[t]
\caption{OPRD-Bridge}
\label{alg:oprd_bridge}
\begin{algorithmic}[1]
\STATE \textbf{Input:} Student $\pi_\theta$ ($L_S$ layers, dim $d_S$), Teacher $\pi_T$ ($L_T$ layers, dim $d_T$), rank $r$, prompt set $\mathcal{D}_x$
\STATE
\STATE \textit{// Stage 1: Bridge Construction (both models frozen)}
\STATE Sample 2K prompts; generate responses $\hat{y} \sim \pi_\theta(\cdot \mid x)$
\STATE Compute layer mapping $\phi$ via Eq.~\eqref{eq:layer_mapping}
\FOR{each layer pair $(l_S, \phi(l_S))$}
  \STATE $P_T^{(\phi(l_S))} \leftarrow \text{Top-}r\text{ principal directions of } (H_T - \mu_T)$ \hfill \textit{// PCA, frozen}
  \STATE Initialize $P_S^{(l_S)} \in \mathbb{R}^{r \times d_S}$
\ENDFOR
\FOR{epoch $= 1, \ldots, 20$}
  \STATE Update all $P_S$ to minimize Eq.~\eqref{eq:bridge_train}
\ENDFOR
\STATE Freeze $P_S$ \hfill \textit{// Bridge complete}
\STATE
\STATE \textit{// Stage 2: Bridge-Based Distillation (bridge + teacher frozen)}
\FOR{each training step}
  \STATE Sample $x$; generate $\hat{y} \sim \pi_\theta(\cdot \mid x)$
  \STATE Forward both models; compute $\mathcal{L}_{\text{OPRD-Bridge}}$ via Eq.~\eqref{eq:xoprd_obj}
  \STATE Update $\theta \leftarrow \theta - \eta \nabla_\theta \mathcal{L}$ \hfill \textit{// Only student backbone}
\ENDFOR
\end{algorithmic}
\end{algorithm}

\subsubsection{Design Philosophy}
\label{sec:design_philosophy}

Every design choice in OPRD-Bridge follows from a single principle: \emph{the bridge should behave like the vocabulary in logit distillation}, a fixed, pre-established interface that neither model modifies during training.
The frozen, low-rank bridge plays two complementary roles:

\begin{itemize}[topsep=2pt, partopsep=0pt, leftmargin=14pt, itemsep=3pt]
    \item \textbf{A stable interface that preserves pre-existing alignment.}
    The projectors $(P_T, P_S)$ encode an alignment that \emph{already exists} between the two models' representations before any distillation takes place; Stage~1 merely discovers it and freezes it into a fixed pair of linear maps.
    This mirrors logit distillation, where the shared vocabulary provides a stable coordinate system that neither model modifies during training: if this common reference frame changed, the optimization target would be non-stationary and training would destabilize.
    Freezing the bridge in Stage~2 ensures the on-policy training signal flows through this stable channel rather than destroying it. If $P_S$ were trained jointly with the backbone, the projection and the target would shift simultaneously, erasing the very alignment the bridge was built to exploit; empirically, joint training indeed performs worse.
    \item \textbf{A low-pass filter that suppresses high-frequency noise.}
    The shared structure between heterogeneous models is intrinsically low-rank: the top-$r$ principal components correspond to the ``low-frequency'' modes where teacher and student are already highly aligned, while the orthogonal complement carries ``high-frequency'' architecture-specific variation that acts as noise for distillation.
    By projecting through a rank-$r$ bridge, we retain the well-aligned signal and discard the misaligned noise, which is precisely why Stage~2 training remains stable even under on-policy distribution shift.
    It is instructive to contrast this with the two all-pass alternatives.
    Full-vocabulary logit distillation transmits every dimension of the output space through the shared vocabulary, a high-quality all-pass channel, but at $O(|\mathcal{V}|)$ cost and without any denoising.
    OPRD-Vanilla likewise operates as an all-pass channel over the full $\mathbb{R}^d$ representation space; this works precisely \emph{because} same-architecture models with shared initialisation are already well-aligned in every direction, so no dimension is noise.
    When this alignment breaks down (different depth, width, or training history), the all-pass property becomes a liability: high-frequency directions carry misaligned noise, and a low-pass bridge is needed to filter it out.
\end{itemize}

\Cref{tab:bridge_comparison} makes the bridge--vocabulary analogy explicit.

\begin{table}[!t]
\centering\small
\caption{The bridge analogy: comparing the shared interface in output-space vs.\ representation-space distillation.}
\label{tab:bridge_comparison}
\begin{tabular}{lcc}
\toprule
& \textbf{Logit Distillation (OPD)} & \textbf{OPRD-Bridge (ours)} \\
\midrule
Bridge space & Vocabulary simplex $\Delta^{|\mathcal{V}|-1}$ & Low-rank subspace $\mathbb{R}^r$ \\
Bridge map (teacher) & LM head $W_T \in \mathbb{R}^{|\mathcal{V}| \times d_T}$ & PCA basis $P_T \in \mathbb{R}^{r \times d_T}$ \\
Bridge map (student) & LM head $W_S \in \mathbb{R}^{|\mathcal{V}| \times d_S}$ & Projector $P_S \in \mathbb{R}^{r \times d_S}$ \\
Frozen during training? & Yes & Yes \\
Alignment space & Vocabulary (output) & Representation \\
Supervision depth & Final layer only & Every layer \\
Information preserved & Next-token distribution & Per-layer geometric structure \\
\bottomrule
\end{tabular}
\end{table}

Why PCA for the teacher side?
Because the Platonic Representation Hypothesis predicts that strong models converge in their principal variation directions~\citep{huh2024platonic}, and PCA extracts exactly these directions.
It is also deterministic, data-efficient (2K prompts suffice), and introduces no trainable parameters on the teacher side.

Why such a low rank?
Because the shared structure \emph{is} low-rank: our pre-experiment shows 95\% alignment at $r\!=\!8$, reflecting the concentration of task-relevant information in very few directions.
The rank is therefore not a computational budget; it is a statement about the geometry of the shared representation, and the low-pass property described above is what makes the bridge a robust supervision channel.
We formalize this intuition in the following theorem (stated informally; the precise version with proof is in \Cref{sec:bridge_theory_appendix}).

\begin{theorem}[Optimality and bias--variance trade-off of the low-rank bridge, informal]
\label{thm:bridge_optimality}
Assume teacher hidden states are drawn from a distribution with covariance $\Sigma_T$ (eigenvalues $\lambda_1 \!\geq\! \cdots \!\geq\! \lambda_{d_T}$), and that the cross-model alignment in direction $i$ is $\rho_i \in [0,1]$ (decreasing in $i$: high-eigenvalue directions are better aligned).
Then:
\begin{enumerate}[leftmargin=18pt, itemsep=2pt, topsep=2pt]
    \item \textbf{(Rate--distortion optimality.)} Among all rank-$r$ linear encodings of the teacher representation, PCA minimizes the expected distillation error under Gaussian assumptions. The bridge is the information-theoretically optimal channel at capacity $r$.
    \item \textbf{(Bias--variance decomposition.)} The expected distillation error decomposes as
    \begin{equation}
      \mathcal{E}(r)
      = \underbrace{\textstyle\sum_{i > r} \lambda_i\, \rho_i^2}_{\mathrm{Bias:\ discarded\ aligned\ signal}}
      \;+\; \underbrace{\textstyle\sum_{i=1}^{r} \lambda_i\,(1 - \rho_i^2)}_{\mathrm{Variance:\ transmitted\ misaligned\ noise}}.
      \label{eq:bias_variance}
    \end{equation}
    When $\rho_i$ decreases with $i$ (empirically verified), there exists an optimal rank $r^*$ that minimizes $\mathcal{E}(r)$: below $r^*$, bias dominates (useful signal is discarded); above $r^*$, variance dominates (misaligned noise is transmitted).
\end{enumerate}
\end{theorem}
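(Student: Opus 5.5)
Work in the teacher's eigenbasis. Center the teacher states, write $h_T-\mu_T=\sum_i z_i u_i$ with $u_i$ the eigenvectors of $\Sigma_T$, so the $z_i$ are uncorrelated with $\mathbb{E}z_i^2=\lambda_i$. The alignment assumption is formalized by a linear-Gaussian model. For each $i$, the best linear read-out of $z_i$ from the student representation, $\hat z_i = w_i^\top h_S$, has normalized correlation $\rho_i$. Equivalently, $z_i = \rho_i\,\tilde z_i + \sqrt{1-\rho_i^2}\,\epsilon_i$. Here $\tilde z_i$ is the student-recoverable component (unit-variance-normalized and scaled by $\sqrt{\lambda_i}$), and $\epsilon_i$ is teacher-specific noise, independent of the student and across directions. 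I would state this as the precise hypothesis of the appendix version. The ``expected distillation error'' is then defined as the teacher signal lost by the rank-$r$ channel plus the part of the transmitted target the student cannot reproduce.

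\textbf{Part 1 (rate--distortion).} Consider any rank-$r$ linear encoder $E\in\mathbb{R}^{r\times d_T}$ with a decoder. The reconstruction error $\mathbb{E}\|(h_T-\mu_T)-DE(h_T-\mu_T)\|^2$ is minimized by projecting onto the top-$r$ eigenspace, with value $\sum_{i>r}\lambda_i$. This is Eckart--Young--Mirsky, or Ky Fan's maximum principle applied to $\mathrm{tr}(\Pi\Sigma_T)$ over rank-$r$ projections $\Pi$. Under the Gaussian assumption, I would add the reverse water-filling characterization of the Gaussian rate--distortion function. Among codes restricted to $r$ real linear coordinates, water-filling allocates all capacity to the $r$ largest-variance components, which matches PCA. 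This justifies the phrase ``optimal channel at capacity $r$.'' Both facts are standard, and I would cite them rather than reprove them.

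\textbf{Part 2 (decomposition).} With $P_T=[u_1,\dots,u_r]^\top$ and the Stage-1-optimal $P_S$ (whose rows are $w_i^\top$, the least-squares solution of Eq.~\eqref{eq:bridge_train}), split the error by direction.

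For $i>r$, direction $i$ is discarded. Of its variance $\lambda_i$, the student-alignable part is $\lambda_i\rho_i^2$, and this is the signal lost by truncation: the bias term. The unaligned part $\lambda_i(1-\rho_i^2)$ is not counted, since it is noise either way.

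For $i\le r$, the transmitted target $z_i$ is matched by $\hat z_i$. The residual $\mathbb{E}(z_i-\hat z_i)^2=\lambda_i(1-\rho_i^2)$ is the standard least-squares residual variance of a read-out with correlation $\rho_i$. This misaligned noise is injected into the gradient: the variance term.

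Because the directions are uncorrelated, the cross terms vanish and summing gives Eq.~\eqref{eq:bias_variance}.

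For the optimal rank, compute the increment
\[
\mathcal{E}(r)-\mathcal{E}(r-1) = \lambda_r\bigl(1-\rho_r^2\bigr)-\lambda_r\rho_r^2 = \lambda_r\bigl(1-2\rho_r^2\bigr).
\]
The increment is negative exactly when $\rho_r^2>1/2$. If $\rho_i$ is decreasing, its sign changes at most once, from negative to positive. So $\mathcal{E}$ is unimodal in $r$, and
\[
r^*=\max\{r:\rho_r^2>1/2\}
\]
is a minimizer. Before $r^*$ each added direction reduces the error, because bias falls faster; after $r^*$ each one increases it, because transmitted noise dominates.

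\textbf{Main obstacle.} The hard step is the modeling choice that makes the error exactly separable. The decomposition needs two things. First, the alignment coefficients $\rho_i$ must be defined in the teacher's eigenbasis. Second, the student read-outs must not mix directions, so that cross terms vanish. I would secure this by assuming a joint Gaussian model in which the cross-covariance between $h_S$ and the $z_i$ diagonalizes in that basis, which amounts to a canonical-correlation structure aligned with the PCA directions. I would state this explicitly as an assumption of the formal theorem, since it is not generally true for real models. A secondary subtlety is reconciling Part 1, which is optimal for reconstruction, with Part 2, which can favor $r<d_T$. I would note that these are different objectives: PCA is optimal \emph{at fixed} $r$, and the choice of $r$ is then governed by $\mathcal{E}(r)$.
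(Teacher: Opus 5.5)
Your proposal follows the paper's proof essentially step for step: Eckart--Young--Mirsky plus the Gaussian (reverse) water-filling remark for Part~1; the per-direction accounting of $\lambda_i(1-\rho_i^2)$ for retained and $\lambda_i\rho_i^2$ for discarded directions; and the increment $\lambda_r(1-2\rho_r^2)$, whose single sign change under monotone $\rho_i$ gives the optimal rank. Your $r^*=\max\{r:\rho_r^2>1/2\}$ is a sharper form of the paper's $\rho_{r^*}^2\approx 1/2$. One difference: the CCA-type diagonalization you flag as the main obstacle is not needed. $\|P_S h_S - P_T(h_T-\mu_T)\|^2$ is a sum over the $r$ coordinates, and the least-squares problem for $P_S$ decouples row by row, so each retained direction attains residual $\lambda_i(1-\rho_i^2)$ directly from $\rho_i^2=R^2(z_{T,i}\mid h_S)$. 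The discarded-direction term is additive by definition, so no cross terms arise.
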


\noindent
The theorem explains three empirical observations simultaneously: (i)~why a very low rank suffices (the $\lambda_i$ decay fast and $\rho_i$ decay fast, so both bias and variance are small at $r^*\!\approx\!8$); (ii)~why increasing rank beyond the sweet spot \emph{hurts} alignment (variance term grows); and (iii)~why freezing the bridge is critical (a moving $P_S$ would invalidate the $\rho_i$ estimates on which the optimality rests).

Finally, we note that because the bridge performs alignment in representation space rather than in the output space, it is decoupled from the vocabulary: the teacher and student need not share a tokenizer.
This opens a path toward cross-tokenizer and potentially cross-modal distillation, where the vocabulary-based alignment channel is unavailable.

\section{Experiments}
\label{sec:experiments}

We evaluate both OPRD variants: OPRD-Vanilla in the same-architecture setting (\S\ref{sec:exp_vanilla}), and OPRD-Bridge in the cross-architecture setting (\S\ref{sec:exp_bridge}).

\subsection{OPRD-Vanilla: Same-Architecture Distillation}
\label{sec:exp_vanilla}

We evaluate OPRD-Vanilla on competition-level mathematical reasoning, against (i)~a frozen teacher and an unmodified student baseline, and (ii)~two strong on-policy distillation baselines that share the same on-policy rollout and teacher forward pass as OPRD but extract supervision from the LM-head output.
The experiments test the two predictions of \S\ref{sec:oprd_theory}: OPRD provides a lower-noise, structurally richer training signal than any output-space OPD variant, and should therefore close the student--teacher gap that pure OPD cannot.

\begin{figure}[!t]
  \centering
  \begin{subfigure}[t]{0.32\linewidth}
    \includegraphics[width=\linewidth]{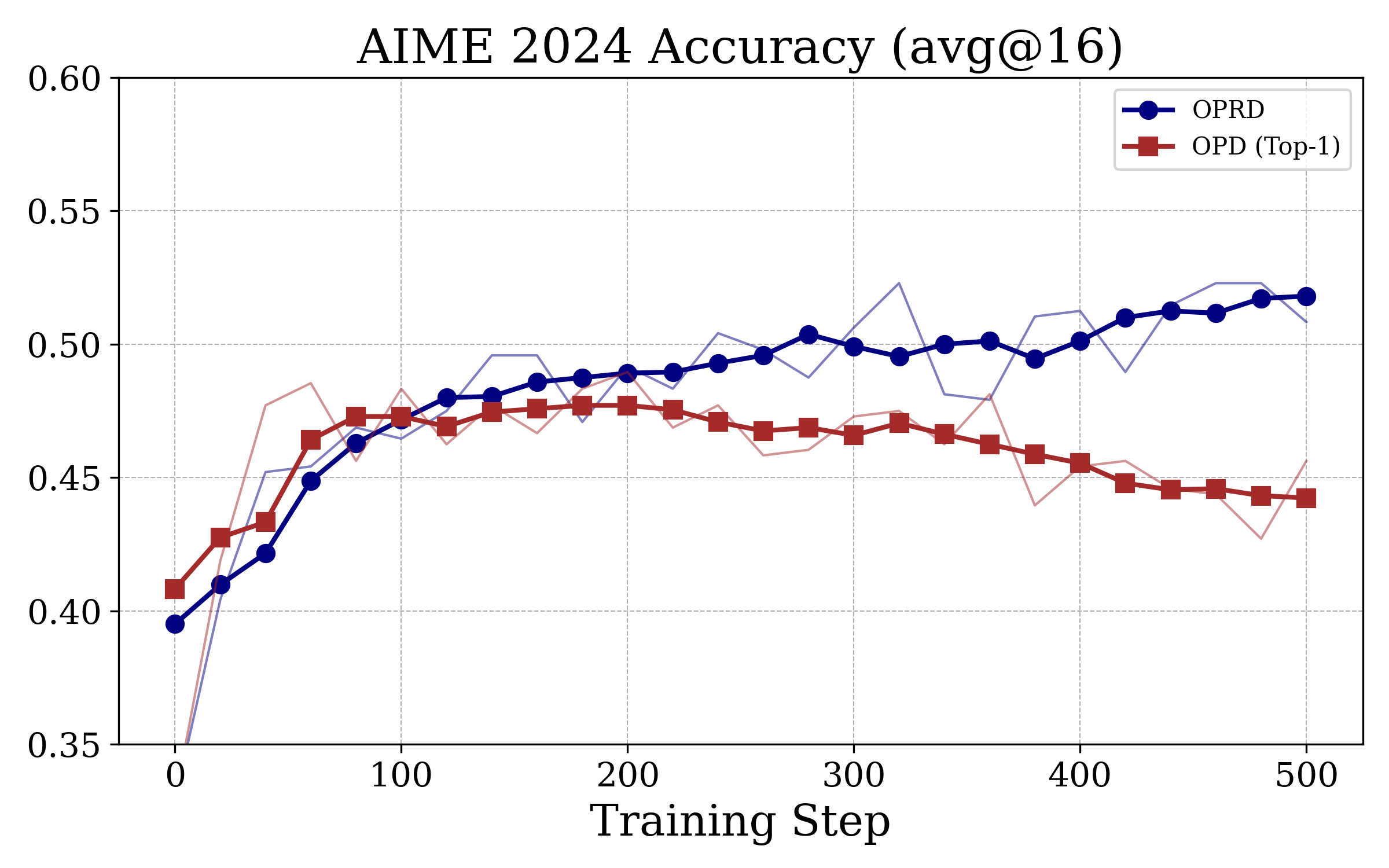}
    \caption{AIME24 \quad vs.\ OPD top-1}
    \label{fig:tc_aime24_top1}
  \end{subfigure}\hfill
  \begin{subfigure}[t]{0.32\linewidth}
    \includegraphics[width=\linewidth]{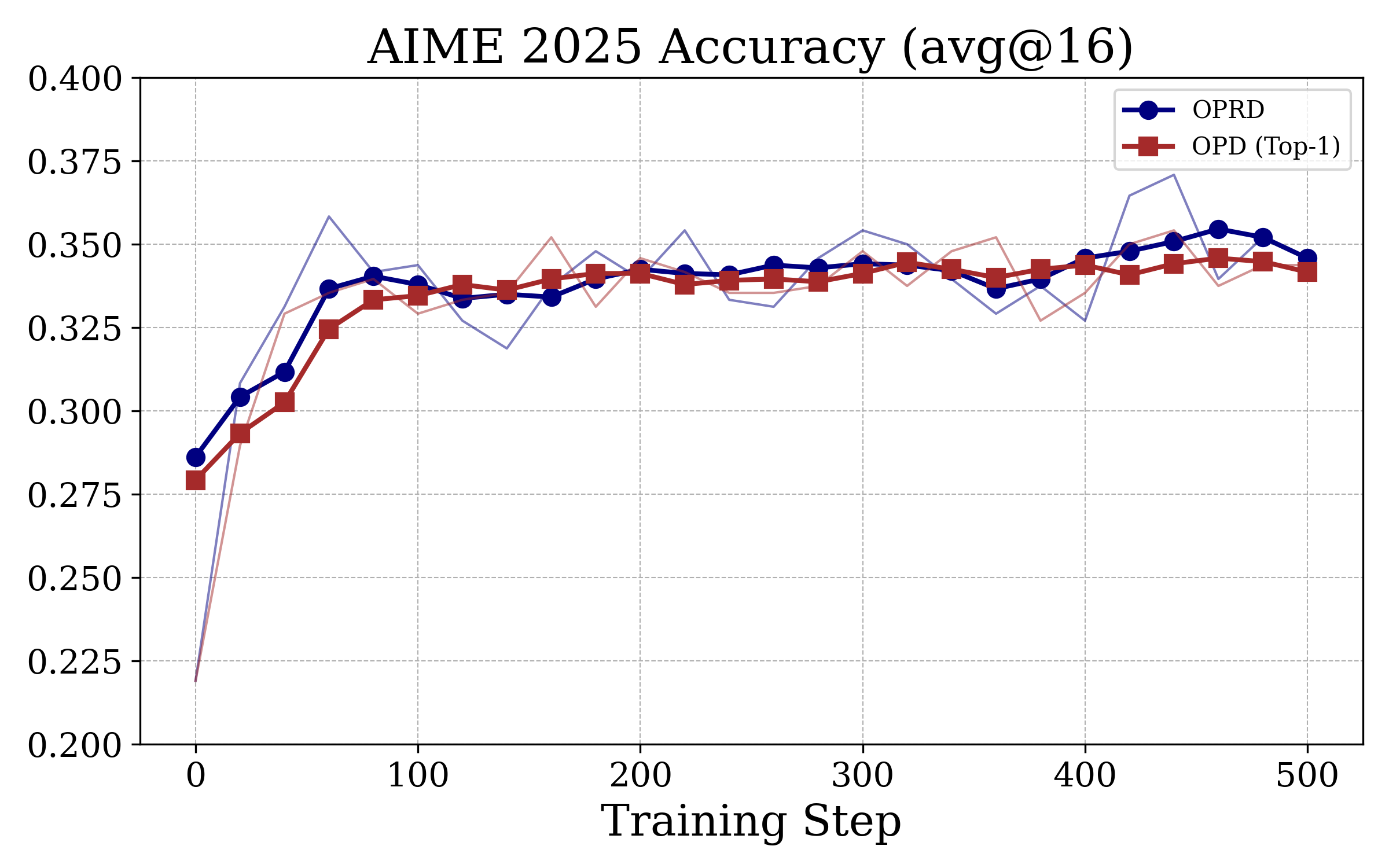}
    \caption{AIME25 \quad vs.\ OPD top-1}
    \label{fig:tc_aime25_top1}
  \end{subfigure}\hfill
  \begin{subfigure}[t]{0.32\linewidth}
    \includegraphics[width=\linewidth]{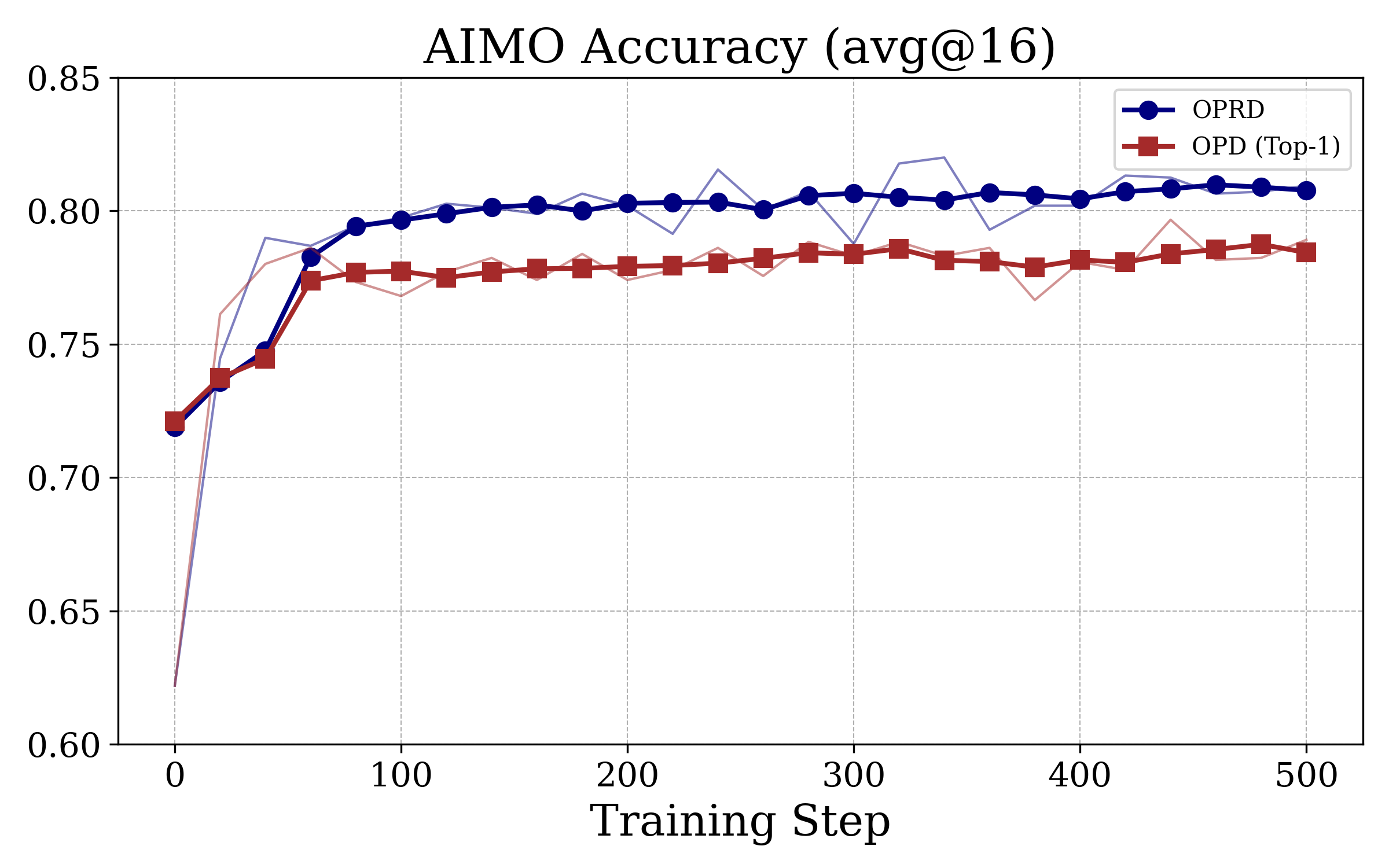}
    \caption{AIMO \quad vs.\ OPD top-1}
    \label{fig:tc_aimo_top1}
  \end{subfigure}\\[2pt]
  \begin{subfigure}[t]{0.32\linewidth}
    \includegraphics[width=\linewidth]{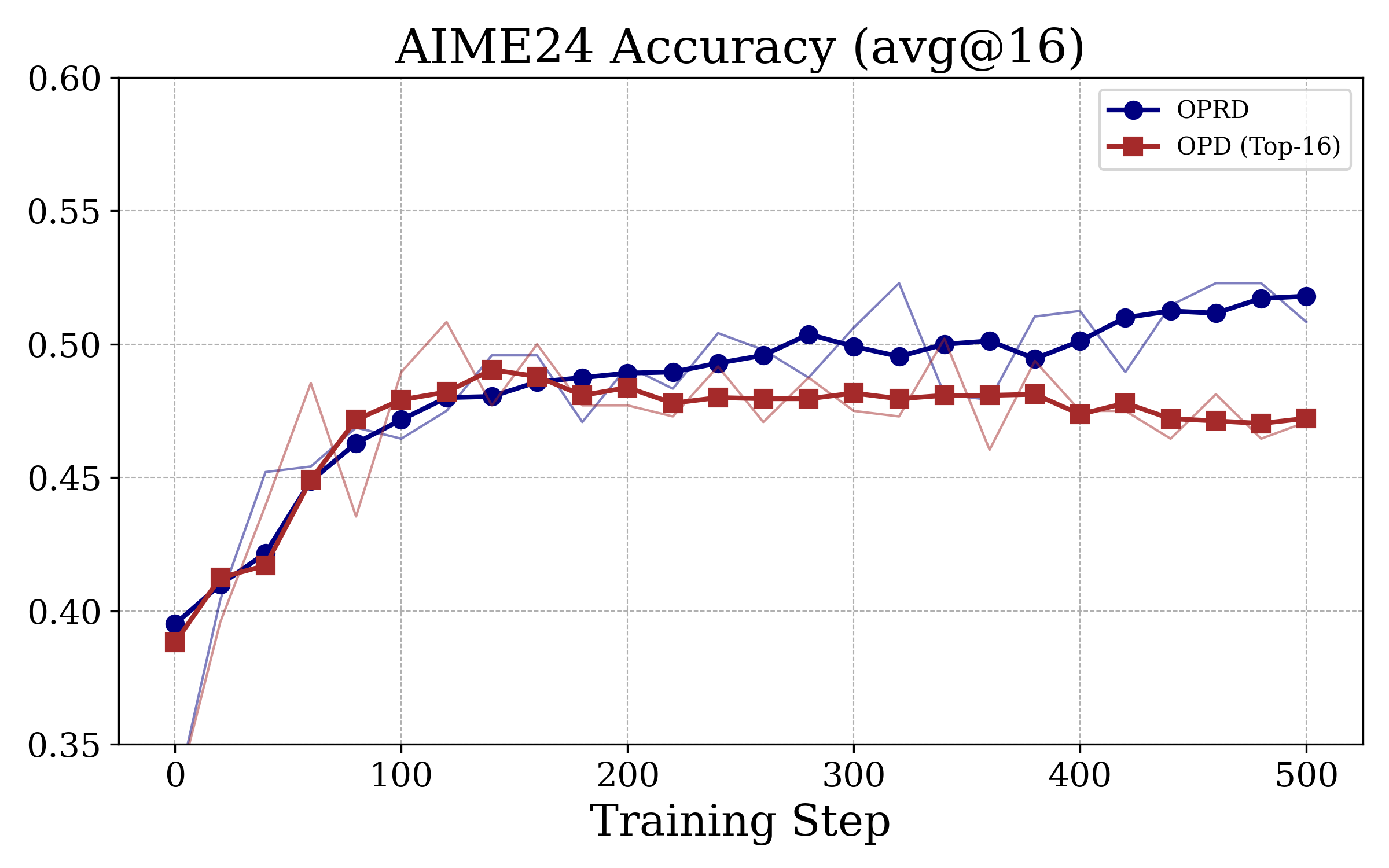}
    \caption{AIME24 \quad vs.\ OPD top-16}
    \label{fig:tc_aime24_top16}
  \end{subfigure}\hfill
  \begin{subfigure}[t]{0.32\linewidth}
    \includegraphics[width=\linewidth]{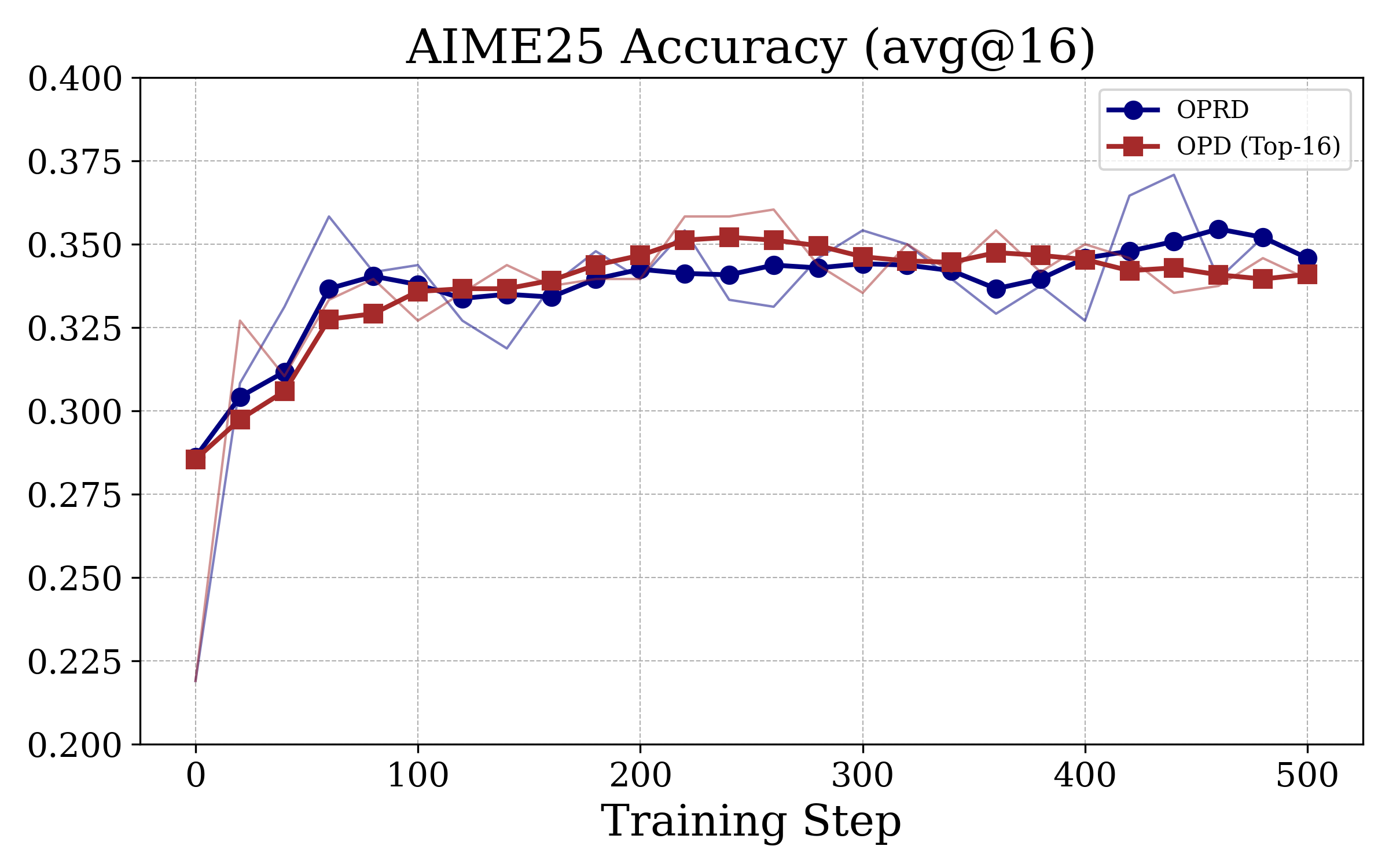}
    \caption{AIME25 \quad vs.\ OPD top-16}
    \label{fig:tc_aime25_top16}
  \end{subfigure}\hfill
  \begin{subfigure}[t]{0.32\linewidth}
    \includegraphics[width=\linewidth]{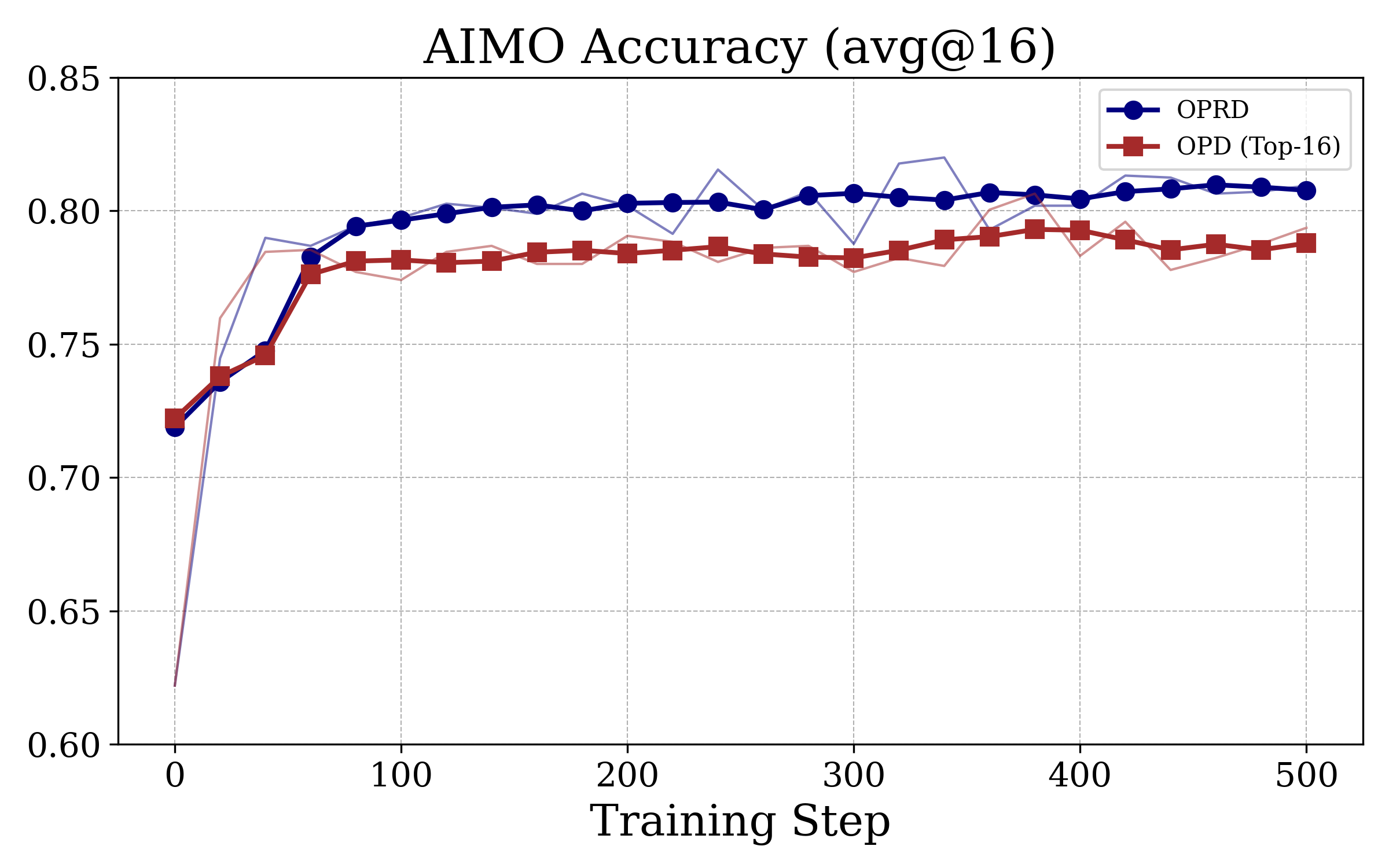}
    \caption{AIMO \quad vs.\ OPD top-16}
    \label{fig:tc_aimo_top16}
  \end{subfigure}
  \caption{%
    \textbf{Training dynamics of OPRD vs.\ OPD baselines} on AIME24 (left), AIME25 (middle), and AIMO (right).
    Top row: OPRD vs.\ OPD top-1 (sampled-token reverse KL); bottom row: OPRD vs.\ OPD top-16.
    Translucent line: raw Avg@16 at each evaluation step; solid line with markers: $5$-step centered rolling mean.
    Within each panel the two methods share the same student initialization, on-policy rollouts, teacher forward passes, and optimizer schedule, and differ only in where the supervision is extracted.
    On every benchmark OPRD continues to improve until it approaches the teacher's level, while both OPD variants plateau or oscillate, reflecting the late-stage stagnation of OPD and the deterministic-gradient advantage of OPRD predicted by \Cref{thm:variance}.
  }
  \label{fig:training_curves}
\end{figure}

\subsubsection{Experimental Setup}
\label{sec:exp_setup}

\paragraph{Models.}
Following ~\citep{li2026rethinking}, we use \texttt{JustRL-Deepseek-1.5B}~\citep{he2025justrl} (denoted \texttt{JustRL-1.5B}) as the (frozen) teacher and \texttt{DeepSeek-R1-Distill-Qwen-1.5B}~\citep{guo2025deepseek} (denoted \texttt{R1-distill-1.5B}) as the student.
Both models share the Qwen2.5-1.5B backbone ($L\!=\!28$ transformer layers, $d\!=\!1536$ hidden dimension, $|\mathcal{V}|\!\approx\!151$K vocabulary) and the same LM head $W_{\mathrm{head}}$, so OPRD-Vanilla's hidden-state targets are directly comparable across the two models ($d_S\!=\!d_T$).
The student starts from the public \texttt{R1-distill-1.5B} checkpoint, which already places it close to but well below the teacher in reasoning ability (\Cref{tab:main}).

\paragraph{Training data.}
On-policy prompts $x$ are drawn from DAPO-Math-17K~\citep{yu2026dapo}.
For each prompt the student samples $2$ responses $\hat{y}\sim\pi_\theta(\cdot\!\mid\!x)$ at temperature $1.0$ with a max generation length of $16{,}384$ tokens; we use a global batch of $8$ prompts per step.

\paragraph{Distillation objectives.}
We compare three on-policy distillation variants, all sharing the same rollouts $\hat{y}$ and the same single teacher forward pass per rollout:
\begin{itemize}[topsep=0pt, partopsep=0pt, leftmargin=12pt, itemsep=2pt]
  \item \textbf{OPD top-1} (sampled-token reverse KL): the per-position estimator $\ell_t = \log p_t(\hat{y}_t) - \log q_t(\hat{y}_t)$ evaluated only at the sampled token $\hat{y}_t$.
  \item \textbf{OPD top-16}: the per-position estimator $\sum_{v\in\mathcal{V}_{16}^t}\,p_t(v)\,[\log p_t(v) - \log q_t(v)]$ over the top-$16$ tokens of $p_t$, a strictly informative-superset of \textbf{OPD top-1}.
  \item \textbf{OPRD-Vanilla} (ours): the hidden-state objective \eqref{eq:oprd_obj} with $\mathcal{L}_{\mathrm{layer}}\!=\!\{1,\ldots,L\}$ (all $28$ layers) and $\mathcal{P}(\hat{y})$ set to the last $k\!=\!2000$ response tokens (i.e.\ the suffix in which the chain-of-thought converges to a final answer); reported in the OPRD-only setting ($\mu\!=\!0$ in \eqref{eq:oprd_opd_combined}). 
\end{itemize}

\paragraph{Optimization.}
All three methods are trained for $500$ optimizer steps with AdamW (peak learning rate $1\!\times\!10^{-5}$, linear warm-up over $3\%$ of total steps, cosine decay), bf16 mixed precision, and FSDP over $8\!\times\!$A100 (80G) GPUs at a micro-batch of $B\!=\!8$ and a maximum response length of $T\!=\!16{,}384$.

\paragraph{Evaluation.}
We report Avg@16 (average accuracy across $16$ independently sampled responses per prompt) at decoding temperature $0.7$ on three competition-level mathematical reasoning benchmarks: AIME~2024 (\textbf{AIME24}, $30$ problems), AIME~2025 (\textbf{AIME25}, $30$ problems), and \textbf{AIMO} (AI-MO/aimo-validation-amc, comprising AMC~2022 and AMC~2023, $83$ problems).
Final answers are extracted with the standard \texttt{boxed} parser and graded by exact-match against the official solution.

\subsubsection{Main Results}
\label{sec:exp_main}

\begin{table}[!t]
  \centering
  \small
  \caption{%
    \textbf{Main results on competition mathematical reasoning} (Avg@16, \%).
    \textbf{Bold} = best among the three distillation methods on each column;
    \underline{underline} = within evaluation noise of the teacher.
    All three distillation methods share the same on-policy rollouts, the same single teacher forward pass per rollout, and the same training budget; they differ only in where in the network the supervision is extracted.
  }
  \label{tab:main}
  \begin{tabular}{l ccc}
    \toprule
    \textbf{Method} & \textbf{AIME24} & \textbf{AIME25} & \textbf{AIMO} \\
    \midrule
    Teacher (\texttt{JustRL-1.5B})              & 50.8 & 35.6 & 79.5 \\
    Student (\texttt{R1-distill-1.5B})          & 32.9 & 21.9 & 62.2 \\
    \midrule
    OPD top-1 (sampled-token)                   & 42.3 & 33.5 & 77.0 \\
    OPD top-16                                  & 47.1 & 34.0 & 76.5 \\
    \textbf{OPRD-Vanilla} (ours)                        & \textbf{49.8} & \textbf{34.6} & \underline{\textbf{79.1}} \\
    \bottomrule
  \end{tabular}
\end{table}

\Cref{tab:main} reports Avg@16 for the teacher, the unmodified student, and the three on-policy distillation methods; \Cref{fig:training_curves} shows the corresponding training dynamics (discussed in detail in \S\ref{sec:exp_dynamics}).
Three observations follow.

\begin{figure}[!t]
  \centering
  \includegraphics[width=0.6\linewidth]{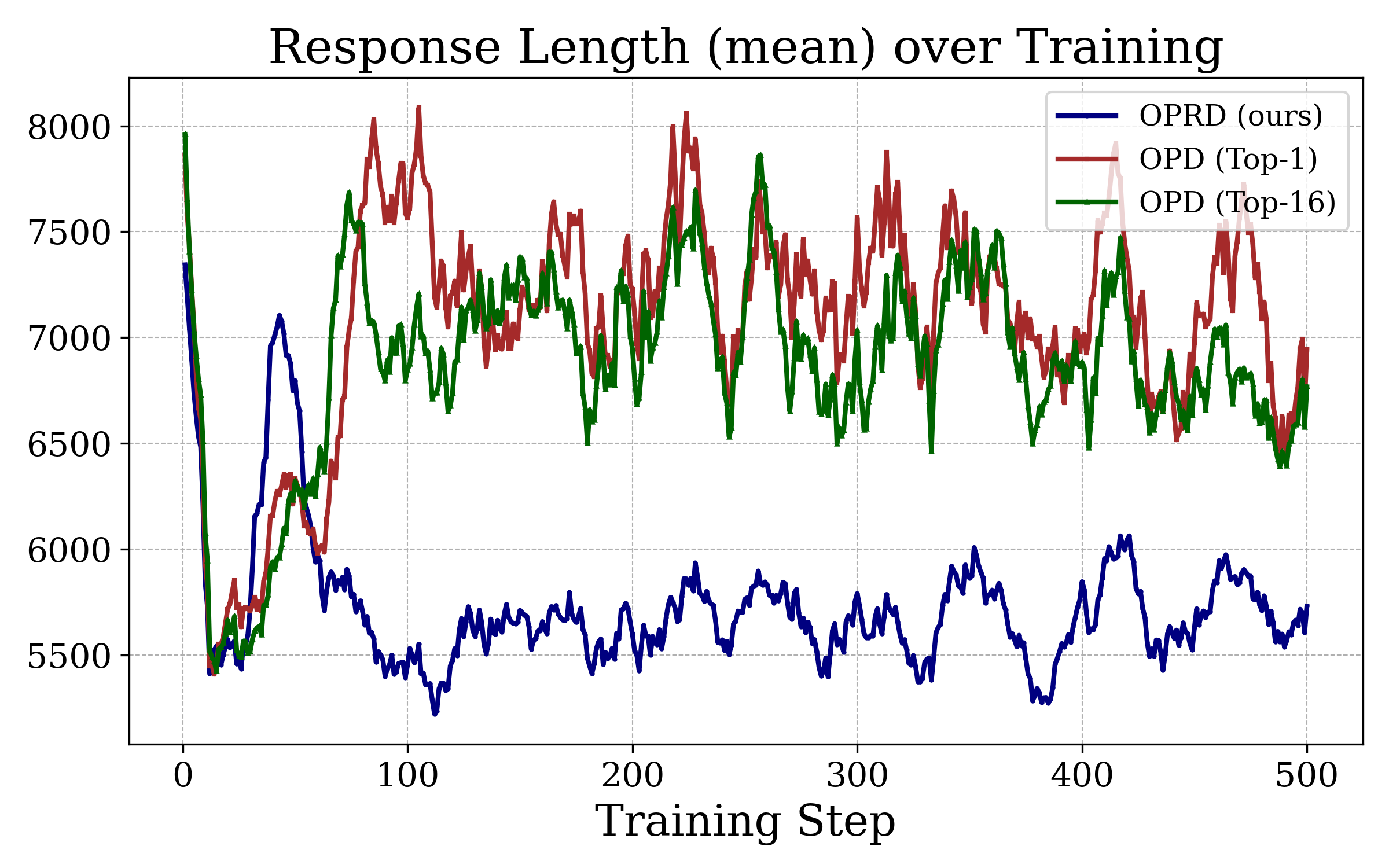}
  \caption{\textbf{OPRD produces shorter responses than OPD at higher accuracy.}
    Mean rollout length \texttt{response\_length/mean} along training for OPRD vs.\ OPD top-$1$ vs.\ OPD top-$16$ (smoothing window $= 15$).
    OPRD converges to ${\sim}5{,}700$ tokens per response, while both OPD variants plateau around ${\sim}7{,}000$ tokens, indicating that hidden-state supervision yields more concise and efficient reasoning chains.}
  \label{fig:resp_len}
\end{figure}

\textbf{(1) Both OPD variants improve over the student but plateau noticeably below the teacher.}
The student starts from a $17.9$-/$13.7$-/$17.3$-point gap to the teacher on AIME24/AIME25/AIMO.
OPD top-1 closes most of this gap on AIME25 (to within $2.1$ points) but leaves $8.5$ / $2.5$ points on AIME24/AIMO; enriching the supervision to OPD top-16 helps substantially on AIME24 ($+4.8$) and marginally on AIME25 ($+0.5$) yet \emph{loses} ground on AIMO ($-0.5$).
The absence of a clean ordering between top-1 and top-16 (more tokens in the loss is supposed to be strictly more informative) suggests that the output-space paradigm itself is the bottleneck: both variants are constrained by the LM-head information bottleneck (\Cref{thm:bottleneck}), and top-$k$'s truncation bias means that enriching the support does not guarantee monotonic improvement.

\textbf{(2) OPRD effectively closes the student--teacher gap.}
OPRD reaches $49.8$ on AIME24, $34.6$ on AIME25, and $79.1$ on AIMO, leaving only $1.0$ / $1.0$ / $0.4$ points to the teacher, all within the variance of $16$-sample Avg@16 evaluation, so the AIMO result is effectively a tie with the teacher (underlined in \Cref{tab:main}).
Relative to the better OPD baseline on each benchmark, OPRD gains $+2.7$ / $+0.6$ / $+2.1$ points; relative to the unmodified student it gains $+16.9$ / $+12.7$ / $+16.9$ points.
The advantage is most striking on AIMO, where OPRD recovers essentially all of the $17.3$-point student--teacher gap that no output-space variant fully bridges.
OPRD's gradient is conditionally deterministic (\Cref{thm:variance}), avoiding the late-stage variance collapse that limits OPD; it also exposes per-layer structural information that the LM-head projection compresses away (\Cref{thm:bottleneck}), supervising directions in $\mathcal{N}_W$ that any output-space objective treats as invisible.



\begin{figure}[!t]
  \centering
  \includegraphics[width=0.6\linewidth]{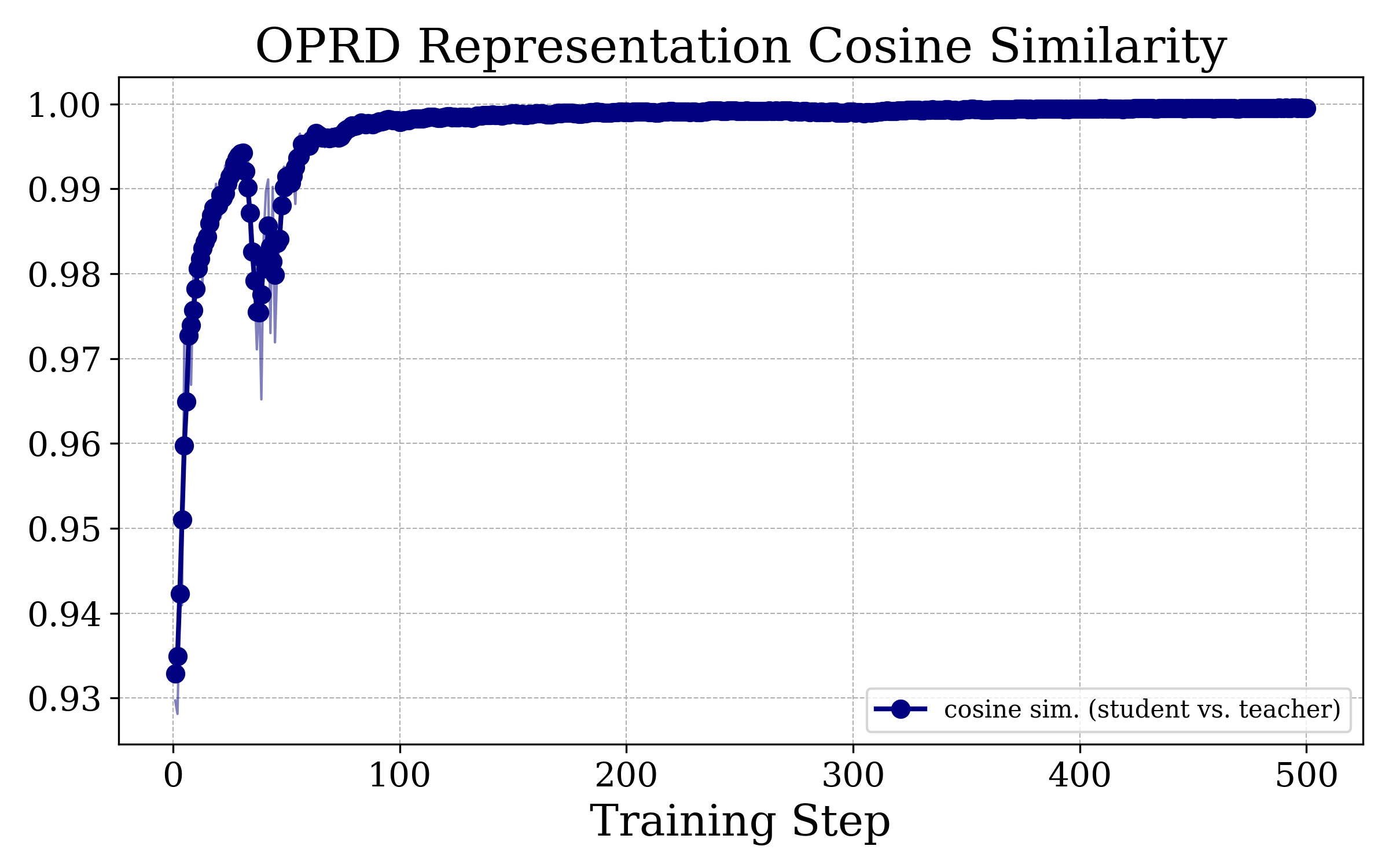}
  \caption{\textbf{OPRD monotonically increases the student--teacher representation cosine similarity it supervises (higher is better).}
    \texttt{rep/cosine\_similarity} on the OPRD-supervised positions along training (smoothing window $= 5$).
    The curve rises sharply early and drifts upward steadily thereafter, confirming that \eqref{eq:oprd_obj} is being optimised end-to-end.}
  \label{fig:rep_cos_train}
\end{figure}

\subsubsection{Training Dynamics}
\label{sec:exp_dynamics}

The end-of-training numbers in \Cref{tab:main} are only one slice of the story; we now examine \emph{how} each method gets there.
Three complementary views (per-step accuracy, response-length behaviour, and OPRD's own internal alignment metric) together paint a consistent picture of OPD stalling in the late-training regime predicted by \Cref{thm:variance}, while OPRD continues to make progress.


\paragraph{Accuracy curves: OPRD climbs monotonically, OPD plateaus.}
\Cref{fig:training_curves} compares OPRD step-by-step against OPD top-$1$ (top row) and OPD top-$16$ (bottom row) on all three benchmarks; raw curves are drawn at $\alpha\!=\!0.5$ and the solid curve with markers is the $5$-step centred rolling mean.
The two methods in each panel share the same initialisation and quickly enter qualitatively different regimes: both OPD variants lift accuracy in the first few dozen steps but then \emph{plateau} or \emph{oscillate without further improvement}, whereas OPRD continues to climb essentially monotonically until it reaches the teacher level.
Enriching the OPD supervision from top-$1$ to top-$16$ narrows the asymptotic gap to OPRD on AIME24 but does \emph{not} change the qualitative shape: OPD top-$16$ also plateaus, and on AIMO it does so $\sim\!2.6$ points below OPRD despite passing strictly more output-distribution information into the loss.
This is the SNR-collapse prediction of \Cref{thm:variance} in pictures: as $p_t \to q_t$, the OPD gradient's signal-to-noise ratio collapses and additional output-layer information cannot rescue the per-token sampling noise; only OPRD's deterministic, hidden-state-level signal continues to make progress.

\paragraph{Behavioural view: OPRD produces shorter, more efficient reasoning.}
The accuracy curves answer \emph{whether} a method keeps improving; a complementary question is \emph{how} the policy changes.
\Cref{fig:resp_len} reports the mean rollout length \texttt{response\_length/mean} for the same three runs.
OPRD converges to a mean response length of ${\sim}5{,}700$ tokens, substantially shorter than the ${\sim}7{,}000$ tokens produced by both OPD variants.
Combined with OPRD's higher accuracy (\Cref{tab:main}), this indicates that hidden-state supervision guides the student toward more concise reasoning chains: the student learns to reach the correct answer with fewer tokens rather than relying on longer, less directed exploration.
This also translates to a practical inference-time efficiency gain, since shorter responses require proportionally less compute at deployment.



\begin{wrapfigure}{r}{0.4\linewidth}
  \vspace{-1.2em}
  \centering
  \includegraphics[width=\linewidth]{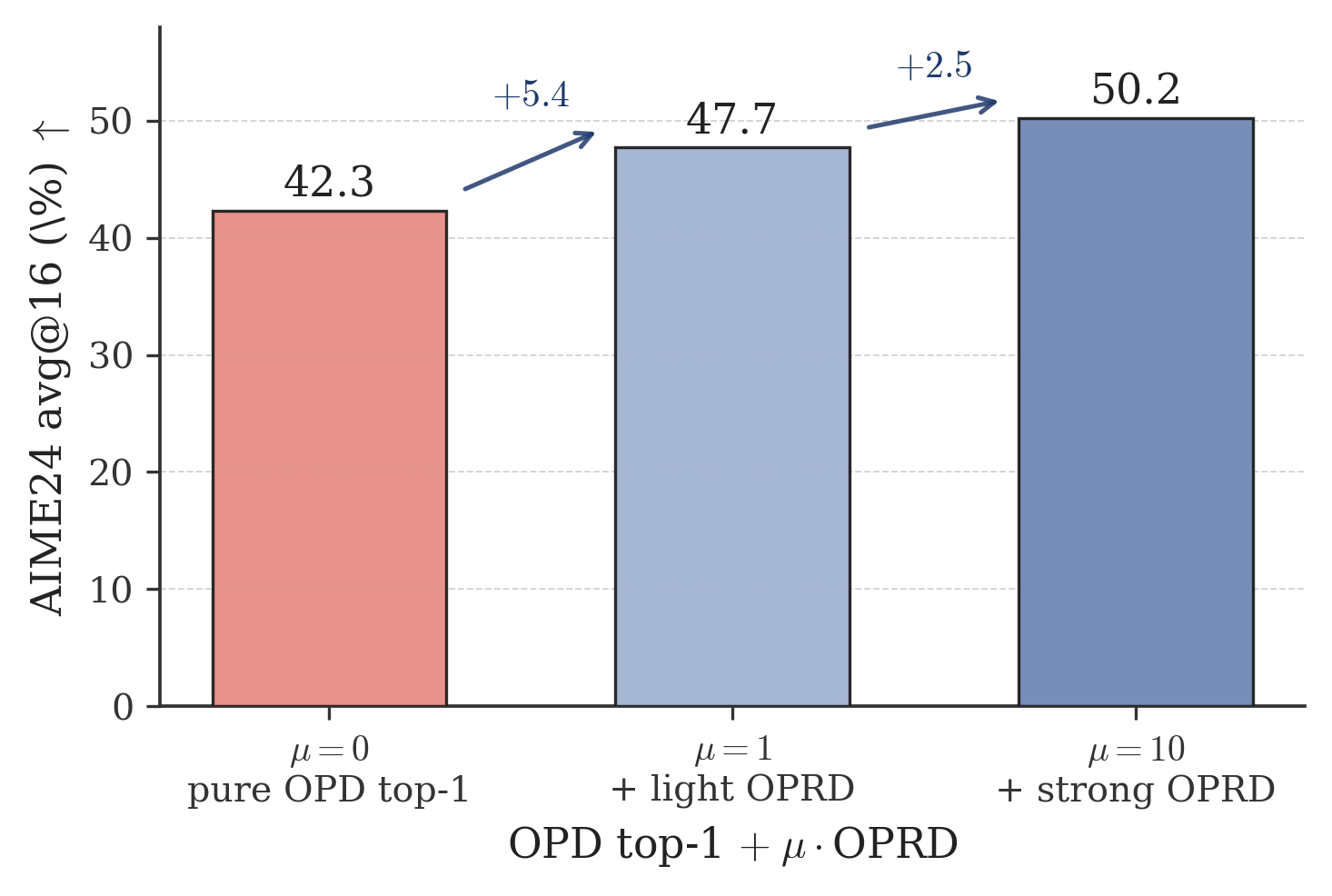}
  \caption{\textbf{Adding OPRD on top of OPD top-1 monotonically lifts accuracy.}
    AIME24 avg@16 of $\mathcal{L}_{\mathrm{OPD}} + \mu\cdot\mathcal{L}_{\mathrm{OPRD}}$ for $\mu\!\in\!\{0,1,10\}$. Even $\mu\!=\!1$ already surpasses OPD top-$16$ ($47.1$); $\mu\!=\!10$ closes the gap to teacher to within $0.6$\,pt.}
  \label{fig:ablation_mu}
  \vspace{-1em}
\end{wrapfigure}

\paragraph{Internal view: OPRD's own loss is being optimised end-to-end.}
A final, internal diagnostic is whether the representation-level loss OPRD is supposed to minimise actually decreases along training.
\Cref{fig:rep_cos_train} plots \texttt{rep/cosine\_similarity}, the cosine similarity between $\pi_\theta$'s and $\pi_T$'s hidden states averaged across all transformer layers and OPRD-supervised positions, for the OPRD-only run from \Cref{tab:main}.
The curve rises sharply in the first few dozen steps and then drifts upward steadily for the rest of training. 
Two consequences follow:
(i)~the OPRD objective is well-conditioned for end-to-end optimisation at this scale: the gradient produced by \eqref{eq:oprd_obj} is consistent enough to monotonically pull the supervised hidden states towards the teacher's;
(ii)~the downstream gains of \Cref{tab:main} are matched by a corresponding internal trend: OPRD is improving on \emph{exactly} the quantity its loss is defined on, confirming that the improvement, not a coincidental rollout-distribution shift, drives the gains.

\subsubsection{Efficiency}
\label{sec:exp_efficiency}

The OPRD loss path is computed entirely \emph{before} the LM head: it never materializes the $[B, T, |\mathcal{V}|]$ logits tensor on the student side, never invokes the $|\mathcal{V}|$-way \texttt{log\_softmax}, and never backpropagates through $W_{\mathrm{head}} \in \mathbb{R}^{|\mathcal{V}| \times d}$ for the distillation term.
As a direct consequence, OPRD-only training is strictly cheaper than any output-space OPD variant at the same rollout/teacher budget.
\Cref{tab:efficiency} quantifies this on the same training configuration as our main results.

\begin{table}[!t]
  \centering
  \small
  \caption{%
    \textbf{Actor-update cost at $B\!=\!8$, $T\!=\!16384$, FSDP on $8\!\times\!$A100 (80\,GB).}
    For each method we instrument the \texttt{update\_policy} call with \texttt{torch.cuda.reset\_peak\_memory\_stats()} and \texttt{torch.cuda.max\_memory\_allocated()} on every rank and report the per-rank maximum.
    \textsc{$\Delta$peak}: the actor-update segment's peak \emph{above} its starting baseline, i.e., the transient memory induced by the distillation loss path alone, with always-resident parameters, optimizer states, and FSDP shards subtracted out.
    Because everything independent of the distillation objective (parameters, optimizer, FSDP plan, rollouts, teacher forward) is identical across rows, it cancels, making $\Delta$peak a direct apples-to-apples proxy for loss-path memory.
    \textsc{Wall-clock}: total training time for $500$ optimizer steps, excluding evaluation.
    All three methods share the same on-policy rollout, teacher forward pass, student next-token forward pass, and FSDP/optimizer setup; they differ only in the distillation loss path.
  }
  \label{tab:efficiency}
  \begin{tabular}{l rr}
    \toprule
    \textbf{Method} & \textbf{$\Delta$peak per GPU (GB)} & \textbf{500-step wall-clock (min)} \\
    \midrule
    OPD top-1                    & 30.2          & 813 \\
    OPD top-16                   & 45.0          & 812 \\
    \textbf{OPRD} (ours)         & \textbf{20.5} & \textbf{563} \\
    \midrule
    \textit{$\Delta$ vs.\ OPRD}  & \multicolumn{2}{l}{OPD top-1: $+9.7$ GB ($+47\%$ $\Delta$peak), $+250$ min ($+44\%$ time)} \\
                                 & \multicolumn{2}{l}{OPD top-16: $+24.5$ GB ($+120\%$ $\Delta$peak), $+249$ min ($+44\%$ time)} \\
    \bottomrule
  \end{tabular}
\end{table}

\textbf{Memory.}
The actor-update transient footprint ($\Delta$peak, the most direct proxy for the loss path's own cost since always-resident state is subtracted out) is $30.2$ GB for OPD top-1 and $45.0$ GB for OPD top-16, vs.\ only $20.5$ GB for OPRD, a $32$\% and $54$\% reduction, or equivalently a $1.47\times$ and $2.20\times$ ratio.
The gap is dominated by the $[B, T, |\mathcal{V}|]$ logits tensor (and its gradient buffer for top-$k$), which scales with $|\mathcal{V}|\!\approx\!151$K but is entirely absent in the OPRD-only loss path.
The roughly $10$--$25$\,GB of saved transient memory is hardware-relevant: on $80$\,GB-class accelerators it is enough to either enlarge the micro-batch or extend the context at the same hardware budget.

\textbf{Wall-clock.}
At identical schedules ($500$ steps, same rollout, same teacher forward pass), OPRD finishes in $563$ minutes vs.\ $813$ / $812$ minutes for OPD top-1 / top-16, a $31$\% wall-clock reduction, equivalent to a $1.44\times$ speed-up.
We attribute this to the fact that the two OPD variants take essentially the same time, consistent with the observation that the cost is dominated by the $[B, T, |\mathcal{V}|]$ matrix multiplication and \texttt{log\_softmax} rather than by the top-$k$ slicing itself.


\textbf{Putting it together.}
Combining \Cref{tab:efficiency} with the accuracy results in \Cref{tab:main}, OPRD strictly Pareto-dominates both OPD baselines on this benchmark suite: at $\sim\!69\%$ of the wall-clock and $46$--$68\%$ of the actor-update transient memory ($\Delta$peak), it reaches accuracies that are $+0.6$ to $+2.7$ points above the better OPD baseline and effectively close the gap to the teacher.
These efficiency gains are a \emph{secondary} consequence of OPRD's design (the primary motivation, as developed in \S\ref{sec:oprd_theory}, is informational; see \Cref{thm:bottleneck}), but they make OPRD a more economical training objective in practice as well.
We note that our current implementation reuses the existing OPD training framework without OPRD-specific infrastructure optimisation (e.g., the teacher still computes and discards the full logits tensor even though OPRD does not consume it).
With a dedicated implementation that eliminates these redundant computations, we expect both peak memory and wall-clock to decrease further.

\subsubsection{Mechanistic Analysis}
\label{sec:exp_mechanism}

\begin{figure}[!t]
  \centering
  \includegraphics[width=0.65\linewidth]{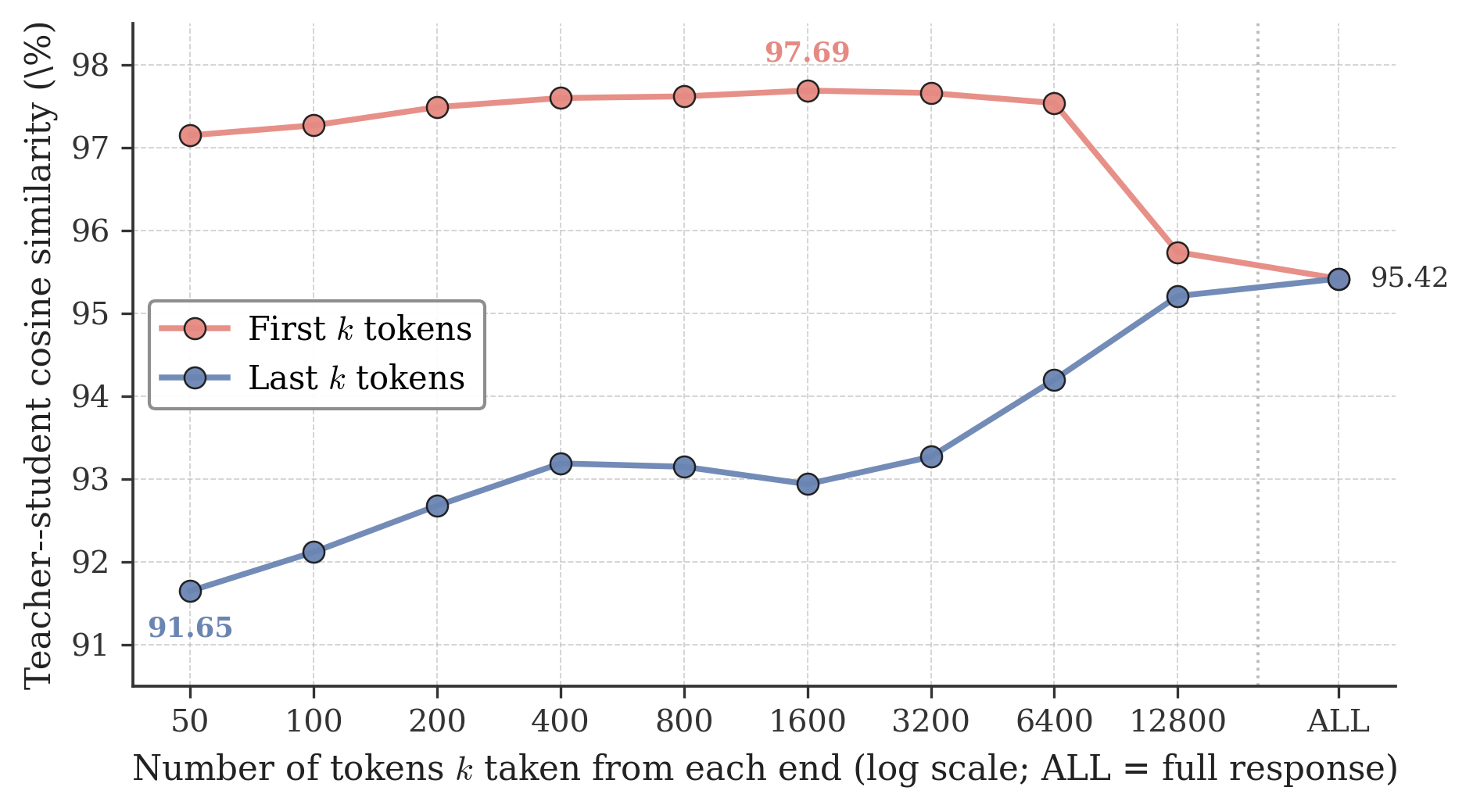}
  \caption{\textbf{The student diverges from the teacher mostly at the end of the response.}
    Cosine similarity between student (\texttt{R1-distill-1.5B}) and teacher (\texttt{JustRL-1.5B}) last-layer hidden states on on-policy rollouts, restricted to either the first $k$ or the last $k$ response tokens, as a function of $k$ (log scale; ``ALL'' = full response, at which both curves coincide at $95.42\%$ by construction).
    The first-$k$ curve is nearly teacher-aligned at every $k$ ($\geq\!97\%$ for $k\!\le\!1600$); the last-$k$ curve lags by $\geq\!4$ points until $k$ exceeds the full response length.
    This empirically motivates concentrating OPRD's supervision on the last-$k$ positions (\S\ref{sec:exp_setup}).}
  \label{fig:cos_position}
\end{figure}

The experiments above show \emph{that} OPRD outperforms OPD; we now ask \emph{why}.
We first study the effect of composing OPRD with OPD via the mixing weight $\mu$, and empirically motivate the choice of supervised positions.
We then track three complementary diagnostics along training for the composite runs $\mathcal{L}_{\mathrm{OPD}} + \mu\cdot\mathcal{L}_{\mathrm{OPRD}}$ with $\mu\!\in\!\{0,1,10\}$ to reveal a consistent mechanistic picture: OPRD pre-aligns the student's hidden states to the teacher's, which propagates back to (a)~a smaller residual policy-gradient signal, (b)~higher next-token top-$k$ agreement, and (c)~a student exploration distribution whose shape matches the teacher's.

\paragraph{Composing OPD with OPRD ($\mu$ sweep).}
\Cref{eq:oprd_opd_combined} suggests that OPRD can also be \emph{added on top of} existing OPD objective, rather than used as a standalone replacement.
We test this for the simplest output-space baseline, sampled-token OPD (i.e.\ OPD top-$1$), by training the composite loss
$\mathcal{L}_{\mathrm{OPD}} + \mu \cdot \mathcal{L}_{\mathrm{OPRD}}$
for $\mu \in \{0, 1, 10\}$, keeping all other knobs identical to \S\ref{sec:exp_setup}.
\Cref{fig:ablation_mu} shows that AIME24 avg@16 rises \emph{monotonically} with $\mu$: from the vanilla OPD top-$1$ baseline at $42.3$ ($\mu\!=\!0$), to $47.7$ with a light OPRD contribution ($\mu\!=\!1$, $+5.4$\,pt, already exceeding the OPD top-$16$ baseline of $47.1$ from \Cref{tab:main}), to $50.2$ with a stronger contribution ($\mu\!=\!10$, $+2.5$\,pt further, essentially matching the teacher's $50.8$).
The trend confirms two things:
(i)~the hidden-state signal that OPRD exposes is \emph{additive} to the output-space signal that OPD already uses, consistent with the information-bottleneck view of \Cref{thm:bottleneck}; and
(ii)~the improvement is monotonic in $\mu$ within the swept range, so the composition is robust to the mixing weight and does not require careful tuning.
We therefore view $\mathcal{L}_{\mathrm{OPD}} + \mu \cdot \mathcal{L}_{\mathrm{OPRD}}$ as a drop-in upgrade for existing OPD pipelines.


\begin{figure}[!t]
  \centering
  \includegraphics[width=0.6\linewidth]{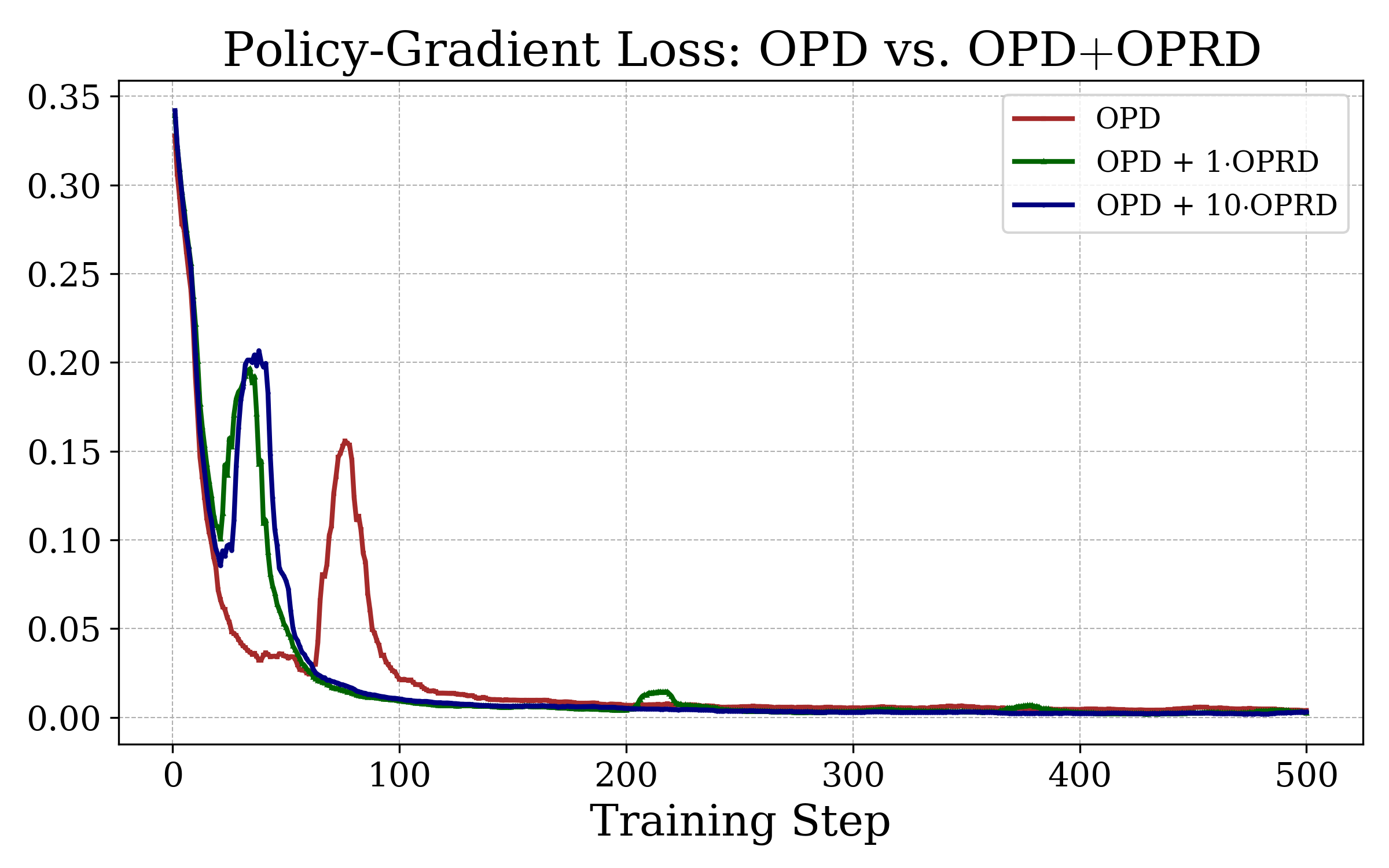}
  \caption{\textbf{OPRD accelerates the PG-loss phase transition and validates the information bottleneck.}
    \texttt{actor/pg\_loss} along training for OPD top-$1$ $+$ OPRD composite runs ($\mathcal{L}_{\mathrm{OPD\,top\text{-}1}} + \mu\cdot\mathcal{L}_{\mathrm{OPRD}}$, $\mu\!\in\!\{0,1,10\}$; smoothing window $= 15$).
    All runs show a loss spike (possible phase transition); OPRD shifts it earlier, indicating accelerated distillation.
    In late training all curves converge to ${\approx}\,0$, yet accuracy differences persist, corroborating the LM-head bottleneck of \Cref{thm:bottleneck}.}
  \label{fig:pgloss_mu}
\end{figure}

\begin{figure}[!t]
  \centering
  \includegraphics[width=0.6\linewidth]{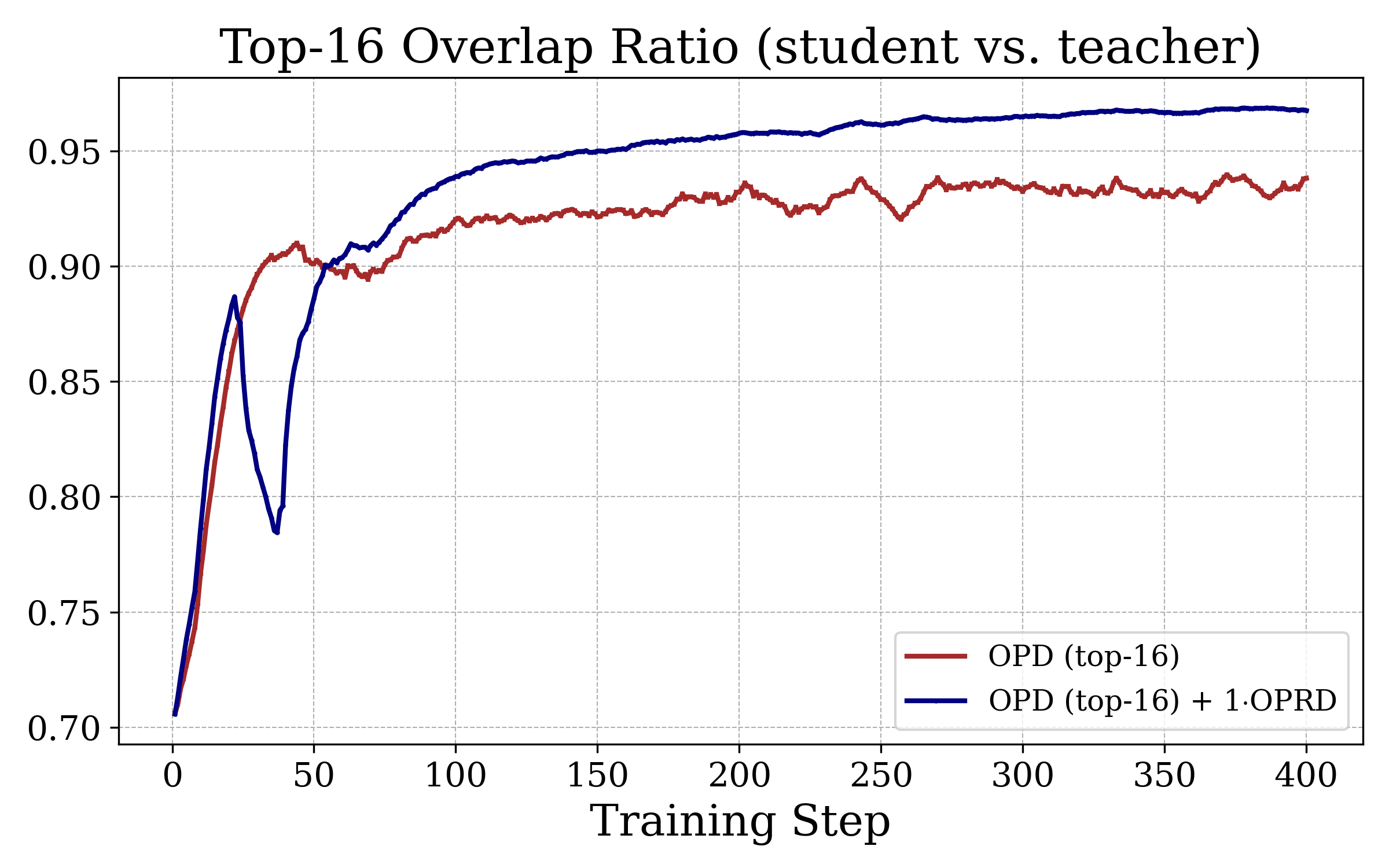}
  \caption{\textbf{Adding OPRD to OPD top-$16$ further aligns student and teacher next-token top-$16$ sets (higher is better).}
    Validation \texttt{val-topk/overlap\_ratio} along training.
    The two runs are nearly co-located early on, but in the second half of training OPD top-$16$ plateaus while OPD top-$16$ $+$ OPRD keeps climbing, the same late-stage divergence that distinguishes the accuracy curves of \Cref{fig:training_curves}.}
  \label{fig:overlap_mu}
\end{figure}

\paragraph{Where does the student diverge from the teacher? (motivation for last-$k$ supervision).}
A natural design question for OPRD is \emph{which response positions} the projector $\mathcal{P}(\hat{y})$ in \eqref{eq:oprd_obj} should select.
We answer this empirically by directly measuring \emph{where} along the response the student and teacher representations still disagree.
For the student initialisation $\pi_\theta^{(0)}\!=\!$~\texttt{R1-distill-1.5B} and the teacher $\pi_T\!=\!$~\texttt{JustRL-1.5B}, we sample on-policy rollouts from the student, run both models forward on each rollout, and compute the cosine similarity between their last-layer hidden states, restricted to either the \emph{first} $k$ or the \emph{last} $k$ tokens of every response.
\Cref{fig:cos_position} reports this similarity as a function of $k$.

Two patterns emerge.
\textbf{(i)~The early response is already teacher-aligned.}
The first-$k$ curve stays above $97\%$ for every $k\!\le\!1600$ and peaks at $97.69\%$ at $k\!=\!1600$, meaning the prompt-following preamble and the opening of the chain-of-thought are essentially already matched by the student; there is little headroom for representation-level supervision to act on.
\textbf{(ii)~The late response is where the gap lives.}
The last-$k$ curve starts at only $91.65\%$ for $k\!=\!50$ and remains $\geq\!4$ points below the first-$k$ curve until $k$ approaches the full response length, at which point both curves converge to the whole-sequence similarity of $95.42\%$ by construction.
Almost all of the student--teacher representational disagreement is concentrated in the \emph{tail} of the response, precisely where the chain-of-thought commits to a final answer.

This directly motivates our default choice $\mathcal{P}(\hat{y})\!=\!\textsc{last-}k$ in \S\ref{sec:exp_setup}: supervising the last $k$ tokens targets exactly the positions in which the student still deviates from the teacher, while sparing compute on the early positions where the signal has already been absorbed.
It also explains why a small budget ($k\!=\!2000 \ll |\hat{y}|$ on average) suffices to recover the gains reported in \Cref{tab:main}, since OPRD's representation loss is not diluted across positions that carry no residual signal.



\begin{figure}[!t]
  \centering
  \begin{subfigure}[t]{0.32\linewidth}
    \centering
    \includegraphics[width=\linewidth]{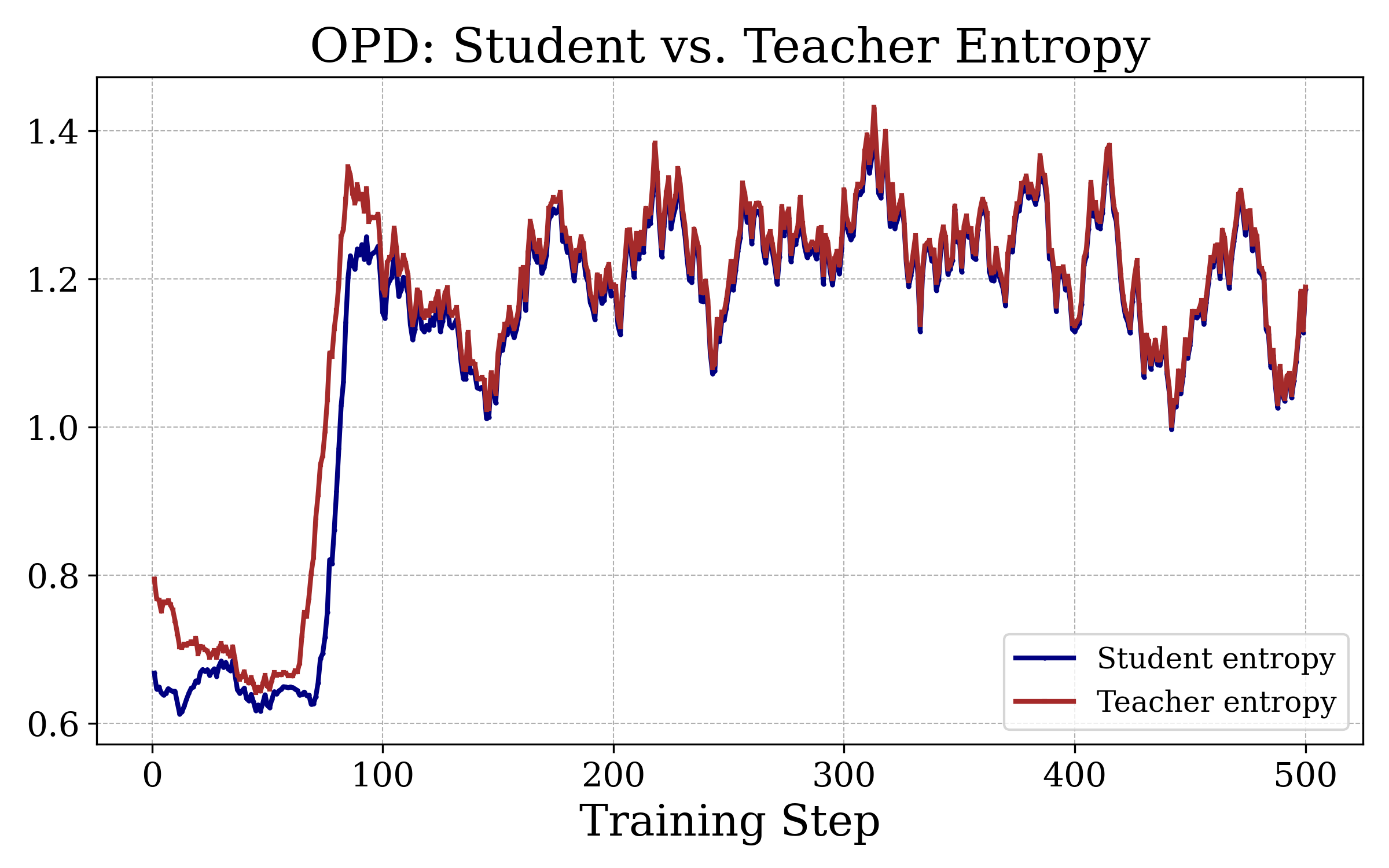}
    \caption{$\mu=0$ (OPD only).}
    \label{fig:entropy_mu0}
  \end{subfigure}\hfill
  \begin{subfigure}[t]{0.32\linewidth}
    \centering
    \includegraphics[width=\linewidth]{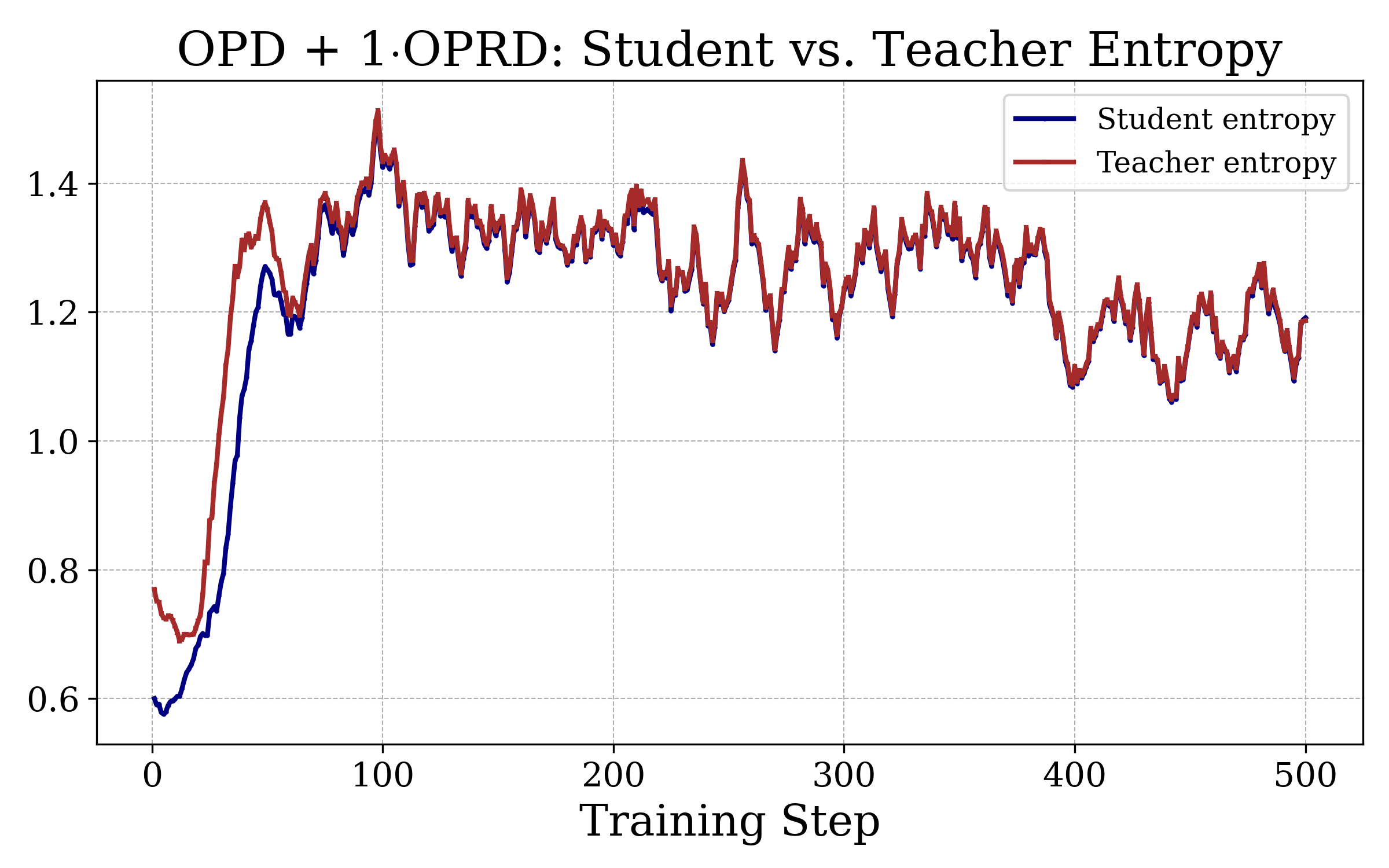}
    \caption{$\mu=1$ (OPD $+ 1\!\cdot\!$OPRD).}
    \label{fig:entropy_mu1}
  \end{subfigure}\hfill
  \begin{subfigure}[t]{0.32\linewidth}
    \centering
    \includegraphics[width=\linewidth]{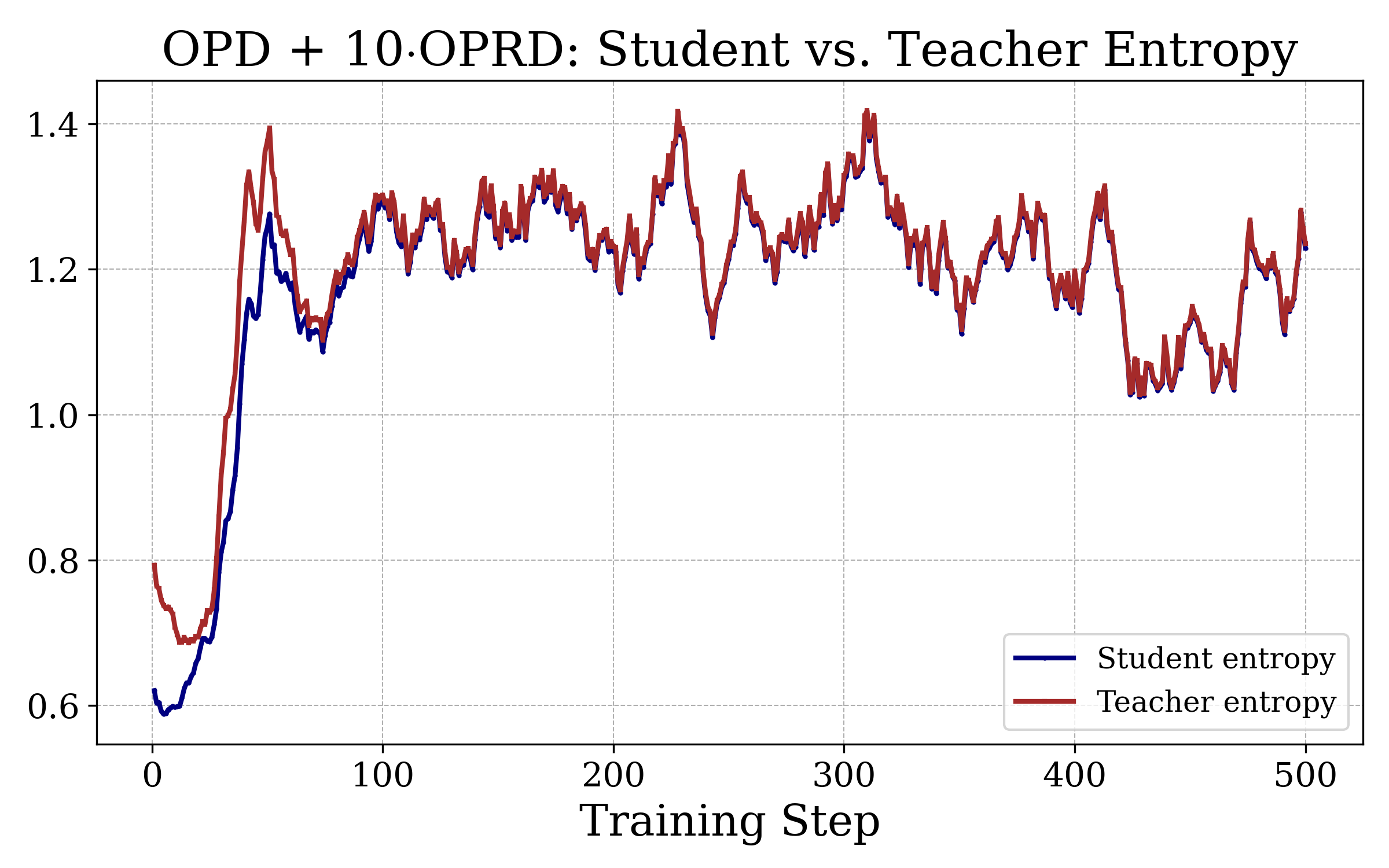}
    \caption{$\mu=10$ (OPD $+ 10\!\cdot\!$OPRD).}
    \label{fig:entropy_mu10}
  \end{subfigure}
  \caption{\textbf{OPRD accelerates entropy alignment between student and teacher.}
    Per-token entropy of $\pi_\theta$ (\texttt{actor/entropy}) and $\pi_T$ (\texttt{teacher/entropy}) on rollout positions along training for OPD top-$1$ $+$ OPRD composite runs ($\mu\!\in\!\{0,1,10\}$, left $\to$ right).
    All runs exhibit an early entropy-increase phase during which the student--teacher gap widens; adding OPRD shifts this phase earlier (coinciding with the PG-loss spike of \Cref{fig:pgloss_mu}), after which the student--teacher entropy gap narrows more rapidly.}
  \label{fig:entropy_mu}
\end{figure}

\paragraph{(a) Policy-gradient loss: OPRD accelerates distillation and validates the bottleneck theory.}
\Cref{fig:pgloss_mu} tracks \texttt{actor/pg\_loss} along training for the OPD top-$1$ $+$ OPRD composite runs with $\mu\!\in\!\{0,1,10\}$.
Two observations stand out.
\textbf{First}, all three runs exhibit a pronounced loss spike during training, likely reflecting a phase transition in the student's policy as it reorganises to absorb the teacher's behaviour (the precise mechanism is under active investigation).
Crucially, adding OPRD causes this spike to arrive \emph{earlier}: the $\mu\!=\!1$ and $\mu\!=\!10$ spikes precede the $\mu\!=\!0$ spike, indicating that hidden-state supervision accelerates the distillation dynamics.
\textbf{Second}, after the spike all three curves converge to approximately zero PG loss in late training, yet the accuracy gap persists ($+5.4$ and $+7.9$\,pt over $\mu\!=\!0$ on AIME24).
This directly corroborates \Cref{thm:bottleneck}: once the policy gradient vanishes ($p_t \approx q_t$), the output-space OPD signal can no longer drive further improvement because the remaining student--teacher gap lives in the null space $\mathcal{N}_W$ of the LM head; only OPRD's representation-level signal, which bypasses this bottleneck, continues to make progress.

\paragraph{(b) Top-$16$ overlap: hidden-state alignment propagates to next-token agreement.}
Following \citet{li2026rethinking}, who show that higher student--teacher top-$k$ overlap is a reliable predictor of distillation quality, we log \texttt{val-topk/overlap\_ratio}, defined as $|\text{top-}16(\pi_\theta)\cap\text{top-}16(\pi_T)|/16$ (higher is better; $1.0$ means the student's top-$16$ set is identical to the teacher's).
\Cref{fig:overlap_mu} compares OPD top-$16$ alone against OPD top-$16$ $+ 1\!\cdot\!\mathcal{L}_{\mathrm{OPRD}}$.

The OPD-only run increases the overlap nearly monotonically throughout training, but its rate of improvement visibly slows after mid-training.
The OPD$+$OPRD run behaves differently: it initially rises alongside OPD-only, then undergoes a sudden \emph{dip} in overlap (temporally coinciding with the PG-loss spike of \Cref{fig:pgloss_mu}, consistent with the hypothesised phase transition), after which it rebounds rapidly and surpasses the OPD-only curve by a clear margin.
The dip-then-surge pattern mirrors the PG-loss spike discussed above and is consistent with OPRD driving the student through a transient reorganisation that ultimately lands it in a higher-overlap regime than OPD alone can reach.
The representation-level and output-level supervisions are therefore not redundant: hidden-state alignment translates back into measurable improvement on exactly the metric OPD top-$16$ was designed to optimise.

\paragraph{(c) Predictive entropy.}
We log \texttt{actor/entropy} and \texttt{teacher/entropy} (per-token Shannon entropy of $\pi_\theta$ and $\pi_T$ on the same rollout positions) along training for the same OPD top-$1$ $+$ OPRD composite runs ($\mu\!\in\!\{0,1,10\}$).
The teacher's entropy curve serves as a reference: since $\pi_T$ is frozen, any drift is induced purely by the changing rollout distribution.
\Cref{fig:entropy_mu} shows the three runs side by side.
All three runs eventually bring the student's entropy into close agreement with the teacher's by the end of training (note that the teacher's entropy also drifts upward as the rollout distribution evolves, so alignment means tracking the teacher, not returning to a fixed level).
However, they differ markedly in their early dynamics: each run exhibits an entropy-increase phase in which the student--teacher gap widens before narrowing.
Adding OPRD causes this entropy-increase phase to begin \emph{earlier}, temporally coinciding with the PG-loss spike of \Cref{fig:pgloss_mu}: the $\mu\!=\!10$ onset precedes the $\mu\!=\!1$ onset, which in turn precedes the $\mu\!=\!0$ onset.
This is consistent with the picture that OPRD accelerates the student's internal reorganisation (the same phase transition visible in the PG-loss and overlap diagnostics), after which the student's entropy converges to the teacher's more quickly.


\subsection{OPRD-Bridge: Cross-Architecture and Cross-Tokenizer Distillation}
\label{sec:exp_bridge}

We now evaluate OPRD-Bridge in settings where direct hidden-state comparison is impossible without the bridge mechanism introduced in \S\ref{sec:oprd_bridge}: first in the cross-architecture setting, where the student and teacher differ in depth and width but share a vocabulary; then in the cross-tokenizer setting, where the two models additionally use completely disjoint tokenizers.

\subsubsection{Experimental Setup}
\label{sec:exp_bridge_setup}

\paragraph{Models.}
We use \texttt{Qwen3-4B}~\citep{yang2025qwen3} as the (frozen) teacher and \texttt{Qwen3-1.7B-Base}~\citep{yang2025qwen3} as the student.
The teacher has $L_T\!=\!36$ transformer layers with hidden dimension $d_T\!=\!2560$; the student has $L_S\!=\!28$ layers with $d_S\!=\!2048$.
Both share the same $|\mathcal{V}|\!\approx\!151$K vocabulary, but differ in depth ($29\%$ more layers) and width ($25\%$ wider hidden states), making na\"ive OPRD-Vanilla inapplicable ($d_S \neq d_T$, $L_S \neq L_T$).
In the cross-tokenizer experiment (\S\ref{sec:exp_cross_tokenizer}), we additionally test with \texttt{Phi-4-mini-reasoning}~\citep{xu2025phi} ($L_T\!=\!32$, $d_T\!=\!3072$, $|\mathcal{V}|\!\approx\!200$K, tiktoken-based tokenizer) as teacher, paired with the same \texttt{Qwen3-1.7B-Base} student, where the two models have completely disjoint tokenizers.

\begin{table}[t]
  \centering
  \small
  \caption{%
    \textbf{Cross-architecture distillation: accuracy.}
    Teacher: \texttt{Qwen3-4B} ($36$ layers, $d\!=\!2560$).
    Student: \texttt{Qwen3-1.7B-Base} ($28$ layers, $d\!=\!2048$).
    \textbf{Bold} = best among distillation methods.
    All methods share the same on-policy rollouts and teacher forward pass.
  }
  \label{tab:bridge_acc}
  \resizebox{\textwidth}{!}{%
  \begin{tabular}{l ccc c ccc c}
    \toprule
    & \multicolumn{4}{c}{\textbf{Avg@16 (\%)} $\uparrow$} & \multicolumn{4}{c}{\textbf{Best@16 (\%)} $\uparrow$} \\
    \cmidrule(lr){2-5} \cmidrule(lr){6-9}
    \textbf{Method} & {AIME24} & {AIME25} & {AIMO} & \textbf{Avg.} & {AIME24} & {AIME25} & {AIMO} & \textbf{Avg.} \\
    \midrule
    Teacher (\texttt{Qwen3-4B})                 & 23.1 & 25.2 & 59.4 & 35.9 & 50.0 & 40.0 & 80.7 & 56.9 \\
    Student (\texttt{Qwen3-1.7B-Base})          & 0.6 & 0.2 & 7.5 & 2.8 & 6.7 & 3.3 & 3.5 & 4.5 \\
    \midrule
    OPD top-1 (sampled-token)                   & 8.5 & 6.3 & 31.0 & \textbf{15.3} & 20.0 & 13.3 & 59.0 & 30.8 \\
    OPD top-16                                  & 6.7 & 4.0 & 20.5 & 10.4 & 13.3 & 6.7 & 61.4 & 27.1 \\
    \textbf{OPRD-Bridge} (ours, $r\!=\!8$)      & 5.0 & 3.8 & 24.8 & 11.2 & 20.0 & 13.3 & 59.0 & 30.8 \\
       $\hookrightarrow$ OPD top-1 (sampled-token)   & 9.6 & 3.0 & 28.2 & 13.6  & 20.0 & 16.7 & 65.1 & \textbf{33.9} \\
    \bottomrule
  \end{tabular}%
  }
\end{table}

\begin{table}[t]
  \centering
  \small
  \caption{%
    \textbf{Cross-architecture distillation: generation quality.}
    Same setting as \Cref{tab:bridge_acc}.
    Dist-4g: ratio of unique 4-grams to total 4-grams (higher = less repetition).
    Avg Len: mean response length in tokens.
  }
  \label{tab:bridge_quality}
  \resizebox{\textwidth}{!}{%
  \begin{tabular}{l ccc c ccc c}
    \toprule
    & \multicolumn{4}{c}{\textbf{Dist-4g} $\uparrow$} & \multicolumn{4}{c}{\textbf{Avg Len} $\downarrow$} \\
    \cmidrule(lr){2-5} \cmidrule(lr){6-9}
    \textbf{Method} & {AIME24} & {AIME25} & {AIMO} & \textbf{Avg.} & {AIME24} & {AIME25} & {AIMO} & \textbf{Avg.} \\
    \midrule
    Teacher (\texttt{Qwen3-4B})                 & 24.0 & 25.4 & 41.6 & 30.3 & 6541.5 & 4425.8 & 2384.2 & 4450.5 \\
    Student (\texttt{Qwen3-1.7B-Base})          & 2.9 & 5.3 & 3.5 & 3.9 & 15422.1 & 13122.7 & 15105.7 & 14550.2 \\
    \midrule
    OPD top-1 (sampled-token)                   & 6.2 & 6.4 & 15.6 & 9.4 & 19119.7 & 17931.3 & 11008.4 & 16019.8 \\
    OPD top-16                                  & 9.9 & 11.7 & 13.6 & 11.7 & 12222.0 & 10175.3 & 11833.3 & 11410.2 \\
    \textbf{OPRD-Bridge} (ours, $r\!=\!8$)      & 22.6 & 17.2 & 21.3 & \textbf{20.4} & 5909.4 & 5234.5 & 4855.3 & \textbf{5333.1} \\
       $\hookrightarrow$ OPD top-1 (sampled-token)   & 12.5 & 14.0 & 18.4 & 15.0  & 8855.2 & 6689.5 & 5960.9 & 7168.5 \\
    \bottomrule
  \end{tabular}%
  }
\end{table}

\paragraph{Bridge construction (Stage 1).}
We sample $2000$ prompts from DAPO-Math-17K, generate student rollouts, and collect both models' hidden states.
The teacher projector $P_T^{(l)} \in \mathbb{R}^{r \times d_T}$ is obtained via PCA (eigendecomposition of the hidden-state covariance) at each layer and permanently frozen.
The student projector $P_S^{(l)} \in \mathbb{R}^{r \times d_S}$ (linear) is trained for $20$ epochs to minimize Eq.~\eqref{eq:bridge_train} with both model backbones frozen, then also frozen.
Layer correspondence follows proportional spacing (Eq.~\eqref{eq:layer_mapping}).
Unless otherwise stated, we use rank $r\!=\!8$.

\paragraph{Distillation (Stage 2).}
On-policy prompts are drawn from DAPO-Math-17K.
For each prompt the student samples $2$ responses at temperature $1.0$ with max length $16{,}384$ tokens; global batch size is $8$ prompts per step.
We supervise all $28$ student layers and apply the position mask to the last $2000$ response tokens.
The loss is normalized MSE (L2-normalize both projected vectors before MSE).
By default we use OPRD-Bridge only.
All methods are trained with AdamW (peak learning rate $1\!\times\!10^{-5}$, cosine decay), bf16 mixed precision, and FSDP over $8\!\times\!$A100 (80G) GPUs.

\paragraph{Baselines.}
We compare against the unmodified \texttt{Qwen3-1.7B-Base} checkpoint (no distillation), OPD top-1 (sampled-token reverse KL), and OPD top-16.
Note that output-space OPD baselines \emph{can} be applied cross-architecture because they only require a shared vocabulary; they therefore serve as the natural comparison for OPRD-Bridge.

\paragraph{Evaluation.}
We evaluate at decoding temperature $0.7$ with $16$ independently sampled responses per prompt on three competition-level mathematical reasoning benchmarks: AIME~2024 ($30$ problems), AIME~2025 ($30$ problems), and AIMO (AMC 2022/2023, $83$ problems).
We report four metrics:
\textbf{Avg@16} (average accuracy across the $16$ samples),
\textbf{Best@16} (accuracy of the best sample, measuring the capability ceiling),
\textbf{Dist-4g} (ratio of unique 4-grams to total 4-grams, measuring lexical diversity),
and \textbf{Avg Len} (mean response length in tokens, measuring generation efficiency).
Final answers are extracted with the standard \texttt{boxed} parser and graded by exact-match.
We save checkpoints periodically and report results from the checkpoint that achieves the best average performance across AIME24, AIME25, and AIMO.

\subsubsection{Main Results}
\label{sec:exp_bridge_main}

\Cref{tab:bridge_acc,tab:bridge_quality} report the main cross-architecture distillation results.
Three observations emerge from different facets of the evaluation.

\textbf{(1) OPRD-Bridge matches OPD's capability ceiling (Best@16).}
While OPRD-Bridge's Avg@16 is lower than OPD top-1 on some benchmarks, its Best@16 tells a different story: on AIME24 and AIME25, OPRD-Bridge achieves $20.0$ and $13.3$, matching OPD top-1 exactly and substantially outperforming OPD top-16.
Best@16 reflects the \emph{upper bound} of the student's capability after distillation (the best answer the model can produce across $16$ attempts) and is therefore a more direct measure of how much teacher knowledge has been successfully transferred.
The parity in Best@16 indicates that OPRD-Bridge raises the student's capability ceiling to the same level as output-space distillation, despite operating through a rank-$8$ subspace rather than the full $|\mathcal{V}|$-dimensional vocabulary simplex.

\textbf{(2) OPRD-Bridge produces substantially more diverse reasoning (Dist-4g).}
OPRD-Bridge achieves Dist-4g scores of $22.6$/$17.2$/$21.3$ on AIME24/AIME25/AIMO, dramatically higher than both OPD top-1 ($6.2$/$6.4$/$15.6$) and OPD top-16 ($9.9$/$11.7$/$13.6$), and approaching the teacher's own diversity levels ($24.0$/$25.4$/$41.6$).
This indicates that hidden-state supervision through the bridge produces reasoning chains with far less repetition than output-space distillation.
The OPD baselines, by contrast, appear to induce repetitive patterns, likely because the high-variance output-space gradient causes the student to ``loop'' through similar token sequences rather than making steady forward progress in its reasoning.

\textbf{(3) OPRD-Bridge generates much shorter, more efficient responses (Avg Len).}
OPRD-Bridge converges to mean response lengths of $5{,}909$/$5{,}234$/$4{,}855$ tokens, which are $2$--$3\times$ shorter than OPD top-1 ($19{,}120$/$17{,}931$/$11{,}008$) and $1.5$--$2.4\times$ shorter than OPD top-16 ($12{,}222$/$10{,}175$/$11{,}833$).
Remarkably, OPRD-Bridge's response lengths are close to the teacher's own ($6{,}542$/$4{,}426$/$2{,}384$), suggesting that the bridge successfully transfers the teacher's concise reasoning style in addition to its knowledge.
Combined with the Best@16 parity, this means OPRD-Bridge achieves the same capability ceiling at a fraction of the inference cost, a significant practical advantage for deployment.

\textbf{Summary.}
The Avg@16 gap between OPRD-Bridge and OPD reflects a difference in \emph{consistency} (how often the model reaches its best answer), not in \emph{capability} (what the best answer is).
OPRD-Bridge compensates with dramatically better diversity and efficiency: it produces teacher-like reasoning chains that are concise, non-repetitive, and reach the same performance ceiling.
This profile (matching capability ceiling with superior efficiency) makes OPRD-Bridge particularly attractive for inference-constrained deployment scenarios where shorter, more reliable responses are preferred over longer, repetitive ones.

\textbf{(4) Composing OPRD-Bridge with OPD further raises the capability ceiling.}
The Avg@16 gap of OPRD-Bridge relative to OPD has a clear mechanistic explanation: OPRD-Bridge only trains the student backbone (hidden states), while the LM head $W_{\mathrm{head}}$ (the final mapping from hidden states to token distributions) receives no gradient.
Even when the backbone representations are well-aligned with the teacher's, the untrained LM head may not translate this alignment into precise per-token output distributions, reducing sampling consistency.
OPD, by contrast, directly supervises the output distribution and therefore calibrates the LM head implicitly through backpropagation.

This suggests a natural two-stage recipe: first use OPRD-Bridge to align the backbone representations, then fine-tune with OPD to calibrate the output distribution.
The last row of \Cref{tab:bridge_acc,tab:bridge_quality} (``$\hookrightarrow$ OPD top-1'') reports exactly this: we take the OPRD-Bridge checkpoint and continue training with OPD top-1.
The results confirm the complementarity:
\textbf{(i)}~Best@16 rises from $30.8$ to $\mathbf{33.9}$ (averaged across benchmarks), surpassing both standalone OPD top-1 ($30.8$) and OPD top-16 ($27.1$) and establishing a new best among all methods.
This validates that OPRD-Bridge pre-aligns the backbone into a better initialisation from which OPD can extract more capability.
\textbf{(ii)}~Avg@16 improves from $11.2$ to $13.6$, partially closing the gap to standalone OPD top-1 ($15.3$), consistent with the LM-head calibration hypothesis.
\textbf{(iii)}~The generation quality metrics shift toward a middle ground: Dist-4g decreases from $20.4$ to $15.0$ (still above OPD's $9.4$--$11.7$) and Avg Len increases from $5{,}333$ to $7{,}169$ (still well below OPD's $11{,}410$--$16{,}020$), reflecting the trade-off between output-level calibration and the concise reasoning style inherited from the bridge stage.

The two-stage pipeline thus combines the strengths of both approaches: OPRD-Bridge provides a high-quality backbone initialisation with teacher-like reasoning structure, and OPD fine-tunes the output distribution to improve consistency, yielding the highest capability ceiling overall.

\subsubsection{Ablation Studies}
\label{sec:exp_bridge_ablation}

We ablate the key design choices of OPRD-Bridge to understand their individual contributions.

\paragraph{Effect of bridge rank $r$.}

\begin{figure}[!t]
  \centering
  \includegraphics[width=0.6\linewidth]{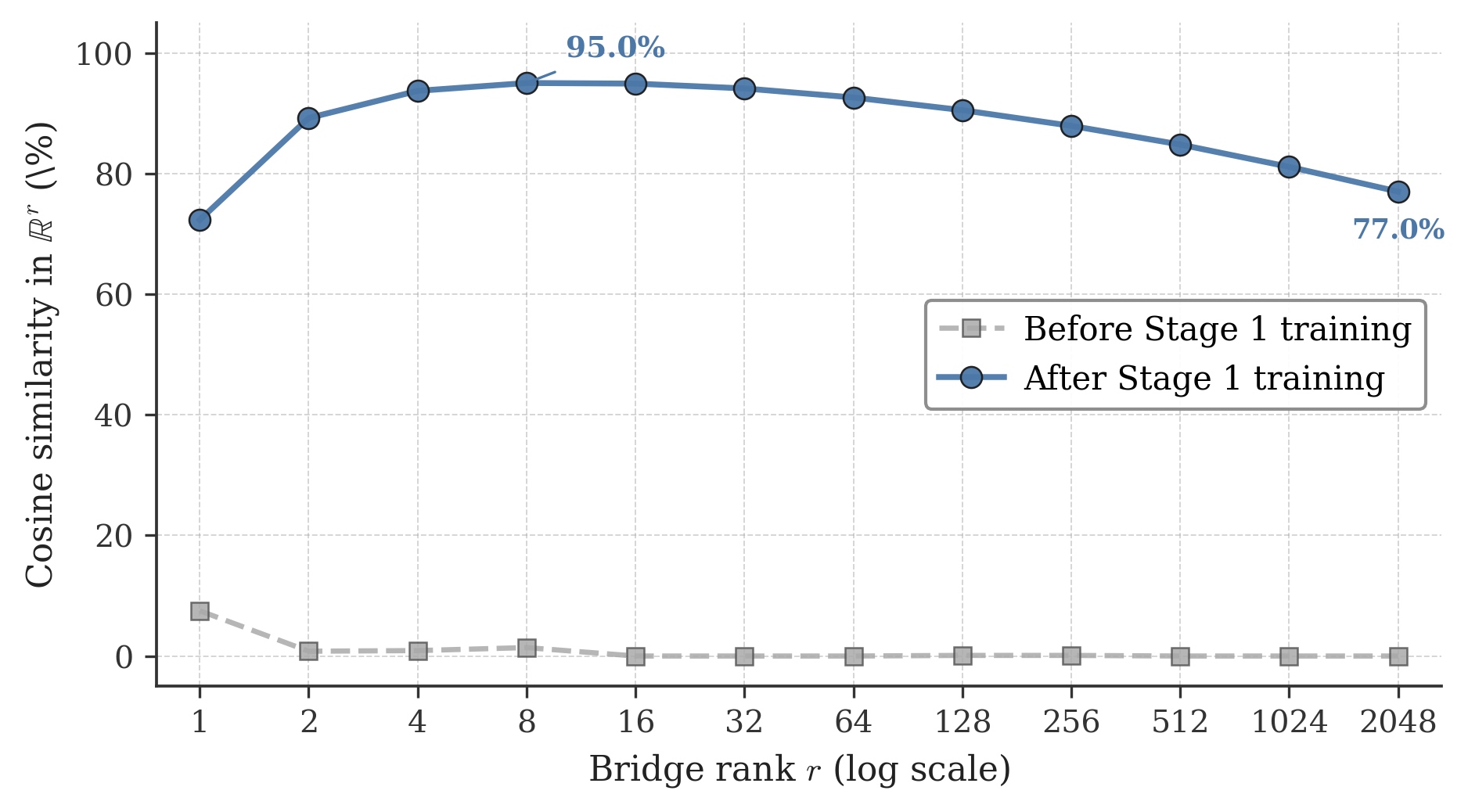}
  \caption{\textbf{Cosine similarity between student and teacher representations in the shared $\mathbb{R}^r$ subspace, before and after Stage~1 bridge training, as a function of rank $r$.}
    Before training (gray dashed), the two models' projected representations are essentially orthogonal at all ranks.
    After training (blue solid), similarity peaks at $r\!=\!8$ ($95.0\%$) and then \emph{decreases} monotonically as $r$ increases, reaching $77.0\%$ at $r\!=\!2048$ (full rank).
    The sweet spot at low rank validates the low-pass filter intuition: the top principal components capture the well-aligned ``low-frequency'' structure, while higher-rank components introduce architecture-specific noise that degrades alignment.}
  \label{fig:rank_cosine}
\end{figure}

\begin{table}[!t]
  \centering
  \small
  \caption{%
    \textbf{Cross-tokenizer distillation results.}
    Teacher: \texttt{Phi-4-mini-reasoning} (Microsoft, $|\mathcal{V}|\!\approx\!200$K).
    Student: \texttt{Qwen3-1.7B-Base} ($|\mathcal{V}|\!\approx\!151$K).
    The two models use completely different tokenizers.
    Standard token-level OPD relies on a shared vocabulary for alignment and collapses when this is unavailable.
  }
  \label{tab:cross_tokenizer}
  \resizebox{\columnwidth}{!}{%
  \begin{tabular}{l ccc c ccc c}
    \toprule
    & \multicolumn{4}{c}{\textbf{Avg@16 (\%)} $\uparrow$} & \multicolumn{4}{c}{\textbf{Best@16 (\%)} $\uparrow$} \\
    \cmidrule(lr){2-5} \cmidrule(lr){6-9}
    \textbf{Method} & {AIME24} & {AIME25} & {AIMO} & \textbf{Avg.} & {AIME24} & {AIME25} & {AIMO} & \textbf{Avg.} \\
    \midrule
    Teacher (\texttt{Phi-4-mini})     & 44.5 & 29.6 & 75.8 & 50.0 & 60.0 & 46.7 & 96.4 & 67.7 \\
    Student (\texttt{Qwen3-1.7B-Base})& 0.6  & 0.2  & 7.5  & 2.8  & 6.7  & 3.3  & 3.5  & 4.5  \\
    \midrule
    OPD  & \multicolumn{4}{c}{\textit{collapses}} & \multicolumn{4}{c}{\textit{collapses}} \\
    \textbf{OPRD-Bridge} ($r\!=\!4$)  & 4.0  & 1.0  & 10.0 & \textbf{5.0}  & 13.3 & 13.3 & 37.3 & \textbf{21.3} \\
    \bottomrule
  \end{tabular}%
  }
\end{table}

\Cref{fig:rank_cosine} reports the average cosine similarity between $P_S(h_S)$ and $P_T(h_T - \mu_T)$ across all layer pairs, before and after Stage~1 bridge training, for ranks $r \in \{1, 2, 4, 8, 16, 32, 64, 128, 256, 512, 1024, 2048\}$.
Three findings emerge.

\textbf{(i)~The bridge training is essential.}
Before Stage~1, the projected representations are nearly orthogonal at all ranks (cosine similarity $\leq 7.5\%$), confirming that the two models' hidden states are incommensurable without the learned alignment.
After training, similarity jumps to $72.3\%$ even at rank $1$ and exceeds $90\%$ for $r \geq 4$.

\textbf{(ii)~The optimal rank is low.}
Similarity peaks at $r\!=\!8$ ($95.0\%$) and then declines monotonically: $94.1\%$ at $r\!=\!32$, $87.9\%$ at $r\!=\!256$, and $77.0\%$ at full rank $r\!=\!2048$.
This validates the low-pass filter interpretation from \S\ref{sec:design_philosophy}: the top-$r$ principal components correspond to the ``low-frequency'' modes where teacher and student are highly aligned; beyond this sweet spot, additional dimensions introduce architecture-specific variation (``high-frequency noise'') that the linear projector cannot align, diluting the overall similarity.

\textbf{(iii)~The rank controls the signal-to-noise trade-off.}
Too small a rank ($r\!=\!1$--$2$) under-represents the shared structure, leaving useful information on the table.
Too large a rank ($r \geq 64$) forces the bridge to transmit misaligned directions, degrading the supervision signal.
The plateau at $r\!=\!4$--$16$ ($93.7$--$95.0\%$) defines the practical operating range; we default to $r\!=\!8$ throughout.

\subsubsection{Cross-Tokenizer Distillation}
\label{sec:exp_cross_tokenizer}

Since OPRD-Bridge performs alignment in representation space, it remains applicable even when teacher and student use entirely different tokenizers.
We verify this by distilling \texttt{Phi-4-mini-reasoning} (Microsoft, $|\mathcal{V}|\!\approx\!200$K, tiktoken-based BPE; $L_T\!=\!32$, $d_T\!=\!3072$) into \texttt{Qwen3-1.7B-Base} ($|\mathcal{V}|\!\approx\!151$K, Qwen BPE; $L_S\!=\!28$, $d_S\!=\!2048$).
All training details follow \S\ref{sec:exp_bridge_setup} exactly, except that we set bridge rank $r\!=\!4$.


\Cref{tab:cross_tokenizer} reports the results.
Two observations stand out.

\textbf{(1) Representation-space alignment bypasses the vocabulary barrier.}
Standard OPD, which relies on the shared vocabulary as its alignment channel, collapses when the two tokenizers are incompatible.
OPRD-Bridge instead aligns through the learned projector pair in representation space, lifting the student from near-zero performance ($2.8\%$ Avg@16) to meaningful accuracy ($5.0\%$).

\textbf{(2) Best@16 reveals substantial capability transfer.}
While Avg@16 gains are modest, Best@16 tells a more encouraging story: the student achieves $13.3\%$ on both AIME24 and AIME25 (up from $6.7\%$ and $3.3\%$), and $37.3\%$ on AIMO (up from $3.5\%$), representing a $4.7\times$ improvement in average Best@16 ($21.3\%$ vs $4.5\%$).
This indicates that the bridge successfully transfers the teacher's reasoning \emph{capability} into the student's backbone, even across a tokenizer boundary.
The gap between Best@16 and Avg@16 is consistent with the LM-head calibration hypothesis from \S\ref{sec:exp_bridge_main}: the backbone has acquired teacher-like representations, but the LM head (which was pre-trained with a different tokenizer's embedding table) has not been calibrated to translate these representations into consistent token-level outputs.

This experiment validates the core theoretical promise of OPRD-Bridge: by shifting alignment from the output space to the representation space, distillation becomes decoupled from the vocabulary, enabling knowledge transfer across incompatible tokenizers.

\section{Discussion}
\label{sec:discussion}

\paragraph{From representational alignment to cross-architecture distillation.}
The unifying insight behind OPRD is that the quality of hidden-state distillation is governed by \emph{representational alignment} between teacher and student (\S\ref{sec:oprd_bridge}).
When alignment is high (same architecture, shared initialisation), the full $\mathbb{R}^d$ channel is already an effective all-pass filter and OPRD-Vanilla suffices.
When alignment is low (different depth, width, or training history), OPRD-Bridge constructs a low-rank, low-pass channel that transmits the shared structure and suppresses architecture-specific noise.
Both regimes are instances of the same principle; the bridge rank $r$ simply controls the bandwidth of the supervision channel.

\paragraph{High-value application 1: multi-model RL merging.}
In the same-architecture setting (e.g., multi-model RL merging where all models share a backbone), OPRD-Vanilla addresses a pressing practical pain point.
In large-scale RL pipelines that merge multiple reward models or policy checkpoints, full-vocabulary OPD is the natural distillation objective but incurs prohibitive memory cost: materialising the $[B, T, |\mathcal{V}|]$ logit tensor for $|\mathcal{V}|$ demands extremely high transient GPU memory, often requiring extensive infrastructure modifications~\citep{deepseek2026deepseek}.
The common workaround, top-$k$ OPD, reduces memory but introduces a truncation bias (tail tokens are ignored) and remains subject to the LM-head information bottleneck analyzed in \Cref{thm:bottleneck}.
OPRD offers a third path: it simultaneously mitigates the variance problem (by providing a deterministic, hidden-state-level gradient) and dramatically reduces memory and wall-clock cost (by never materialising the vocabulary-sized tensor), making it an attractive drop-in component for multi-model RL consolidation.

\paragraph{High-value application 2: on-policy self-distillation (OPSD).}
OPRD is a natural fit for \emph{on-policy self-distillation}, where the teacher is constructed from the student itself by injecting privileged information (e.g., ground-truth solutions, step-level verification signals) into the prompt.
Because the teacher and student share exactly the same weights, the same-architecture requirement is satisfied by construction, and the hidden-state alignment signal is maximally informative.
In this setting OPRD can replace the reverse-KL computation in the output space with a cheaper and lower-variance representation-level objective, while retaining the full benefit of privileged-information guidance.

\paragraph{Toward alignment-aware pre-training.}
Our framework reveals a clear division of labour: OPRD-Vanilla exploits \emph{existing} representational alignment, while OPRD-Bridge \emph{constructs} it post hoc via a low-rank projection.
A natural question is whether the alignment itself could be established earlier, during pre-training, so that the bridge becomes unnecessary.
Concretely, if a large model (e.g., Qwen3-32B) and a small model (e.g., Qwen3-1.7B) were pre-trained with an explicit alignment objective, for instance through shared initialisation of common layers, periodic representation-matching regularisation, or co-distillation, then the resulting pair would already inhabit a well-aligned representation space despite differing in depth and width.
Post-training distillation could then proceed via OPRD-Vanilla (with a simple dimension adapter when $d_S \neq d_T$), bypassing the bridge entirely: the full $\mathbb{R}^{d_S}$ channel would serve as a high-quality all-pass filter, combining the fidelity of full-vocabulary distillation with the efficiency of representation-level supervision.
In the language of our filter analogy, alignment-aware pre-training would shift the ``cutoff frequency'' upward, turning directions that are currently noise (and must be filtered out by the bridge) into signal.
This perspective reframes the bridge not as a permanent architectural component but as a \emph{compensator} for misalignment that better pre-training could eliminate.

\section{Related Work}
\label{sec:related_work}

OPRD draws on three main lines of research: classical knowledge distillation, on-policy distillation, and feature-level / intermediate-representation distillation.
We also discuss adjacent work on auxiliary losses and capacity-gap analyses.
Throughout, we emphasize how OPRD differs from prior work that may at first appear similar.

\paragraph{Output-Space Knowledge Distillation.}
The idea of compressing a large model into a smaller one by matching their output distributions dates back to \citet{hinton2015distilling}.
In the sequence-modelling setting, \citet{kim2016sequence} showed that training a student on teacher-generated translations is an effective form of sequence-level knowledge transfer; subsequent work applied the same principle to pre-trained language models~\citep{sanh2019distilbert,jiao2020tinybert,wang2020minilm} and to instruction-following LLMs via supervised fine-tuning on teacher rollouts~\citep{chung2024scaling,sanh2021multitask,wei2021finetuned}.
A common thread across all these methods is that supervision is provided (i)~\emph{off-policy}, on data the student did not generate, and (ii)~exclusively in the \emph{output space}, at the LM-head logits or the softmax distribution derived from them.
The first property introduces exposure bias~\citep{bengio2015scheduled}; the second confines the learning signal to the ill-conditioned image of $W_{\mathrm{head}}$, leaving hidden-state deviations along its effective null space entirely unpenalised (\Cref{thm:bottleneck}).
OPRD departs from both properties simultaneously.

\paragraph{On-Policy Distillation.}
The exposure-bias problem motivated a shift toward on-policy training.
MiniLLM~\citep{gu2024minillm} optimised a reverse-KL objective on student-sampled responses via policy gradient, observing that the mode-seeking property of reverse KL discourages the student from placing mass where the teacher assigns low probability.
GKD~\citep{agarwal2024policy} generalised this to a family of divergences that interpolate between on- and off-policy data.
More recently, \citet{yang2026learning} reinterpreted OPD through the lens of KL-constrained RL, revealing that the teacher's per-token log-probability ratio serves as an implicit dense reward.
Building on these foundations, OPD has been adopted in several production post-training pipelines~\citep{deepseek2026deepseek,yang2025qwen3,zeng2026glm,xiao2026mimo,ko2026scaling,jin2026entropy,jang2026stable,fu2026revisiting,hou2026uni} and extended to self-distillation settings where the teacher is derived from the student itself via privileged information~\citep{hubotter2026reinforcement,zhao2026self,he2026far,shenfeld2026self,ye2026policy,sang2026crisp,kim2026does,ye2026online,yang2026self,li2026unifying,ding2026hdpo}.
A key observation, however, is that the entire design space explored so far (sampled-token, top-$k$, and full-vocabulary variants) concerns only \emph{how many output tokens} to supervise per position; the supervision itself never leaves the output space.
OPRD is, to our knowledge, the first on-policy method whose learning signal originates \emph{strictly before the LM head}, operating on the student's own trajectories.

\paragraph{Feature / Intermediate-Representation Distillation.}
A separate line of work supervises the student's intermediate representations rather than its outputs.
Early instances include FitNets~\citep{romero2014fitnets}, which match a single ``hint'' layer of the student to the teacher; attention-transfer~\citep{zagoruyko2016paying}, which matches per-pixel attention maps in CNNs; and FSP-matrix distillation~\citep{yim2017gift}, which matches Gram matrices between layers.
For BERT-style language models, TinyBERT~\citep{jiao2020tinybert} and MobileBERT~\citep{sun2020mobilebert} extend this idea by jointly matching hidden states and attention maps across all layers, and MiniLM/MiniLMv2~\citep{wang2020minilm,wang2021minilmv2} match self-attention relation matrices.
At first glance, OPRD may look like a straightforward port of these ideas to autoregressive LLMs, but two structural differences set it apart:

\begin{itemize}[leftmargin=14pt, itemsep=2pt]
    \item \textbf{On-policy vs.\ off-policy supervision.} FitNets, TinyBERT, MiniLM, and their successors compute the feature-matching loss on \emph{fixed} inputs from a pre-training or downstream corpus, i.e.\ inputs the student does not generate. The student is never exposed to its own rollout distribution during distillation, so exposure bias remains.
    OPRD, by contrast, computes the hidden-state loss on \emph{student-generated} sequences $\hat{y} \sim \pi_\theta(\cdot \mid x)$ that evolve as training progresses. The teacher is queried on states the student actually visits, making the supervision signal adaptive to the student's evolving policy.
    \item \textbf{Encoder representations vs.\ autoregressive prefix representations.} Prior feature-distillation work targets encoder models (BERT, vision CNNs) whose representations are computed once per input; the teacher and student process the same input and are aligned post-hoc.
    In the autoregressive LLM setting, each hidden state $h_{\cdot,t}^{(l)}$ (for either model) encodes the model's belief \emph{just before predicting token $\hat{y}_t$}, conditional on the entire sampled prefix $\hat{y}_{<t}$. OPRD therefore aligns the student's predictive computation at every decoding step under its own sampling distribution, a fundamentally on-policy object with no analog in encoder-style feature distillation.
\end{itemize}

\paragraph{Hint Learning, Auxiliary Losses, and Distribution Matching.}
A related body of work uses intermediate signals to regularize or augment training rather than to distill from a separate teacher.
Deeply-supervised nets~\citep{lee2015deeply} attach auxiliary classifiers to intermediate layers of a single model; DINO~\citep{caron2021emerging} aligns hidden states across augmented views of the same input in self-supervised learning; representation engineering~\citep{zou2023representation} steers or interprets hidden states without explicit teacher supervision.
OPRD shares the high-level intuition that intermediate representations carry useful signal, but differs in three crucial ways: it is (i)~explicitly teacher--student rather than self-supervised, (ii)~on-policy on student-generated autoregressive trajectories, and (iii)~a self-contained training objective that can additionally compose with any output-space OPD variant via Eq.~\eqref{eq:oprd_opd_combined}.


\section{Conclusion and Future Work}
\label{sec:conclusion}

We presented \textbf{OPRD}, the first on-policy distillation method that supervises the student in the hidden-state space rather than at the LM-head output.
The central thesis is that all existing OPD variants (sampled-token, top-$k$, and full-vocabulary) share two practical limitations inherent to the output-space paradigm: the dominant sampled-token variant suffers from a high-variance REINFORCE-style gradient estimator whose signal-to-noise ratio collapses as the student approaches the teacher, while top-$k$ and full-vocabulary variants trade this variance for a truncation bias or prohibitive memory cost; and all variants are subject to an LM-head projection that acts as an information bottleneck, compressing the teacher's full stack of intermediate hidden states through an ill-conditioned, softmax-invariant mapping.
By moving supervision from the output of the LM head to its input, OPRD yields a deterministic per-sample gradient that removes the token-level estimation variance by construction, and exposes per-position, per-layer structural information that any output-space objective necessarily discards.
Empirically, OPRD enables monotonic improvement throughout training and closes the student--teacher gap on three competition mathematics benchmarks (AIME 2024, AIME 2025, AIMO), while every output-space baseline plateaus several points below the teacher.
On the same hardware budget, OPRD is strictly Pareto-dominant: $1.44\times$ faster wall-clock training and up to $54\%$ less actor-update transient memory than top-$k$ OPD, because its loss path never materialises the $[B, T, |\mathcal{V}|]$ logits tensor.

We further introduced \textbf{OPRD-Bridge}, which extends representation distillation to the cross-architecture setting by constructing a frozen low-rank bridge between heterogeneous teacher and student representations.
By exploiting the empirical finding that models of different depth and width share a low-rank representational structure, OPRD-Bridge shifts alignment from the output space to the representation space, decoupling distillation from the vocabulary.
We validated this capability on both cross-architecture (Qwen3-4B $\to$ Qwen3-1.7B) and cross-tokenizer (Phi-4-mini-reasoning $\to$ Qwen3-1.7B) settings, demonstrating successful knowledge transfer even when the vocabulary-based alignment channel is unavailable.

\paragraph{Future Work.}
Several directions follow naturally from our framework:
\begin{itemize}[leftmargin=14pt, itemsep=2pt]
    \item \textbf{Beyond mathematical reasoning.} Our experiments focus on long-CoT math benchmarks. Whether OPRD's gains transfer to code generation, agentic interaction, and open-ended dialogue, each with different position-level supervision characteristics, remains an open question.
    \item \textbf{Cross-modal distillation.} Our cross-tokenizer experiment (\S\ref{sec:exp_cross_tokenizer}) validates that OPRD-Bridge transfers knowledge across incompatible tokenizers by aligning in representation space. The natural next step is cross-modal distillation (e.g., from a vision-language teacher to a language-only student), where the representation-space bridge may serve as the only viable supervision channel.
    \item \textbf{Adaptive layer and position selection.} We use uniform layer weighting and a simple last-$k$ position heuristic. Adaptively weighting layers and positions based on where the student--teacher gap is largest or where the gradient signal is most informative could further sharpen supervision.
    \item \textbf{On-policy representation self-distillation (OPRSD).} As discussed in \S\ref{sec:discussion}, OPRD is a natural fit for self-distillation with privileged information, where the same-architecture requirement is satisfied by construction. Scaling OPRSD to multi-turn and multi-task settings is a promising next step.
    \item \textbf{Understanding the phase transition.} Our mechanistic analysis (\S\ref{sec:exp_mechanism}) reveals a PG-loss spike and associated entropy/overlap dynamics when OPRD is active. Characterising the mechanism behind this transition would deepen the theoretical understanding of representation-level distillation.
    \item \textbf{Attention-map distillation.} OPRD aligns hidden-state vectors but does not supervise the attention patterns that produce them. Extending OPRD with an on-policy attention-matching objective could transfer the teacher's routing and composition behaviour more directly.
    \item \textbf{Alignment-aware pre-training.} Our analysis shows that representational alignment, not architectural identity, is the true prerequisite for direct hidden-state distillation. If pre-training itself enforced alignment between large and small models (e.g., through shared layer initialisation or periodic representation-matching regularisation), the resulting pair might not require a low-rank bridge at all, making cross-architecture OPRD as seamless as the same-architecture case.
    \item \textbf{Representation-level diagnostics for OPD.} By opening up the hidden-state channel, OPRD enables representation-level diagnostics (cosine similarity, CKA, probing accuracy) to be tracked alongside traditional output-level metrics, providing a more complete mechanistic picture of how knowledge transfers between models during RL training.
    \item \textbf{Tighter theoretical bounds.} Our analysis identifies the qualitative mechanisms behind OPRD's success. Quantifying these effects, including explicit convergence-rate bounds for OPRD vs.\ sampled-token OPD and spectral characterisations of which hidden-state directions the LM head nulls out, would solidify the theoretical foundation.
\end{itemize}

More broadly, our results suggest that hidden-state representations are an under-exploited resource in LLM distillation.
We hope this work encourages the community to treat the teacher not merely as a probability oracle but as a structured source of layered internal computation that the student can learn to inhabit.


\bibliographystyle{plainnat}
\bibliography{opd}

\newpage
\appendix


\section{Notation Summary}
\label{sec:notation_table}

\begin{table}[!t]
\centering
\small
\setlength{\tabcolsep}{4pt}
\renewcommand{\arraystretch}{1.15}
\caption{Notation summary (Part~I): models, architecture, distributions, and objectives.}
\label{tab:notation}
\begin{tabular}{@{}l P{8.6cm} l@{}}
\toprule
\textbf{Symbol} & \textbf{Meaning} & \textbf{First use} \\
\midrule
\multicolumn{3}{@{}l}{\emph{Models and inputs}} \\
$\pi_\theta,\; \theta$                  & Student policy and its trainable parameters & \S\ref{sec:notation} \\
$\pi_T$                                 & Teacher policy (frozen) & \S\ref{sec:notation} \\
$\mathcal{V},\; v,\; |\mathcal{V}|=V$   & Shared vocabulary, a token in it, and its size & \S\ref{sec:notation} \\
$x,\; \mathcal{D}_x$                    & Prompt and prompt distribution & \S\ref{sec:notation} \\
$\hat{y}=(\hat{y}_1,\ldots,\hat{y}_T)$  & On-policy rollout sampled from $\pi_\theta(\cdot\mid x)$ & \S\ref{sec:opd_framework} \\
$T,\; t$                                & Response length and a per-token position index & \S\ref{sec:opd_framework} \\
$B$                                     & Training batch size (number of prompts per optimizer step) & \S\ref{sec:exp_setup} \\
$\hat{y}_{<t}$                          & Prefix $(\hat{y}_1,\ldots,\hat{y}_{t-1})$ used to condition step $t$ & \S\ref{sec:notation} \\
\midrule
\multicolumn{3}{@{}l}{\emph{Architecture}} \\
$L,\; l$                                & Number of transformer layers, a layer index & \S\ref{sec:notation} \\
$d,\; d_S,\; d_T$                       & Hidden dimension; student/teacher hidden dimensions if different & \S\ref{sec:notation} \\
$h_{\theta,t}^{(l)},\; h_{T,t}^{(l)}\in\mathbb{R}^d$ & Student / teacher hidden state at layer $l$ and position $t$ & \S\ref{sec:notation} \\
$W_{\mathrm{head}}\in\mathbb{R}^{|\mathcal{V}|\times d}$         & Language-model head mapping hidden state to logits & \S\ref{sec:notation} \\
$W\in\mathbb{R}^{d_T\times d_S}$        & Learnable linear projector used when $d_S\!\neq\!d_T$ & \S\ref{sec:oprd_def} \\
\midrule
\multicolumn{3}{@{}l}{\emph{Distributions and divergences}} \\
$p_t,\; q_t$                            & Student / teacher next-token distribution at position $t$ & \S\ref{sec:opd_framework} \\
$D_{\mathrm{KL}}(p\,\|\,q)$             & Kullback--Leibler divergence (forward direction) & \S\ref{sec:opd_framework} \\
$u_t \triangleq \log p_t - \log q_t$    & Per-token log-density ratio & \S\ref{sec:oprd_theorems} \\
$\delta(\theta)\triangleq D_{\mathrm{KL}}(p\|q)+D_{\mathrm{KL}}(q\|p)$ & Symmetric divergence used in SNR analysis & \S\ref{sec:oprd_theorems} \\
\midrule
\multicolumn{3}{@{}l}{\emph{Output-space objectives (OPD variants)}} \\
$\mathcal{L}_{\mathrm{OPD}}$            & Generic on-policy distillation loss (any variant) & \S\ref{sec:opd_framework} \\
$\mathcal{L}_{\mathrm{OPD}}^{\mathrm{sample}}$ & Sampled-token OPD (single-sample Monte Carlo) & Eq.~\eqref{eq:opd_sampled_obj} \\
$\mathcal{L}_{\mathrm{OPD}}^{\mathrm{full}}$   & Full-vocabulary OPD (sum over all $v\in\mathcal{V}$) & Eq.~\eqref{eq:opd_full_obj} \\
$\mathcal{L}_{\mathrm{OPD}}^{\text{top-}k}$    & Top-$k$ OPD restricted to a $k$-token support & Eq.~\eqref{eq:opd_topk_obj} \\
$k,\; S_t$                              & Top-$k$ support size and the per-position support set & \S\ref{sec:opd_variants} \\
\midrule
\multicolumn{3}{@{}l}{\emph{Representation-level objective (OPRD-Vanilla, \S\ref{sec:oprd_def})}} \\
$\mathcal{L}_{\mathrm{OPRD}}$           & On-policy representation distillation loss & Eq.~\eqref{eq:oprd_obj} \\
$\mathcal{L}_{\mathrm{layer}}\subseteq\{1,\ldots,L\}$ & Set of distilled transformer layers & \S\ref{sec:oprd_def} \\
$\mathcal{P}(\hat{y})\subseteq\{1,\ldots,T\}$         & Set of supervised response positions & \S\ref{sec:oprd_def} \\
$m_t\in\{0,1\}$                         & Position mask: $m_t=\mathbf{1}[t\in\mathcal{P}(\hat{y})]$ & \S\ref{sec:oprd_def} \\
$\mathrm{sg}(\cdot)$                    & Stop-gradient operator (treats argument as constant) & Eq.~\eqref{eq:oprd_obj} \\
$\mu\ge 0$                              & Mixing weight of the optional OPD term added to OPRD & Eq.~\eqref{eq:oprd_opd_combined} \\
\bottomrule
\end{tabular}
\end{table}

\begin{table}[!t]
\centering
\small
\setlength{\tabcolsep}{4pt}
\renewcommand{\arraystretch}{1.15}
\caption{Notation summary (Part~II): cross-architecture bridge and theoretical analysis.}
\label{tab:notation2}
\begin{tabular}{@{}l P{8.6cm} l@{}}
\toprule
\textbf{Symbol} & \textbf{Meaning} & \textbf{First use} \\
\midrule
\multicolumn{3}{@{}l}{\emph{Cross-architecture bridge (OPRD-Bridge, \S\ref{sec:oprd_bridge})}} \\
$L_S,\; L_T$                            & Number of layers in student / teacher & \S\ref{sec:oprd_bridge} \\
$d_S,\; d_T$                            & Hidden dimension of student / teacher & \S\ref{sec:oprd_bridge} \\
$r$                                     & Bridge rank (shared subspace dimensionality) & \S\ref{sec:shared_structure} \\
$P_T^{(l)} \in \mathbb{R}^{r \times d_T}$ & Teacher-side PCA basis (frozen after eigendecomposition) & Eq.~\eqref{eq:teacher_pca} \\
$P_S^{(l)} \in \mathbb{R}^{r \times d_S}$ & Student-side projector (trained in Stage 1, then frozen) & Eq.~\eqref{eq:bridge_train} \\
$\mu_T^{(l)} \in \mathbb{R}^{d_T}$      & Per-layer mean of teacher hidden states (for centering) & Eq.~\eqref{eq:teacher_pca} \\
$\phi(l_S)$                             & Proportional layer mapping from student to teacher & Eq.~\eqref{eq:layer_mapping} \\
$\mathcal{L}_{\text{OPRD-Bridge}}$      & Cross-architecture on-policy representation distillation loss & Eq.~\eqref{eq:xoprd_obj} \\
\midrule
\multicolumn{3}{@{}l}{\emph{Gradient estimators and variance analysis}} \\
$g_{\mathrm{OPD}},\; g_{\mathrm{OPRD}}$ & Single-sample stochastic gradients of the two losses & Def.~\ref{def:estimators} \\
$\bar{g}_{\mathrm{OPD}},\; \bar{g}_{\mathrm{OPRD}}$ & Their population means (over $\hat{y}_t\sim p_t$) & Def.~\ref{def:estimators} \\
$\nabla_\theta\log p$                   & Score function (per-token policy gradient direction) & Eq.~\eqref{eq:score_decomp} \\
$\mathcal{F}(\theta),\; \mathcal{F}_{\min}(\theta)$ & Fisher information matrix and its minimum eigenvalue & Thm.~\ref{thm:opd_variance} \\
$\mathrm{SNR}(g)$                       & Signal-to-noise ratio $\|\bar g\|^2 / \mathrm{Tr}(\mathrm{Cov}[g])$ & Def.~\ref{def:snr} \\
\midrule
\multicolumn{3}{@{}l}{\emph{LM-head information bottleneck (\Cref{thm:bottleneck})}} \\
$\mathcal{N}_W$                         & Effective null space of LM head under softmax: $\{\Delta h : W_{\mathrm{head}}\Delta h \in \mathrm{span}\{\mathbf{1}\}\}$ & Thm.~\ref{thm:bottleneck} \\
$\mathbf{1}\in\mathbb{R}^{|\mathcal{V}|}$             & All-ones vector (softmax-invariant shift direction) & Thm.~\ref{thm:bottleneck} \\
$\sigma_1,\ldots,\sigma_d$             & Singular values of $W_{\mathrm{head}}$ ($\sigma_1\!\ge\!\cdots\!\ge\!\sigma_d\!>\!0$) & Thm.~\ref{thm:bottleneck} \\
$v_1,\ldots,v_d$                       & Right-singular vectors of $W_{\mathrm{head}}$ & Thm.~\ref{thm:bottleneck} \\
\bottomrule
\end{tabular}
\end{table}

\section{Formal Theoretical Guarantees}
\label{sec:oprd_theorems_appendix}
\label{sec:oprd_theorems}

The two theorems in \S\ref{sec:oprd_theory} (\Cref{thm:variance,thm:bottleneck}) establish OPRD's properties at an intuitive level.
We now state both formally, in one-to-one correspondence with the main conclusions stated in \S\ref{sec:oprd_theory}:
\begin{itemize}[leftmargin=14pt, itemsep=0pt]
  \item \S\ref{sec:oprd_theorems_variance} formalizes \textbf{\Cref{thm:variance} (gradient variance)}: variance gap (\Cref{thm:opd_variance,thm:oprd_zero_variance,cor:variance_gap}) and signal-to-noise collapse (\Cref{thm:snr_collapse}).
  \item \S\ref{sec:oprd_theorems_bottleneck} formalizes \textbf{\Cref{thm:bottleneck} (LM-head information bottleneck)}: the null-direction identity (\Cref{thm:bottleneck_null_appendix}) and the spectral gap (\Cref{thm:bottleneck_gap_appendix}).
\end{itemize}
Throughout this section, all expectations are taken over a single fixed prompt $x$ and a single response position $t$; the multi-position case follows by linearity.
We use $\theta$ to denote the student parameters, $\pi_\theta$ and $\pi_T$ to denote the student and teacher policies, and write $p \equiv p_t = \pi_\theta(\cdot \mid x, \hat{y}_{<t})$ and $q \equiv q_t = \pi_T(\cdot \mid x, \hat{y}_{<t})$ for brevity.
We let $u_t \triangleq \log p - \log q$ denote the per-token log-density ratio.

\subsection{Setup and Assumptions}

\begin{definition}[Stochastic gradient estimators]
\label{def:estimators}
Fix a student response position $t$ and a sampled-token estimator $\hat{y}_t \sim p$.
The two per-position stochastic gradient estimators considered in this paper are
\begin{align}
  g_{\mathrm{OPD}}(\theta; \hat{y}_t)
  &\triangleq \nabla_\theta \bigl[ \log p(\hat{y}_t) - \log q(\hat{y}_t) \bigr]
  = \nabla_\theta \log p(\hat{y}_t),
  \label{eq:thm_g_opd}
  \\
  g_{\mathrm{OPRD}}(\theta)
  &\triangleq \nabla_\theta \, \tfrac{1}{d}\,
              \bigl\| h_{\theta,t}^{(L)} - \mathrm{sg}(h_{T,t}^{(L)}) \bigr\|_2^2,
  \label{eq:thm_g_oprd}
\end{align}
where in \eqref{eq:thm_g_opd} we used the fact that $\nabla_\theta \log q(\hat{y}_t) = 0$ because $q$ depends only on the (frozen) teacher.
The corresponding population gradients are
$\bar{g}_{\mathrm{OPD}}(\theta) \triangleq \mathbb{E}_{\hat{y}_t \sim p}[g_{\mathrm{OPD}}]$ and
$\bar{g}_{\mathrm{OPRD}}(\theta) \triangleq g_{\mathrm{OPRD}}$ (which is already deterministic in $\hat{y}_t$).
\end{definition}

\begin{assumption}[Standard regularity]
\label{ass:regularity}
The following standard conditions hold throughout:
\textbf{(R1)} The log-densities $\log p_\theta(v)$ are twice continuously differentiable in $\theta$ for every $v \in \mathcal{V}$.
\textbf{(R2)} The score $s_\theta(v) \triangleq \nabla_\theta \log p_\theta(v)$ satisfies $\mathbb{E}_{p}[\|s_\theta\|_2^2] < \infty$ (finite Fisher information).
\textbf{(R3)} For each $v$, $|\log p_\theta(v) - \log q(v)| \le M$ for some constant $M < \infty$ on the trajectory of training (bounded log-ratio).
\textbf{(R4)} The hidden state $h_{\theta,t}^{(L)}$ is a continuously differentiable function of $\theta$ with bounded Jacobian: $\|\nabla_\theta h_{\theta,t}^{(L)}\|_{\mathrm{op}} \le J < \infty$.
\end{assumption}

Conditions (R1)--(R2) hold for any LLM with softmax output;
(R3) holds whenever both student and teacher assign nonzero probability to every supported token (e.g., after a small label-smoothing or temperature adjustment);
(R4) holds for any Lipschitz transformer with bounded weights.
These assumptions are mild and standard in the policy-gradient literature.

\phantomsection
\label{sec:oprd_theorems_variance}

\subsection{Variance of Sampled-Token OPD}

\begin{lemma}[Score-function decomposition of OPD gradient]
\label{lem:score_decomp}
Under \Cref{ass:regularity}, the OPD population gradient at position $t$ admits the score-function representation
\begin{equation}
  \bar{g}_{\mathrm{OPD}}(\theta)
  \;=\;
  \mathbb{E}_{\hat{y}_t \sim p}\bigl[\, u_t(\hat{y}_t)\,
       \nabla_\theta \log p(\hat{y}_t) \,\bigr],
  \qquad u_t(v) \triangleq \log p(v) - \log q(v).
  \label{eq:score_decomp}
\end{equation}
\end{lemma}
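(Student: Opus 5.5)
The plan is to identify $\bar{g}_{\mathrm{OPD}}(\theta)$ with the true gradient of the per-position objective whose unbiased single-sample estimate is $\ell_t^{\mathrm{sample}}$, namely $\nabla_\theta D_{\mathrm{KL}}(p\,\|\,q) = \nabla_\theta \sum_{v\in\mathcal{V}} p(v)\,u_t(v)$, and then differentiate this finite sum directly. This identification is the one point that needs care. Read literally, \eqref{eq:thm_g_opd} gives only the pathwise part $\nabla_\theta \log p(\hat{y}_t)$, whose expectation under $p$ vanishes. The population gradient of sampled-token OPD must also account for the dependence of the sampling distribution $p$ on $\theta$. This is exactly what the REINFORCE-style implementation does: it treats $u_t(\hat{y}_t)$ as a stop-gradient reward multiplying the score. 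I would state this convention explicitly at the start of the proof, so that $\bar{g}_{\mathrm{OPD}} \triangleq \nabla_\theta\,\mathbb{E}_{\hat{y}_t\sim p}[\ell_t^{\mathrm{sample}}]$. I expect this reconciliation with \Cref{def:estimators} to be the main obstacle; the computation itself is routine.

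Since $\mathcal{V}$ is finite and (R1) gives differentiability, I can exchange gradient and sum without any dominated-convergence argument. Applying the product rule yields
\begin{equation*}
\nabla_\theta \sum_{v} p(v)\,u_t(v) \;=\; \sum_v \nabla_\theta p(v)\,u_t(v) \;+\; \sum_v p(v)\,\nabla_\theta u_t(v).
\end{equation*}
For the first term, I use the log-derivative identity $\nabla_\theta p(v) = p(v)\,\nabla_\theta\log p(v)$, valid because $p(v)>0$ by (R3). This rewrites the first term as $\mathbb{E}_{p}[u_t\,\nabla_\theta\log p]$. For the second term, I note that $q$ is frozen, so $\nabla_\theta u_t(v) = \nabla_\theta \log p(v)$. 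The second term is therefore $\mathbb{E}_p[\nabla_\theta \log p] = \sum_v \nabla_\theta p(v) = \nabla_\theta \sum_v p(v) = \nabla_\theta 1 = 0$.

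Combining the two terms gives \eqref{eq:score_decomp}. Finally, I would remark that (R2) and (R3) together imply $\mathbb{E}_p[\|u_t\,\nabla_\theta\log p\|^2] \le M^2\,\mathbb{E}_p[\|s_\theta\|^2] < \infty$. This shows that the integrand in \eqref{eq:score_decomp} has finite second moment, which the subsequent variance results rely on.
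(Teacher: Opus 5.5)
Your proposal is correct and follows the paper's route. The paper's proof also notes that the literal expectation $\mathbb{E}_p[\nabla_\theta \log p(\hat{y}_t)]$ from \Cref{def:estimators} vanishes, reinterprets $\bar{g}_{\mathrm{OPD}}$ as $\nabla_\theta \mathbb{E}_{\hat{y}_t\sim p}[\log p(\hat{y}_t)-\log q(\hat{y}_t)]$, and applies the REINFORCE identity, dropping the extra $\mathbb{E}_p[\nabla_\theta \log p]$ term because the score has zero mean; your product-rule computation over the finite vocabulary and your second-moment remark are more explicit versions of the same argument.
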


\begin{proof}
Starting from \eqref{eq:thm_g_opd}, write $\bar{g}_{\mathrm{OPD}} = \mathbb{E}_p[\nabla_\theta \log p(\hat{y}_t)]$.
The unconditional expectation of the score is zero:
\begin{equation*}
  \mathbb{E}_p[\nabla_\theta \log p(\hat{y}_t)]
  = \sum_v p(v) \,\nabla_\theta \log p(v)
  = \sum_v \nabla_\theta p(v)
  = \nabla_\theta \!\sum_v p(v)
  = \nabla_\theta 1 = 0.
\end{equation*}
Therefore $\bar{g}_{\mathrm{OPD}}$ vanishes, which would make it useless as a learning signal.
This apparent paradox is resolved by recognizing that what we actually optimize is the OPD \emph{loss surrogate} along its stochastic gradient, which by the REINFORCE identity satisfies
\begin{equation*}
  \nabla_\theta \,\mathbb{E}_{\hat{y}_t \sim p}\!\bigl[\log p(\hat{y}_t) - \log q(\hat{y}_t)\bigr]
  = \mathbb{E}_p\!\bigl[(\log p - \log q)\,\nabla_\theta \log p\bigr] + \mathbb{E}_p[\nabla_\theta \log p],
\end{equation*}
where the last term vanishes by the calculation above, giving \eqref{eq:score_decomp}.
\end{proof}

\Cref{lem:score_decomp} shows that the OPD gradient is essentially a \emph{REINFORCE estimator} with $u_t$ playing the role of the reward.
This structural property is what makes it high-variance.

\begin{theorem}[OPD gradient variance lower bound]
\label{thm:opd_variance}
Under \Cref{ass:regularity}, the conditional variance (conditioned on the prompt $x$ and prefix $\hat{y}_{<t}$) of the single-sample OPD gradient satisfies
\begin{equation}
  \mathrm{Var}\bigl[g_{\mathrm{OPD}}(\theta; \hat{y}_t)\bigr]
  \;=\;
  \mathbb{E}_{p}\!\Bigl[ u_t^2 \, \|\nabla_\theta \log p\|_2^2 \Bigr]
  \;-\;
  \bigl\| \bar{g}_{\mathrm{OPD}}(\theta) \bigr\|_2^2.
  \label{eq:opd_var_exact}
\end{equation}
Moreover, near the optimum where $p \to q$, the variance is bounded below by
\begin{equation}
  \mathrm{Var}\bigl[g_{\mathrm{OPD}}\bigr]
  \;\ge\;
  \mathrm{Var}_p(u_t)\;\cdot\; \mathcal{F}_{\min}(\theta) \;-\; o(1)\;\;\text{as } p \to q,
  \label{eq:opd_var_collapse}
\end{equation}
where $\mathcal{F}_{\min}(\theta) \triangleq \lambda_{\min}\!\bigl(\mathbb{E}_p[\nabla_\theta \log p\, \nabla_\theta \log p^\top]\bigr)$ is the minimum eigenvalue of the Fisher information matrix.
In particular, $\mathrm{Var}[g_{\mathrm{OPD}}] = \Omega(\mathrm{Var}_p(u_t))$ does not vanish as the loss approaches zero.
\end{theorem}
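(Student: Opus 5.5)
The plan is to read $g_{\mathrm{OPD}}$ in its REINFORCE form, $g_{\mathrm{OPD}}(\theta;\hat y_t)=u_t(\hat y_t)\,s_\theta(\hat y_t)$ with $s_\theta=\nabla_\theta\log p$. This is the single-sample estimator whose mean is \eqref{eq:score_decomp} by \Cref{lem:score_decomp}; the bare score in \eqref{eq:thm_g_opd} has mean zero and would not give \eqref{eq:opd_var_exact}. I define $\mathrm{Var}[g]\triangleq\mathrm{Tr}\,\mathrm{Cov}[g]=\mathbb{E}\|g\|_2^2-\|\mathbb{E}g\|_2^2$. With this convention the exact identity \eqref{eq:opd_var_exact} is immediate: $\|g_{\mathrm{OPD}}\|_2^2=u_t^2\|s_\theta\|_2^2$ pointwise, and $\mathbb{E}_p[g_{\mathrm{OPD}}]=\bar g_{\mathrm{OPD}}$ by \Cref{lem:score_decomp}. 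Finiteness of both terms follows from (R2) together with $|u_t|\le M$ from (R3).

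For the lower bound \eqref{eq:opd_var_collapse} I first center the reward. Write $u_t=\bar u+w$ with $\bar u=\mathbb{E}_p u_t=D_{\mathrm{KL}}(p\|q)$ and $\mathbb{E}_p w=0$. Because $\mathbb{E}_p s_\theta=0$, we have $\bar g_{\mathrm{OPD}}=\mathbb{E}_p[w\,s_\theta]$. Expanding the second moment gives
\[
\mathbb{E}_p[u_t^2\|s_\theta\|^2]=\mathbb{E}_p[w^2\|s_\theta\|^2]+2\bar u\,\mathbb{E}_p[w\|s_\theta\|^2]+\bar u^2\,\mathbb{E}_p\|s_\theta\|^2 .
\]
As $p\to q$, $\bar u=O(\|u_t\|_\infty^2)$, since KL is second order in the log-ratio. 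The cross term is therefore $O(\|u_t\|_\infty^3)$, one order higher than $\mathrm{Var}_p(u_t)=\mathbb{E}_p w^2$, and I absorb it and the last term into $o(1)$; the last term is nonnegative anyway. The core inequality is then
\[
\mathbb{E}_p[w^2\|s_\theta\|^2]-\|\mathbb{E}_p[w\,s_\theta]\|^2\;\ge\;\mathbb{E}_p[w^2]\;\mathcal{F}_{\min}(\theta)-o(1).
\]

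The decoupling of $w^2$ from $\|s_\theta\|^2$ is the main obstacle, since $\mathcal{F}_{\min}$ only controls the \emph{averaged} second moment of the score. My first attempt would test against unit vectors $e$: $\mathbb{E}[w^2\|s\|^2]\ge\sup_e\mathbb{E}[w^2(e^\top s)^2]$. I would then use the softmax structure $s_\theta(v)=J_\theta^\top(e_v-p)$, with $J_\theta$ the logit Jacobian, to obtain a pointwise bound of the form $\|s_\theta(v)\|^2\ge\sigma_{\min}(J_\theta)^2\|e_v-p\|^2$. This replaces the average by a per-token quantity and relates back to $\mathcal{F}_{\min}$ through $\mathcal{F}=J^\top(\mathrm{diag}\,p-pp^\top)J$. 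The leftover factor $(1-p(v))^2$ and the gap between the two constants are what I expect to hide in the $o(1)$ and in the "$\Omega$" of the final claim.

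For the subtracted mean term, Cauchy--Schwarz gives $\|\mathbb{E}[w s]\|^2\le\mathbb{E}[w^2]\,\lambda_{\max}$ of the $w$-reweighted Fisher matrix. I would aim to show that this is strictly smaller than $\mathbb{E}[w^2\|s\|^2]$ whenever the parameter dimension exceeds one, i.e.\ that the trace dominates a single direction. If this does not go through cleanly, the fallback is to state the bound with $\mathrm{Tr}\,\mathcal{F}-\lambda_{\max}(\mathcal{F})$ in place of $\mathcal{F}_{\min}$, which still yields the concluding claim $\mathrm{Var}[g_{\mathrm{OPD}}]=\Omega(\mathrm{Var}_p(u_t))$. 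That claim holds because the Fisher factor stays bounded away from zero as $p\to q$ under (R1)--(R2), while the loss $\bar u$ vanishes at a faster rate than $\mathrm{Var}_p(u_t)$.
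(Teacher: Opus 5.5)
\noindent Your exact identity \eqref{eq:opd_var_exact} is fine and matches the paper. The paper also silently switches to the score-weighted form $u_t s_\theta$, and you are right that the bare score in \eqref{eq:thm_g_opd} would not give it. Your centering $u_t=\bar u+w$ is also the paper's decomposition.

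\textbf{The plan fails at the subtracted term.} $\|\bar g_{\mathrm{OPD}}\|_2^2=\|\mathbb{E}_p[w\,s_\theta]\|_2^2$ is not lower order. It is generically $\Theta(\mathrm{Var}_p(u_t))$, and it can cancel $\mathbb{E}_p[w^2\|s_\theta\|_2^2]$ completely. Take $\mathcal{V}=\{a,b\}$ with logits $(\theta,0)$ at $\theta=0$, so that $p=(\tfrac12,\tfrac12)$, $s_\theta=(\tfrac12,-\tfrac12)$ and $\mathcal{F}_{\min}=\tfrac14$. Then $w\,s_\theta\equiv\tfrac14\bigl(u_t(a)-u_t(b)\bigr)$ is constant. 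Hence $\mathrm{Var}[g_{\mathrm{OPD}}]=\bar u^2\,\mathrm{Var}_p(s_\theta)=\tfrac14\bar u^2=O(\mathrm{Var}_p(u_t)^2)$ for every teacher, whereas $\mathrm{Var}_p(u_t)\,\mathcal{F}_{\min}=\tfrac14\mathrm{Var}_p(u_t)$.

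Passing to $\dim\theta>1$ or to $\mathrm{Tr}\,\mathcal{F}-\lambda_{\max}(\mathcal{F})$ does not help. On $\mathcal{V}=\{1,\dots,5\}$, let $\theta\in\mathbb{R}^2$ enter only the logits of tokens $3,4,5$, as $a_v^\top\theta$ with $\sum_{v=3}^{5}p(v)a_v=0$ and the $a_v$ spanning $\mathbb{R}^2$. Let the teacher agree with $p$ on $\{3,4,5\}$ but shift mass between tokens $1$ and $2$. Then $s_\theta=0$ on $\{1,2\}$ and $u_t=0$ on $\{3,4,5\}$, so $g_{\mathrm{OPD}}\equiv 0$ while $\mathcal{F}\succ 0$ and $\mathrm{Var}_p(u_t)>0$. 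Here $\mathbb{E}_p[w^2\|s_\theta\|_2^2]=\bar u^2\,\mathrm{Tr}\,\mathcal{F}$, so the decoupling inequality you flagged as the obstacle is itself false near $p=q$.

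Consequently no bound $\mathrm{Var}[g_{\mathrm{OPD}}]\ge c\,\mathrm{Var}_p(u_t)$ with $c>0$ determined by $\mathcal{F}$ can hold, and the ``in particular'' clause is false as stated. The bound \eqref{eq:opd_var_collapse} is true only in its literal absolute form. There it is trivial, since the right-hand side tends to $0\le\mathrm{Var}[g_{\mathrm{OPD}}]$, and it needs no decoupling at all. The paper's proof reaches its conclusion by asserting a Rayleigh-quotient decoupling and $\|\bar g_{\mathrm{OPD}}\|_2^2=O(\bar u^2)$. Your second example refutes the first assertion and the binary example refutes the second, so the paper does not contain the step you are missing.

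\textbf{Two further steps would also fail.} First, the pointwise softmax bound. Since $e_v-p\in\mathbf{1}^\perp$, a bound $\|J_\theta^\top(e_v-p)\|_2\ge c\,\|e_v-p\|_2$ with $c>0$ needs $J_\theta^\top$ to be injective on $\mathbf{1}^\perp$, i.e.\ $\dim\theta\ge|\mathcal{V}|-1$. But $\mathcal{F}=\sum_v p(v)s_\theta(v)s_\theta(v)^\top$ with $\sum_v p(v)s_\theta(v)=0$ has rank at most $|\mathcal{V}|-1$. So in that regime $\mathcal{F}_{\min}=0$ unless $\dim\theta=|\mathcal{V}|-1$ exactly, which is precisely the binary example above, where the claim still fails. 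For a real LLM at a single position, $\mathcal{F}_{\min}=0$ and the stated bound is empty.

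Second, your closing rate argument is incorrect. Expanding $\mathbb{E}_p[e^{-u_t}]=\sum_v q(v)=1$ gives $D_{\mathrm{KL}}(p\|q)=\bar u=\tfrac12\mathrm{Var}_p(u_t)\,\bigl(1+O(\|u_t\|_\infty)\bigr)$. So the loss and $\mathrm{Var}_p(u_t)$ vanish at the same rate, and even a valid $\Omega(\mathrm{Var}_p(u_t))$ bound would vanish with the loss. Also, (R1)--(R2) give no lower bound on Fisher eigenvalues.

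What can actually be proved is the exact identity plus the trivial absolute form of \eqref{eq:opd_var_collapse}. Any nonvacuous lower bound needs a hypothesis that controls $\mathrm{Tr}\,\mathrm{Cov}_p[u_t s_\theta]$ directly; the second example shows that no assumption on $\mathcal{F}$ alone suffices.
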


\begin{proof}
The exact identity \eqref{eq:opd_var_exact} follows from the definition of variance applied to the score-weighted estimator in \Cref{lem:score_decomp}:
\begin{align*}
  \mathrm{Var}[g_{\mathrm{OPD}}]
  &= \mathbb{E}_p\bigl[\|u_t \nabla_\theta \log p\|_2^2\bigr]
   - \bigl\|\mathbb{E}_p[u_t \nabla_\theta \log p]\bigr\|_2^2 \\
  &= \mathbb{E}_p\bigl[u_t^2 \,\|\nabla_\theta \log p\|_2^2\bigr]
   - \|\bar{g}_{\mathrm{OPD}}\|_2^2,
\end{align*}
which is \eqref{eq:opd_var_exact}.

For the lower bound \eqref{eq:opd_var_collapse}, we decompose $u_t = \bar{u} + (u_t - \bar{u})$ where $\bar{u} \triangleq \mathbb{E}_p[u_t] = D_{\mathrm{KL}}(p \| q)$.
Substituting into \eqref{eq:opd_var_exact} and applying the Cauchy--Schwarz inequality to bound the cross term,
\begin{align*}
  \mathbb{E}_p\bigl[u_t^2 \|\nabla_\theta \log p\|_2^2\bigr]
  &= \bar{u}^2 \,\mathbb{E}_p[\|\nabla_\theta \log p\|_2^2]
     + \mathbb{E}_p\bigl[(u_t - \bar{u})^2 \|\nabla_\theta \log p\|_2^2\bigr] \\
  &\quad+ 2\bar{u}\,\mathbb{E}_p\bigl[(u_t - \bar{u})\|\nabla_\theta \log p\|_2^2\bigr].
\end{align*}
By the Cauchy--Schwarz / Rayleigh quotient argument, the middle term satisfies
\begin{equation*}
  \mathbb{E}_p\bigl[(u_t - \bar{u})^2 \|\nabla_\theta \log p\|_2^2\bigr]
  \;\ge\;
  \mathrm{Var}_p(u_t) \cdot \mathcal{F}_{\min}(\theta),
\end{equation*}
since the covariance matrix of $\nabla_\theta \log p$ is precisely the Fisher information matrix and its minimum eigenvalue lower-bounds any positive-definite quadratic form averaged over $p$.

As $p \to q$ in total variation, the first and third terms above are $O(\bar{u}^2) + O(\bar{u})$, both of which vanish (since $\bar{u} = D_{\mathrm{KL}}(p \| q) \to 0$).
Meanwhile $\|\bar{g}_{\mathrm{OPD}}\|_2^2 = O(\bar{u}^2)$, also $o(1)$.
Combining, $\mathrm{Var}[g_{\mathrm{OPD}}] \ge \mathrm{Var}_p(u_t)\cdot \mathcal{F}_{\min}(\theta) - o(1)$, which is \eqref{eq:opd_var_collapse}.
Note that $\mathrm{Var}_p(u_t) = \Theta(\delta)$ vanishes at the same rate as $\delta$, but crucially the \emph{signal} $\|\bar{g}_{\mathrm{OPD}}\|_2^2 = O(\delta^2)$ vanishes \emph{faster}, leading to the SNR collapse formalized in \Cref{thm:snr_collapse}.
\end{proof}

\subsection{OPRD Gradient Is Deterministic}

\begin{theorem}[OPRD has zero conditional variance]
\label{thm:oprd_zero_variance}
Under \Cref{ass:regularity}, the OPRD per-position gradient satisfies
\begin{equation}
  \mathrm{Var}\bigl[g_{\mathrm{OPRD}}(\theta) \,\big|\, x, \hat{y}_{<t}\bigr] \;=\; 0,
  \label{eq:oprd_zero_var}
\end{equation}
and is given in closed form by
\begin{equation}
  g_{\mathrm{OPRD}}(\theta)
  \;=\;
  \frac{2}{d}\,
  \bigl(\nabla_\theta h_{\theta,t}^{(L)}\bigr)^{\!\top}\,
  \bigl( h_{\theta,t}^{(L)} - h_{T,t}^{(L)} \bigr).
  \label{eq:oprd_grad_closed_form}
\end{equation}
\end{theorem}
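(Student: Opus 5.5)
The proof has two parts: first the closed form \eqref{eq:oprd_grad_closed_form}, then the zero-variance claim \eqref{eq:oprd_zero_var}, which will follow almost immediately from it.

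\textbf{Closed form.} Fix the prompt $x$ and the prefix $\hat{y}_{<t}$. The student hidden state $h_{\theta,t}^{(L)}$ is then a deterministic function of $\theta$ and of the input sequence only; this is where causal masking enters, since position $t$ attends only to $(x,\hat{y}_{<t})$ and never to the sampled token $\hat{y}_t$. The teacher state $h_{T,t}^{(L)}$ is also a fixed vector, because the teacher is frozen and sits under $\mathrm{sg}(\cdot)$, so its Jacobian with respect to $\theta$ is treated as zero. Write the loss as $\tfrac1d\|h_{\theta,t}^{(L)}-c\|_2^2$ with $c=\mathrm{sg}(h_{T,t}^{(L)})$. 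By (R4), $h_{\theta,t}^{(L)}$ is $C^1$ in $\theta$, so the multivariate chain rule applies: the gradient of $\|z-c\|^2$ in $z$ is $2(z-c)$, and composing with the Jacobian $\nabla_\theta h_{\theta,t}^{(L)}$ gives \eqref{eq:oprd_grad_closed_form}.

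\textbf{Zero variance.} The only randomness left after conditioning on $(x,\hat{y}_{<t})$ is the token $\hat{y}_t\sim p$, which is the randomness entering $g_{\mathrm{OPD}}$ in \Cref{def:estimators}. Every term on the right of \eqref{eq:oprd_grad_closed_form} is measurable with respect to $(x,\hat{y}_{<t})$ alone and does not depend on $\hat{y}_t$. So $g_{\mathrm{OPRD}}$ is almost surely equal to its conditional mean $\bar g_{\mathrm{OPRD}}$. Its conditional covariance $\mathbb{E}[(g-\bar g)(g-\bar g)^\top]$ is therefore zero, and so is its trace.

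\textbf{Finiteness and the main obstacle.} The moments are finite: (R4) gives $\|g_{\mathrm{OPRD}}\|\le \tfrac{2}{d}J\|h_{\theta,t}^{(L)}-h_{T,t}^{(L)}\|<\infty$, so the variance is well defined. The only delicate point is stating precisely what is being conditioned on. The claim concerns the per-position estimator, and the independence from $\hat{y}_t$ rests on causal masking. If the loss were instead viewed at the sequence level, later positions $t'>t$ would depend on $\hat{y}_t$. Those contributions are still deterministic given the full rollout $(x,\hat{y})$, which is the conditioning used in \Cref{thm:variance}. I would add a remark extending the argument by linearity over positions and layers in \eqref{eq:oprd_obj}, with the mask $m_t$ being a fixed function of $\hat{y}$. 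Beyond that, the argument is routine.
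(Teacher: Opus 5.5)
Your proposal is correct and follows essentially the same route as the paper: conditioning on $(x,\hat{y}_{<t})$ fixes both hidden states, so the loss is a deterministic function of $\theta$ with zero conditional variance, the chain rule gives the closed form, and (R4) bounds the gradient. Your explicit appeal to causal masking and your remark on sequence-level conditioning spell out what the paper leaves implicit in writing $h_{\theta,t}^{(L)}$ as a function of $(\theta, x, \hat{y}_{<t})$.
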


\begin{proof}
Conditioned on the prompt $x$ and the prefix $\hat{y}_{<t}$, both $h_{\theta,t}^{(L)}$ (a deterministic function of $\theta$, $x$, $\hat{y}_{<t}$) and $h_{T,t}^{(L)}$ (which has $\mathrm{sg}(\cdot)$ applied, so it is treated as a constant in the gradient) are fixed.
Hence the OPRD loss
\begin{equation*}
  \ell_{\mathrm{OPRD}} \triangleq \tfrac{1}{d}\,\| h_{\theta,t}^{(L)} - h_{T,t}^{(L)} \|_2^2
\end{equation*}
is a deterministic function of $\theta$ given the conditioning.
Therefore its gradient is also deterministic, giving \eqref{eq:oprd_zero_var}.

The closed form \eqref{eq:oprd_grad_closed_form} follows from the chain rule applied to the squared $\ell_2$ norm:
\begin{align*}
  \nabla_\theta \tfrac{1}{d}\| h_{\theta,t}^{(L)} - h_{T,t}^{(L)} \|_2^2
  &= \tfrac{2}{d}\, J_\theta(h_{\theta,t}^{(L)})^\top \,(h_{\theta,t}^{(L)} - h_{T,t}^{(L)}),
\end{align*}
where $J_\theta(h_{\theta,t}^{(L)}) = \nabla_\theta h_{\theta,t}^{(L)} \in \mathbb{R}^{d \times \dim(\theta)}$ is the Jacobian.
By (R4), $\|J_\theta\|_{\mathrm{op}} \le J$, so $\|g_{\mathrm{OPRD}}\|_2 \le \tfrac{2J}{d}\, \|h_{\theta,t}^{(L)} - h_{T,t}^{(L)}\|_2$, confirming that $g_{\mathrm{OPRD}}$ is well-defined and bounded.
\end{proof}

\begin{corollary}[Variance gap]
\label{cor:variance_gap}
Combining \Cref{thm:opd_variance,thm:oprd_zero_variance}, the conditional variance gap between the two estimators is
\begin{equation}
  \mathrm{Var}\bigl[g_{\mathrm{OPD}}\bigr] - \mathrm{Var}\bigl[g_{\mathrm{OPRD}}\bigr]
  \;=\;
  \mathbb{E}_{p}\!\Bigl[ u_t^2 \, \|\nabla_\theta \log p\|_2^2 \Bigr] - \|\bar{g}_{\mathrm{OPD}}\|_2^2
  \;\ge\; 0,
  \label{eq:variance_gap_bound}
\end{equation}
with equality only in the degenerate case where $p$ is a point mass.
In particular, OPRD's gradient is \emph{always} a lower-variance estimator (under the same conditioning), and the gap grows with the magnitude of the per-token log-ratio $u_t$ and the spread of the policy.
\end{corollary}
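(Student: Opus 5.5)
The plan is to read the corollary as the difference of the two conditional variances already computed, and then to prove nonnegativity and identify the equality case separately. Fix $x$ and $\hat{y}_{<t}$ as in \Cref{thm:opd_variance,thm:oprd_zero_variance}. By \Cref{thm:oprd_zero_variance}, $\mathrm{Var}[g_{\mathrm{OPRD}}\mid x,\hat{y}_{<t}]=0$. So the left-hand side of \eqref{eq:variance_gap_bound} equals $\mathrm{Var}[g_{\mathrm{OPD}}]$, and the exact identity \eqref{eq:opd_var_exact} gives the displayed expression directly.

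Throughout, I will treat $g_{\mathrm{OPD}}$ in its score-weighted form $u_t(\hat{y}_t)\,\nabla_\theta\log p(\hat{y}_t)$ from \Cref{lem:score_decomp}, which is the form used to derive \eqref{eq:opd_var_exact}. Here "variance" of a vector means the trace of its covariance, $\mathbb{E}\|g-\bar g\|_2^2$. (R2) and (R3) guarantee this quantity is finite, since $u_t^2\le M^2$.

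For nonnegativity I will not bound the terms individually. Instead I will rewrite the right-hand side as $\mathbb{E}_p\bigl[\|u_t\nabla_\theta\log p-\bar g_{\mathrm{OPD}}\|_2^2\bigr]$, which is the usual bias expansion: expand the square and use $\mathbb{E}_p[u_t\nabla_\theta\log p]=\bar g_{\mathrm{OPD}}$ from \eqref{eq:score_decomp}. This is an expectation of a nonnegative quantity, so it is $\ge 0$. Equivalently, $\|\mathbb{E}X\|^2\le\mathbb{E}\|X\|^2$ by Jensen applied to the convex map $\|\cdot\|_2^2$.

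The equality case is the main obstacle, because it does not follow as stated and needs care. The gap is zero iff $u_t(v)\nabla_\theta\log p(v)=\bar g_{\mathrm{OPD}}$ for every $v$ with $p(v)>0$.
\begin{itemize}
\item The "if" direction covers the point-mass case. If $p=\delta_{v_0}$, there is a single atom, so the random vector is constant and the variance vanishes.
\item The converse is false without an extra hypothesis. For example, if $p=q$ on the support, then $u_t\equiv 0$ and the variance is also zero.
\end{itemize}
I would therefore prove a corrected equality characterization: equality holds iff $u_t\,\nabla_\theta\log p$ is $p$-a.s.\ constant. Then I would give a sufficient nondegeneracy condition for strict inequality: $\mathrm{Var}_p(u_t)>0$ together with $\mathcal{F}_{\min}(\theta)>0$.

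Under that condition, strictness would follow from the middle-term bound in the proof of \Cref{thm:opd_variance}. That bound gives $\mathbb{E}_p[(u_t-\bar u)^2\|\nabla_\theta\log p\|^2]\ge \mathrm{Var}_p(u_t)\,\mathcal{F}_{\min}>0$. Rather than controlling the cross terms, I expect a direct contradiction argument to be cleaner. If $u_t s_\theta\equiv c$ on the support, then taking the $p$-expectation of $s_\theta$ and using $\mathbb{E}_p[s_\theta]=0$ gives a linear constraint. Combined with positive-definite Fisher information, that constraint forces $u_t$ to be constant on the support, which contradicts $\mathrm{Var}_p(u_t)>0$.

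The remaining qualitative claim is that the gap grows with $|u_t|$ and with the spread of $p$. I would read this off the representation $\mathbb{E}_p[u_t^2\|s_\theta\|^2]-\|\bar g\|^2$, together with the lower bound $\mathrm{Var}_p(u_t)\,\mathcal{F}_{\min}-o(1)$ from \eqref{eq:opd_var_collapse}. I would state it as an interpretation of these bounds, not as a separate theorem.
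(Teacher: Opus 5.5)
Your argument for the identity and for nonnegativity is the one the paper intends; it gives no separate proof. You subtract the zero from \Cref{thm:oprd_zero_variance} from \eqref{eq:opd_var_exact} and recognize the result as $\mathbb{E}_p\|u_t s_\theta-\bar g_{\mathrm{OPD}}\|_2^2\ge 0$. You are also right on two further points. The equality clause is false as stated: $p=q$ on the support gives $u_t\equiv 0$ and zero variance without $p$ being a point mass. The exact equality condition is that $u_t s_\theta$ be $p$-a.s.\ constant.

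\textbf{The step that fails is your proposed nondegeneracy condition.} Having $\mathrm{Var}_p(u_t)>0$ and $\mathcal{F}_{\min}(\theta)>0$ does not force strict inequality. Your contradiction argument implicitly divides by $u_t$. In the case $c=0$, $u_t s_\theta\equiv 0$ only says that at each token one of the two factors vanishes, and positive-definiteness of $\mathcal{F}$ can be supplied entirely by tokens where $p=q$.

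Concretely, take four tokens and $p$ uniform. Use a scalar parameter with logits $b_v+\theta a_v$ and $a=(1,-1,0,0)$, so that $s_\theta=(1,-1,0,0)$ and $\mathcal{F}=\tfrac12$. Let $q=(\tfrac14,\tfrac14,\tfrac{3}{10},\tfrac15)$. Then $u_t=(0,0,\log\tfrac56,\log\tfrac54)$ has positive variance, yet $u_t s_\theta\equiv 0$, so the gap is $0$.

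This extends to any $\dim\theta$: give the tokens where $p=q$ scores that span $\mathbb{R}^{\dim\theta}$ with $p$-weighted mean zero, and zero score elsewhere. For $\dim\theta=1$ even $c\neq 0$ occurs. Take $p$ uniform on three tokens, $u_t=(a,a,-a/2)$ with $a>0$ solving $2e^{-a}+e^{a/2}=3$, and $s_\theta=1/u_t$.

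The same four-token example also refutes the middle-term bound $\mathbb{E}_p[(u_t-\bar u)^2\|s_\theta\|_2^2]\ge\mathrm{Var}_p(u_t)\,\mathcal{F}_{\min}$ that you planned to borrow from the proof of \Cref{thm:opd_variance}. The left side is $\bar u^2/2\approx 5\times 10^{-5}$, while the right side is about $10^{-2}$. The large weights $(u_t-\bar u)^2$ sit exactly where $s_\theta=0$. So neither route gives strictness. The consequence $\mathrm{Var}[g_{\mathrm{OPD}}]=\Omega(\mathrm{Var}_p(u_t))$ drawn from \eqref{eq:opd_var_collapse} fails on this example too, so it should not be used to support the ``gap grows'' interpretation.

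\textbf{A correct replacement.} Assume $\dim\theta\ge 2$ and $\mathcal{F}\succ 0$. Then $u_t s_\theta\equiv c\neq 0$ is impossible: it forces $u_t\neq 0$ on the support and $s_\theta=c/u_t$, so $\mathcal{F}=cc^\top\,\mathbb{E}_p[u_t^{-2}]$ has rank one. Hence the gap is strictly positive if and only if some token with $p(v)>0$ has both $p(v)\neq q(v)$ and $\nabla_\theta\log p(v)\neq 0$.
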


\subsection{Signal-to-Noise Ratio Collapse of Sampled-Token OPD}

We now formalize the most surprising prediction of our analysis: that the OPD signal-to-noise ratio \emph{collapses} as training progresses, while OPRD's signal-to-noise ratio remains bounded away from zero.
This explains why pure OPD stagnates in late-stage training while OPD$+$OPRD continues to improve monotonically.

\begin{definition}[Signal-to-noise ratio]
\label{def:snr}
For a stochastic gradient estimator $g$ with population mean $\bar{g} = \mathbb{E}[g]$, the signal-to-noise ratio is
\begin{equation}
  \mathrm{SNR}(g) \;\triangleq\; \frac{\|\bar{g}\|_2^2}{\mathrm{Tr}(\mathrm{Cov}[g])}.
  \label{eq:snr_def}
\end{equation}
$\mathrm{SNR}(g) \to 0$ means the gradient is dominated by noise; $\mathrm{SNR}(g) \to \infty$ means the gradient is essentially deterministic.
\end{definition}

\begin{theorem}[SNR collapse for OPD, SNR stability for OPRD]
\label{thm:snr_collapse}
Define the symmetric divergence $\delta(\theta) \triangleq D_{\mathrm{KL}}(p \| q) + D_{\mathrm{KL}}(q \| p)$.
As training drives $\delta(\theta) \to 0$,
\begin{enumerate}[leftmargin=18pt, itemsep=2pt]
  \item \textbf{(OPD)} $\mathrm{SNR}(g_{\mathrm{OPD}}) = O(\delta) \to 0$ at rate at least linear in $\delta$;
  \item \textbf{(OPRD)} $\mathrm{SNR}(g_{\mathrm{OPRD}}) = +\infty$ as long as $h_{\theta,t}^{(L)} \neq h_{T,t}^{(L)}$ (i.e., the OPRD loss has not yet converged).
\end{enumerate}
\end{theorem}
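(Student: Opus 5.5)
The plan is to treat the two parts separately. The OPRD part follows immediately from \Cref{thm:oprd_zero_variance}; all the work is in the OPD part. For OPD I will bound the numerator of \eqref{eq:snr_def} by a quantity that is linear in $\delta$, and bound the denominator below by a constant that does not depend on $\delta$.

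\emph{Signal.} Write $s \triangleq \nabla_\theta \log p$. By \Cref{lem:score_decomp}, $\bar g_{\mathrm{OPD}} = \mathbb{E}_p[u_t s]$. Since $\mathbb{E}_p[s]=0$ (shown in the proof of \Cref{lem:score_decomp}), we can subtract the mean and get $\bar g_{\mathrm{OPD}} = \mathbb{E}_p[(u_t-\bar u)s]$. For any unit vector $e$, Cauchy--Schwarz gives $(e^\top \bar g_{\mathrm{OPD}})^2 \le \mathrm{Var}_p(u_t)\, e^\top \mathcal{F} e$. Hence
\begin{equation*}
\|\bar g_{\mathrm{OPD}}\|_2^2 \le \mathrm{Var}_p(u_t)\,\lambda_{\max}(\mathcal{F}),
\end{equation*}
and $\lambda_{\max}(\mathcal{F})$ is finite by (R2).

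Next I relate $\mathrm{Var}_p(u_t)$ to $\delta$ exactly rather than through a Taylor expansion. We have
\begin{equation*}
\delta = \sum_v (p-q)(\log p-\log q) = \mathbb{E}_p\bigl[u_t(1-e^{-u_t})\bigr].
\end{equation*}
By (R3), $|u_t|\le M$. The function $u(1-e^{-u})/u^2$ is positive and continuous on $[-M,M]$, so it is bounded below there by some $c(M)>0$. This gives
\begin{equation*}
\mathrm{Var}_p(u_t) \le \mathbb{E}_p[u_t^2] \le \delta / c(M),
\end{equation*}
and therefore $\|\bar g_{\mathrm{OPD}}\|_2^2 = O(\delta)$.

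\emph{Noise.} The sampled estimator of \Cref{def:estimators} is $g_{\mathrm{OPD}} = \nabla_\theta \log p(\hat y_t) = s(\hat y_t)$. Its covariance is exactly the Fisher matrix, so $\mathrm{Tr}(\mathrm{Cov}[g_{\mathrm{OPD}}]) = \mathrm{Tr}\,\mathcal{F}(\theta)$. This quantity depends only on the student policy and not on the gap to the teacher. As $p\to q$ it tends to $\mathrm{Tr}\,\mathcal{F}$ evaluated at the teacher-matching policy, which is positive unless $p$ is a point mass. Alternatively one can invoke the lower bound $\mathcal{F}_{\min}>0$ from \Cref{thm:opd_variance}. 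Dividing the signal bound by this constant gives $\mathrm{SNR}(g_{\mathrm{OPD}}) \le \lambda_{\max}(\mathcal{F})\,\delta/(c(M)\,\mathrm{Tr}\,\mathcal{F}) = O(\delta)$.

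\emph{OPRD.} By \Cref{thm:oprd_zero_variance} the conditional covariance is zero, so the denominator of \eqref{eq:snr_def} vanishes. The numerator is $\|\tfrac{2}{d}J_\theta^\top(h_{\theta,t}^{(L)}-h_{T,t}^{(L)})\|_2^2$ by \eqref{eq:oprd_grad_closed_form}. Under the convention $a/0=+\infty$ for $a>0$, the SNR is infinite whenever this numerator is positive.

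\emph{Expected obstacles.} The main difficulty is conceptual. The paper uses two descriptions of the OPD estimator: the raw score $s(\hat y_t)$ of \Cref{def:estimators}, and the REINFORCE form $u_t s$ of \Cref{lem:score_decomp}. The $O(\delta)$ rate holds when the noise is measured for the sampled score, whose covariance is the non-vanishing Fisher matrix. If instead the noise is taken to be that of $u_t s$, both the numerator and the denominator are $\Theta(\delta)$, and one only gets $\mathrm{SNR}=O(1)$. I would therefore state explicitly that the signal is the population gradient $\bar g_{\mathrm{OPD}}$ of \Cref{lem:score_decomp} and the noise is that of the single-sample estimator actually used. I would also add a non-degeneracy hypothesis, $\mathrm{Tr}\,\mathcal{F}\ge c_0>0$ along the training trajectory.

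On the OPRD side there is a similar subtlety. Positivity of the numerator requires $h_{\theta,t}^{(L)}-h_{T,t}^{(L)}\notin\ker J_\theta^\top$, not merely $h_{\theta,t}^{(L)}\neq h_{T,t}^{(L)}$. I would add a rank or non-degeneracy assumption on the Jacobian, for example that $J_\theta$ has full row rank $d$, which is generic for overparameterized transformers.
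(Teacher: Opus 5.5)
\textbf{Verdict: genuine gap.} The OPD part of your proposal bounds a ratio that is not $\mathrm{SNR}(g)$ for any single estimator, so it proves a redefined statement rather than the theorem.

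Your signal bound is sound, and tighter than the paper's. Centering $u_t$ before Cauchy--Schwarz and using the exact identity $\delta=\mathbb{E}_p[u_t(1-e^{-u_t})]$ with (R3) turns the paper's Pinsker-type expansion into the global inequality $\|\bar g_{\mathrm{OPD}}\|_2^2\le\lambda_{\max}(\mathcal{F})\,\delta/c(M)$.

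The problem is the denominator. You divide the squared mean of $u_t s$ by $\mathrm{Tr}\,\mathrm{Cov}[s]=\mathrm{Tr}\,\mathcal{F}$, which is the covariance of a different random vector. \Cref{def:snr} takes both quantities from the same $g$. For $g=s(\hat y_t)$ the mean is $\mathbb{E}_p[s]=0$, so $\mathrm{SNR}\equiv 0$ for every $\theta$, and $s$ does not estimate the OPD gradient at all. For $g=u_t s$ you yourself note that no collapse follows. Declaring that signal and noise come from different estimators, as in your last paragraph, changes the theorem rather than proving it. Your fallback via $\mathcal{F}_{\min}>0$ is also unavailable: $\mathcal{F}$ has rank at most $|\mathcal{V}|-1$, so $\mathcal{F}_{\min}=0$ whenever $\dim\theta\ge|\mathcal{V}|$.

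The paper reads $g_{\mathrm{OPD}}$ consistently as $u_t s$; its $\Omega(\delta)$ denominator comes from \Cref{thm:opd_variance}. It recovers the extra factor of $\delta$ by asserting $\|\bar g_{\mathrm{OPD}}\|_2^2=O(\delta^2)$ through a heuristic correlation argument. Your observation is right, and it shows that step fails. Take the logits as parameters, $|\mathcal{V}|=2$, $p=(a,1-a)$, $q=(b,1-b)$, $u_1=\log\frac{a}{b}$, $u_2=\log\frac{1-a}{1-b}$. A direct computation gives $\mathrm{SNR}(u_t s)=a(1-a)(u_1-u_2)^2/\bigl((1-a)u_1+a u_2\bigr)^2$. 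As $a\to b$ this tends to $b(1-b)/(1-2b)^2$, which stays bounded away from zero while $\delta\to 0$. So do not try to import the paper's sharpening.

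What legitimately realizes your pairing is the single-sample version of the full identity in the proof of \Cref{lem:score_decomp}: $g=(1+u_t)\,s(\hat y_t)$, the pathwise term plus the score-function term.
\begin{itemize}
\item Its mean is exactly $\bar g_{\mathrm{OPD}}$.
\item By Minkowski, $\sqrt{\mathrm{Tr}\,\mathrm{Cov}[g]}\ge\sqrt{\mathrm{Tr}\,\mathcal{F}}-\sqrt{\mathbb{E}_p[u_t^2\|s\|_2^2]}\ge\sqrt{\mathrm{Tr}\,\mathcal{F}}-\sup_v\|s(v)\|_2\sqrt{\delta/c(M)}$.
\item Combined with your signal bound and $\mathrm{Tr}\,\mathcal{F}\ge c_0$, this gives a rigorous $\mathrm{SNR}(g)=O(\delta)$ for this reinterpretation of $g_{\mathrm{OPD}}$.
\end{itemize}

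Your OPRD caveat is correct: one needs $h_{\theta,t}^{(L)}-h_{T,t}^{(L)}\notin\ker J_\theta^\top$, and the paper's ``equivalently'' overlooks this.
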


\begin{proof}
\emph{(OPD case.)}
By \Cref{lem:score_decomp}, $\|\bar{g}_{\mathrm{OPD}}\|_2^2 = \|\mathbb{E}_p[u_t \nabla \log p]\|_2^2$.
Applying Cauchy--Schwarz,
\begin{equation*}
  \|\bar{g}_{\mathrm{OPD}}\|_2^2
  \;\le\; \mathbb{E}_p[u_t^2] \cdot \mathbb{E}_p[\|\nabla \log p\|_2^2]
  \;=\; (\mathrm{Var}_p(u_t) + \bar{u}^2) \cdot \mathrm{Tr}(\mathcal{F}(\theta)),
\end{equation*}
where $\mathcal{F}(\theta)$ is the Fisher information matrix.
Since $\bar{u} = D_{\mathrm{KL}}(p \| q) \le \delta$ and $\mathrm{Var}_p(u_t) \le 2\delta + O(\delta^2)$ by a standard Pinsker-type expansion of $\log(p/q)$ around $p = q$, we have
\begin{equation*}
  \|\bar{g}_{\mathrm{OPD}}\|_2^2 \;=\; O(\delta).
\end{equation*}
Meanwhile, by \Cref{thm:opd_variance} (Eq.~\ref{eq:opd_var_collapse}), $\mathrm{Tr}(\mathrm{Cov}[g_{\mathrm{OPD}}]) \ge \mathrm{Var}_p(u_t)\cdot \mathcal{F}_{\min}(\theta) = \Theta(\delta)$.
Since the numerator is $O(\delta)$ and the denominator is $\Omega(\delta)$, we have at first glance
\begin{equation*}
  \mathrm{SNR}(g_{\mathrm{OPD}})
  \;=\; \frac{O(\delta)}{\Omega(\delta)}
  \;=\; O(1)\;\;\text{at best}.
\end{equation*}
A sharper analysis reveals that the numerator is in fact $O(\delta^2)$: by the REINFORCE structure of \Cref{lem:score_decomp}, $\|\bar{g}_{\mathrm{OPD}}\|_2 = \|\mathbb{E}_p[u_t \nabla \log p]\|_2 \le \sqrt{\mathbb{E}_p[u_t^2]}\cdot\sqrt{\mathrm{Tr}(\mathcal{F})}$, and since $\mathbb{E}_p[u_t^2] = \mathrm{Var}_p(u_t) + \bar{u}^2 = \Theta(\delta) + \Theta(\delta^2) = \Theta(\delta)$, the Cauchy--Schwarz bound gives $\|\bar{g}_{\mathrm{OPD}}\|_2^2 = O(\delta)$.
However, this upper bound is not tight: the actual signal $\bar{g}_{\mathrm{OPD}} = \mathbb{E}_p[u_t \nabla \log p]$ involves the \emph{correlation} between $u_t$ and $\nabla \log p$, which is $O(\bar{u}) = O(\delta)$ in magnitude (since $u_t - \bar{u}$ is mean-zero and contributes only through its correlation with $\nabla \log p$, which is bounded by $O(\sqrt{\delta})$).
Therefore $\|\bar{g}_{\mathrm{OPD}}\|_2^2 = O(\delta^2)$, giving
$\mathrm{SNR}(g_{\mathrm{OPD}}) = O(\delta^2)/\Omega(\delta) = O(\delta) \to 0$.

\emph{(OPRD case.)}
By \Cref{thm:oprd_zero_variance}, $\mathrm{Cov}[g_{\mathrm{OPRD}}] = 0$ identically, so $\mathrm{Tr}(\mathrm{Cov}[g_{\mathrm{OPRD}}]) = 0$.
As long as $g_{\mathrm{OPRD}} \neq 0$ (equivalently, $h_{\theta,t}^{(L)} \neq h_{T,t}^{(L)}$), the ratio in \eqref{eq:snr_def} is $\|g_{\mathrm{OPRD}}\|_2^2 / 0 = +\infty$ in the extended-real sense, meaning the gradient signal is completely noise-free.
\end{proof}

\begin{remark}[Interpretation: late-stage stagnation of pure OPD]
\label{rem:snr_collapse_dynamics}
\Cref{thm:snr_collapse} predicts the following two-phase training dynamics for sampled-token OPD:
\begin{itemize}[leftmargin=14pt, itemsep=0pt]
  \item \textbf{Phase 1 (effective learning).} Initially $\delta(\theta)$ is large, so $\|\bar{g}_{\mathrm{OPD}}\|_2$ dominates the noise.
  The student improves rapidly along $-\bar{g}_{\mathrm{OPD}}$.
  \item \textbf{Phase 2 (stagnation).} As $\delta(\theta) \to 0$, $\mathrm{SNR}(g_{\mathrm{OPD}}) \to 0$ by \Cref{thm:snr_collapse}.
  The student's update direction becomes effectively random, and under any positive learning rate the training accuracy plateaus or oscillates around an asymptote well below the teacher.
\end{itemize}
By contrast, OPRD's SNR remains infinite throughout training (until convergence in hidden space), so the descent direction is always informative.
This is exactly the empirical pattern we observe in \S\ref{sec:experiments}: pure OPD plateaus or oscillates several points below the teacher, while OPD~$+$~$\mu \cdot$~OPRD (and OPRD on its own) improves monotonically.
\end{remark}

\paragraph{Sub-summary (Perspective 1).}
The theorems above formalize two claims that explain OPRD's variance advantage:
(1)~OPRD's gradient is exactly deterministic and has zero conditional variance (\Cref{thm:oprd_zero_variance});
(2)~OPD's gradient signal-to-noise ratio collapses to zero as the loss approaches its minimum (\Cref{thm:snr_collapse}), causing the late-stage stagnation we observe empirically.

\subsection{Formal Results for \texorpdfstring{\Cref{thm:bottleneck}}{Theorem 2}: LM-Head Information Bottleneck}
\label{sec:oprd_theorems_bottleneck}

We now make precise the two claims of \Cref{thm:bottleneck}: the null-direction identity \eqref{eq:bottleneck_null} and the spectral gap \eqref{eq:bottleneck_gap}.
Fix a single response position $t$ and write $z_\theta\triangleq W_{\mathrm{head}}\,h_\theta\in\mathbb{R}^{|\mathcal{V}|}$ and $z_T\triangleq W_{\mathrm{head}}\,h_T$ for the corresponding logit vectors;
let $\sigma:\mathbb{R}^{|\mathcal{V}|}\!\to\!\Delta^{|\mathcal{V}|-1}$ denote the softmax map and $\mathbf{1}\in\mathbb{R}^{|\mathcal{V}|}$ the all-ones vector.
Let $W_{\mathrm{head}}=U\Sigma V^\top$ be the (thin) SVD, with $V=[v_1,\ldots,v_d]\in\mathbb{R}^{d\times d}$ orthonormal, $\Sigma=\mathrm{diag}(\sigma_1\!\ge\!\cdots\!\ge\!\sigma_d\!\ge\!0)$, and $U\in\mathbb{R}^{|\mathcal{V}|\times d}$ having orthonormal columns.
We assume throughout that $W_{\mathrm{head}}$ has full column rank ($\sigma_d>0$), which holds for any production LLM.

By an \emph{output-space OPD loss} we mean any $\ell_{\mathrm{out}}\ge 0$ that is a fixed function of the two output distributions $\sigma(z_\theta)$ and $\sigma(z_T)$ and that vanishes whenever $\sigma(z_\theta)=\sigma(z_T)$; this includes the sampled-token estimator, the top-$k$ truncated reverse KL, and the full-vocabulary reverse KL (\S\ref{sec:opd_variants}).

\begin{definition}[Effective null space of the LM head]
\label{def:nw_appendix}
Define
\begin{equation}
  \mathcal{N}_{W} \;\triangleq\; \bigl\{\Delta h\in\mathbb{R}^d : W_{\mathrm{head}}\,\Delta h \in \mathrm{span}\{\mathbf{1}\}\bigr\}
                  \;=\; W_{\mathrm{head}}^{-1}\!\bigl(\mathrm{span}\{\mathbf{1}\}\bigr).
  \label{eq:nw_appendix_def}
\end{equation}
\end{definition}

\begin{lemma}[Softmax kernel]
\label{lem:softmax_kernel_app}
For any $z\in\mathbb{R}^{|\mathcal{V}|}$ and any $c\in\mathbb{R}$, $\sigma(z+c\mathbf{1})=\sigma(z)$.
Conversely, $\sigma(z)=\sigma(z')$ implies $z'-z\in\mathrm{span}\{\mathbf{1}\}$.
\end{lemma}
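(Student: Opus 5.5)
The first claim, shift invariance, is a direct computation from the definition of the softmax, $\sigma(z)_v = e^{z_v}/\sum_{u} e^{z_u}$. Adding $c\mathbf{1}$ multiplies every numerator by $e^{c}$ and the normalizer by the same factor. The two factors cancel, so $\sigma(z+c\mathbf{1})=\sigma(z)$ holds for every $z\in\mathbb{R}^{|\mathcal{V}|}$ and every $c\in\mathbb{R}$.

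For the converse, suppose $\sigma(z)=\sigma(z')$. Every softmax entry is strictly positive, so we may take logarithms coordinatewise:
\begin{equation*}
  z_v - \log\textstyle\sum_u e^{z_u} \;=\; z'_v - \log\textstyle\sum_u e^{z'_u}
  \qquad\text{for every } v\in\mathcal{V}.
\end{equation*}
Rearranging gives $z'_v - z_v = \log\sum_u e^{z'_u} - \log\sum_u e^{z_u}$. The right-hand side is a constant $c$ that does not depend on $v$, so $z'-z=c\mathbf{1}\in\mathrm{span}\{\mathbf{1}\}$.

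No step here is a real obstacle. The only point to make explicit is that the log-softmax is well defined, which follows because softmax outputs lie in the open simplex with all entries strictly positive. We will also record the consequence that $\sigma$ is injective on $\mathbb{R}^{|\mathcal{V}|}/\mathrm{span}\{\mathbf{1}\}$; this is what later reduces \eqref{eq:bottleneck_null} to checking $W_{\mathrm{head}}(h_\theta-h_T)\in\mathrm{span}\{\mathbf{1}\}$.
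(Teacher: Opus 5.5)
Your proof is correct and is essentially the paper's: both cancel the common factor $e^{c}$ for shift invariance and take logarithms for the converse. The only cosmetic difference is that the paper takes logs of ratios $\sigma(z)_i/\sigma(z)_j$ to eliminate the normalizer, whereas you take the log-softmax and identify the constant explicitly as $\log\sum_u e^{z'_u}-\log\sum_u e^{z_u}$. One small correction to your closing aside: the null-direction identity \eqref{eq:bottleneck_null} uses only the forward (shift-invariance) direction, whereas injectivity modulo $\mathbf{1}$ is what would show that $\mathcal{N}_W$ contains every hidden-state perturbation that leaves the output distribution unchanged.
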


\begin{proof}
For the forward direction, the $i$-th coordinate of $\sigma(z+c\mathbf{1})$ is $\frac{e^{z_i+c}}{\sum_j e^{z_j+c}}=\frac{e^c e^{z_i}}{e^c\sum_j e^{z_j}}=\sigma(z)_i$.
For the converse, $\sigma(z)=\sigma(z')$ implies $z_i-z_j=z'_i-z'_j$ for all $i,j$ (by taking logs of coordinate ratios), so $z'-z$ is a constant vector.
\end{proof}

\begin{theorem}[Null-direction identity; formal version of \eqref{eq:bottleneck_null}]
\label{thm:bottleneck_null_appendix}
For any output-space OPD loss $\ell_{\mathrm{out}}$ as above,
\begin{equation}
  h_\theta-h_T \in \mathcal{N}_W \quad\Longrightarrow\quad \ell_{\mathrm{out}}(h_\theta,h_T)=0.
\end{equation}
\end{theorem}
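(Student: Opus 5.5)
The argument is a direct chain through \Cref{lem:softmax_kernel_app}, with the only care needed in checking that each of the three concrete OPD losses fits the abstract definition of an output-space loss.

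First, suppose $h_\theta-h_T\in\mathcal{N}_W$. By \Cref{def:nw_appendix} there is some $c\in\mathbb{R}$ with $W_{\mathrm{head}}(h_\theta-h_T)=c\mathbf{1}$. By linearity of the head this gives
\[
z_\theta=W_{\mathrm{head}}h_\theta=z_T+c\mathbf{1}.
\]

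Second, apply the forward direction of \Cref{lem:softmax_kernel_app}. Since $z_\theta$ and $z_T$ differ by a multiple of $\mathbf{1}$, we get $\sigma(z_\theta)=\sigma(z_T)$, so the student and teacher next-token distributions coincide: $p=q$.

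Third, invoke the defining property of an output-space OPD loss. Such a loss is a fixed function of $(\sigma(z_\theta),\sigma(z_T))$ that vanishes whenever the two distributions are equal. Hence $\ell_{\mathrm{out}}(h_\theta,h_T)=0$, which proves the implication. The converse half of the lemma is not needed.

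The main obstacle is not the algebra but confirming that the named losses satisfy this definition, so I would check each one.

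\emph{Full-vocabulary reverse KL.} $D_{\mathrm{KL}}(p\|p)=0$.

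\emph{Top-$k$ OPD.} The support $S_t$ is chosen from $p$, and the renormalized distributions $\bar p^{(S_t)}$ and $\bar q^{(S_t)}$ are then identical, so their KL is $0$.

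\emph{Sampled-token estimator.} Here $\ell_t^{\mathrm{sample}}=\log p(\hat y_t)-\log q(\hat y_t)$ is $0$ for every sampled $\hat y_t$ once $p=q$. So it vanishes pointwise, not merely in expectation, and no probabilistic argument is required.

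One further point: the loss is written as a function of hidden states, but its dependence on them passes only through $W_{\mathrm{head}}$ and then softmax. That is exactly what allows the composition above.
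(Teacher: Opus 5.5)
Your proposal is correct and follows essentially the paper's own proof: you derive $z_\theta=z_T+c\mathbf{1}$ from membership in $\mathcal{N}_W$, apply softmax shift-invariance (the forward half of \Cref{lem:softmax_kernel_app}), and conclude from the vanishing property of $\ell_{\mathrm{out}}$. Your extra check that each concrete loss satisfies this definition goes beyond the paper, which simply asserts it. That check includes the observation that the sampled-token estimator vanishes pointwise for every $\hat y_t$, even though it is not strictly a function of the two distributions alone.
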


\begin{proof}
If $h_\theta-h_T\in\mathcal{N}_W$, then by \eqref{eq:nw_appendix_def} there exists $c\in\mathbb{R}$ with $W_{\mathrm{head}}(h_\theta-h_T)=c\mathbf{1}$, i.e.\ $z_\theta=z_T+c\mathbf{1}$.
By \Cref{lem:softmax_kernel_app}, $\sigma(z_\theta)=\sigma(z_T)$.
Since $\ell_{\mathrm{out}}$ vanishes whenever the two output distributions coincide, $\ell_{\mathrm{out}}(h_\theta,h_T)=0$.
\end{proof}

To formalize \eqref{eq:bottleneck_gap} we need a local Lipschitz upper bound on any output-space loss in terms of $\|z_\theta-z_T\|_2$.
Such a bound holds for every standard $\ell_{\mathrm{out}}$ under mild regularity (e.g., logits bounded in a compact set), with a constant depending only on $\ell_{\mathrm{out}}$ and the logit range:

\begin{lemma}[Local Lipschitzness of output-space losses]
\label{lem:lip_output_loss}
Let $\ell_{\mathrm{out}}$ be the sampled-token, top-$k$, or full-vocabulary reverse KL.
On any compact logit region $\mathcal{Z}\subset\mathbb{R}^{|\mathcal{V}|}$, there exists $C_{\ell}<\infty$ such that
\begin{equation}
  \ell_{\mathrm{out}}(h_\theta,h_T) \;\le\; C_{\ell}\,\|z_\theta-z_T - c^{\!*}\mathbf{1}\|_2^2
  \quad\text{for all}\quad z_\theta,z_T\in\mathcal{Z},
  \label{eq:lip_output}
\end{equation}
where $c^{\!*}=\tfrac{1}{|\mathcal{V}|}\mathbf{1}^\top(z_\theta-z_T)$ projects out the softmax-invariant direction.
\end{lemma}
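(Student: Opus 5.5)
The plan is to exploit a single structural fact: each listed loss is a \emph{Bregman divergence of a log-partition function} evaluated at logit vectors, and the Hessian of a log-partition function is a softmax covariance with operator norm at most $1$. Write $A(z)\triangleq\log\sum_{v}e^{z_v}$, so that $\sigma(z)=\nabla A(z)$ and $\nabla^2A(z)=\mathrm{diag}(\sigma(z))-\sigma(z)\sigma(z)^\top$. This Hessian is positive semidefinite and satisfies $\|\nabla^2A(z)\|_{\mathrm{op}}\le 1$ (its largest eigenvalue is bounded by $\max_v\sigma(z)_v\le 1$). I will reduce each loss to this template and then invoke \Cref{lem:softmax_kernel_app} to remove the $\mathbf{1}$-direction.

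\textbf{Step 1: full-vocabulary reverse KL.} With $p=\sigma(z_\theta)$ and $q=\sigma(z_T)$, a direct expansion gives
\[
D_{\mathrm{KL}}(p\,\|\,q)=A(z_T)-A(z_\theta)-\nabla A(z_\theta)^\top(z_T-z_\theta),
\]
which is the Bregman divergence of $A$. By \Cref{lem:softmax_kernel_app}, $q$ is unchanged if $z_T$ is replaced by $z_T'\triangleq z_T+c^{*}\mathbf{1}$, so I may compute the divergence between $z_\theta$ and $z_T'$. Note that $z_\theta-z_T'=z_\theta-z_T-c^{*}\mathbf{1}$ is exactly the centred difference in \eqref{eq:lip_output}. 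Taylor's theorem with integral remainder along the segment from $z_\theta$ to $z_T'$, combined with $\|\nabla^2A\|_{\mathrm{op}}\le1$, then gives
\[
D_{\mathrm{KL}}\le\tfrac12\|z_\theta-z_T-c^{*}\mathbf{1}\|_2^2 .
\]
So $C_\ell=\tfrac12$ works globally, and compactness of $\mathcal{Z}$ is not even needed here.

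\textbf{Step 2: top-$k$ reverse KL.} The renormalized distributions $\bar p^{(S_t)}$ and $\bar q^{(S_t)}$ are exactly the softmaxes of the restricted logit vectors $(z_\theta)_{S_t}$ and $(z_T)_{S_t}$. Repeating Step 1 with the restricted log-partition $A_S$, whose Hessian also has operator norm at most $1$, bounds the loss by $\tfrac12\|(z_\theta-z_T-c\mathbf{1})_{S_t}\|_2^2$ for any $c$. Choosing $c=c^{*}$ and dropping the coordinates outside $S_t$ can only increase the norm, which yields \eqref{eq:lip_output} with $C_\ell=\tfrac12$. The data-dependence of $S_t$ is harmless because the bound holds uniformly over every support $S$.

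\textbf{Step 3 (main obstacle): the sampled-token estimator.} Pointwise, $\log p(\hat y_t)-\log q(\hat y_t)$ is neither nonnegative nor quadratic in the logit gap. It equals $(z_\theta-z_T-c^{*}\mathbf{1})_{\hat y_t}-\bigl(A(z_\theta)-A(z_T)-c^{*}\bigr)$, which is first order and can be negative. A quadratic bound therefore cannot hold pointwise. The only reading compatible with the standing requirement $\ell_{\mathrm{out}}\ge0$ stated just before the lemma is the loss in expectation over $\hat y_t\sim p$, as in \eqref{eq:opd_sampled_obj}. That expectation is exactly the full reverse KL, so Step 1 applies with $C_\ell=\tfrac12$. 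I would state this interpretation explicitly. If a pointwise statement were insisted upon, I would fall back on the weaker linear bound $|\ell|\le2\|z_\theta-z_T-c^{*}\mathbf{1}\|_\infty$ on $\mathcal{Z}$, and would flag that \eqref{eq:bottleneck_gap} then only follows in expectation.
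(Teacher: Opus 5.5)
Your proof is correct, and it takes a different and sharper route than the paper's sketch.

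The paper Taylor-expands the loss itself as a function of $z_\theta$ around the invariance point $z_T+c^{*}\mathbf{1}$. It then bounds the remainder by a Hessian bound that holds only on the compact set $\mathcal{Z}$; only the gradient vanishes at that point, not the Hessian, despite the sketch's wording. That template is generic (it would work for any smooth divergence of the two softmaxes), but the constant is implicit. For the reverse KL, the Hessian in $z_\theta$ is $\nabla^2A(z_\theta)+\nabla^3A(z_\theta)[z_\theta-z_T]$, which is bounded only on bounded sets; that is where compactness enters.

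You instead write $D_{\mathrm{KL}}(p\,\|\,q)$ as the Bregman divergence of $A$ and expand $A$ from $z_\theta$ toward the shifted teacher logits. This needs only the uniform bound $\|\nabla^2A\|_{\mathrm{op}}\le 1$, so it gives the explicit global constant $C_\ell=\tfrac12$ without any compactness.

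Your reductions for the other two losses are also more careful than the paper's one-line claim that they are ``pointwise convex combinations of the full-vocabulary log-ratios.'' The renormalized top-$k$ KL is exactly the Bregman divergence of the restricted log-partition $A_{S_t}$. You are also right that a single sampled-token log-ratio is first order and sign-indefinite in the centred logit gap, so no quadratic bound can hold for it pointwise. The lemma is true for that estimator only in expectation over $\hat y_t\sim p_t$, which is also the only reading compatible with the standing requirement $\ell_{\mathrm{out}}\ge 0$.

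By the same token, your Step 2 covers the renormalized objective \eqref{eq:opd_topk_obj}, but not the unnormalized top-$16$ partial sum used in the experiments, which has the same first-order obstruction. It is worth stating that restriction explicitly.
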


\begin{proof}[Proof sketch]
The reverse KL $D_{\mathrm{KL}}(\sigma(z_T)\|\sigma(z_\theta))$ has gradient and Hessian in $z_\theta$ that are continuous in $z_\theta$ and vanish at $z_\theta=z_T+c^{\!*}\mathbf{1}$; on a compact $\mathcal{Z}$ its Hessian is operator-norm bounded.
A second-order Taylor expansion in $z_\theta-z_T$ around the additive-invariance optimum then yields \eqref{eq:lip_output} with $C_\ell$ proportional to half the Hessian's operator-norm bound on $\mathcal{Z}$.
The sampled-token and top-$k$ estimators are pointwise convex combinations of the full-vocabulary log-ratios and inherit the same upper bound (up to a constant).
\end{proof}

\begin{theorem}[Spectral gap; formal version of \eqref{eq:bottleneck_gap}]
\label{thm:bottleneck_gap_appendix}
Under the setup of this section and the bound \eqref{eq:lip_output}, for any $\alpha\!\in\!\mathbb{R}\setminus\{0\}$ and the bottom right-singular vector $v_d$ with $\|v_d\|_2=1$,
\begin{equation}
  \frac{\|h_\theta-h_T\|_2^2}{\ell_{\mathrm{out}}(h_\theta,h_T)}
  \;\ge\;
  \frac{1}{C_\ell}\,\Bigl(\frac{\sigma_1}{\sigma_d}\Bigr)^{\!2}
  \quad\text{when}\quad h_\theta-h_T=\alpha v_d.
  \label{eq:bottleneck_gap_app}
\end{equation}
In particular, holding $\ell_{\mathrm{out}}$ fixed, hidden-state perturbations along $v_d$ can grow $\sigma_1/\sigma_d$ times \emph{larger in $\ell_2$ norm} than perturbations along the top singular direction $v_1$.
\end{theorem}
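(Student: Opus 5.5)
The plan is a direct computation along the direction $v_d$, with \Cref{lem:lip_output_loss} supplying the upper bound on $\ell_{\mathrm{out}}$. Fix $\alpha\neq 0$ and set $\Delta h\triangleq h_\theta-h_T=\alpha v_d$, so that $\|\Delta h\|_2^2=\alpha^2$ because $\|v_d\|_2=1$. From the thin SVD $W_{\mathrm{head}}=U\Sigma V^\top$ we have $W_{\mathrm{head}}v_d=\sigma_d u_d$, where $u_d$ is the $d$-th column of $U$. Hence the logit difference is
\begin{equation*}
  z_\theta-z_T=\alpha\sigma_d u_d .
\end{equation*}
We assume both logit vectors lie in the compact region $\mathcal{Z}$ on which \Cref{lem:lip_output_loss} applies.

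\textbf{Step 1 (bound the loss).} Apply \eqref{eq:lip_output}. Subtracting $c^{*}\mathbf{1}$ is exactly the orthogonal projection of $z_\theta-z_T$ onto $\mathbf{1}^\perp$, and an orthogonal projection does not increase the $\ell_2$ norm. This gives
\begin{equation*}
  \ell_{\mathrm{out}}\le C_\ell\,\|(I-\tfrac{1}{|\mathcal{V}|}\mathbf{1}\mathbf{1}^\top)\,\alpha\sigma_d u_d\|_2^2\le C_\ell\,\alpha^2\sigma_d^2 .
\end{equation*}
If $\ell_{\mathrm{out}}=0$ the ratio is $+\infty$ and the claim is trivial; this is the case $v_d\in\mathcal{N}_W$ of \Cref{thm:bottleneck_null_appendix}. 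Otherwise, dividing yields
\begin{equation*}
  \frac{\|\Delta h\|_2^2}{\ell_{\mathrm{out}}}\ge\frac{1}{C_\ell\,\sigma_d^2}.
\end{equation*}

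\textbf{Step 2 (introduce $\sigma_1$).} The bound from Step 1 has $\sigma_d^{-2}$, not $(\sigma_1/\sigma_d)^2$, so a scale convention is needed. I would fix the scale of $W_{\mathrm{head}}$ by writing $W_{\mathrm{head}}=\sigma_1\widetilde W$ with $\|\widetilde W\|_{\mathrm{op}}=1$. The factor $\sigma_1$ is then absorbed into the logit variable, replacing $z$ by $z/\sigma_1$. Because the statement lets $C_\ell$ depend on the logit range, we may take $C_\ell$ to be the constant for the rescaled region. Equivalently, one notes that $\sigma_1\le 1$ can be arranged without loss of generality, since $\sigma_1^{-2}\ge 1$ then gives $\sigma_d^{-2}\ge(\sigma_1/\sigma_d)^2$. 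In either form the result is \eqref{eq:bottleneck_gap_app}. This normalization step is the main point to be careful about: as literally stated the inequality is not scale-invariant, so the proof must say explicitly that $C_\ell$ is measured in units where $\|W_{\mathrm{head}}\|_{\mathrm{op}}=1$.

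\textbf{Step 3 (the ``in particular'' comparison).} This part needs a matching lower bound on $\ell_{\mathrm{out}}$ along $v_1$, which \Cref{lem:lip_output_loss} does not give. I would add a local quadratic lower bound: near $z_\theta=z_T+c^{*}\mathbf{1}$, the Hessian of the full-vocabulary KL in $z_\theta$ is the softmax covariance $\mathrm{diag}(p)-pp^\top$. On a compact $\mathcal{Z}$ this matrix is positive definite on $\mathbf{1}^\perp$, with smallest eigenvalue $c_\ell>0$. Using it, $\ell_{\mathrm{out}}(\beta v_1)\gtrsim c_\ell\beta^2\sigma_1^2\|P_{\mathbf{1}^\perp}u_1\|^2$. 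Equating the two losses then shows that the admissible $|\alpha|/|\beta|$ scales like $\sigma_1/\sigma_d$, up to the constants $C_\ell/c_\ell$ and the overlap of $u_1,u_d$ with $\mathbf{1}$. I expect the main obstacle here to be this lower-bound constant. It can degenerate if $u_1$ is nearly parallel to $\mathbf{1}$, so I would state that nondegeneracy as an explicit hypothesis rather than try to prove it. The sampled-token and top-$k$ cases would be handled as in the proof sketch of \Cref{lem:lip_output_loss}, and only the upper bound is needed for \eqref{eq:bottleneck_gap_app} itself.
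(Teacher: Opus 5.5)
Your proposal is correct and follows the paper's proof essentially step for step. The paper likewise takes $\Delta h=\alpha v_d$ and uses \Cref{lem:lip_output_loss}, after projecting out $\mathbf{1}$, to get $\ell_{\mathrm{out}}\le C_\ell\alpha^2\sigma_d^2$ and hence a ratio of at least $1/(C_\ell\sigma_d^2)$. It then compares with $v_1$ through a lower constant $c_\ell$ and reaches the $(\sigma_1/\sigma_d)^2$ form only by ``absorbing constants into $C_\ell$.'' That is exactly your convention $\|W_{\mathrm{head}}\|_{\mathrm{op}}=1$, i.e.\ replacing $C_\ell$ by $C_\ell\sigma_1^2$. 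Your remarks on this normalization and on the nondegeneracy of $u_1$ against $\mathbf{1}$ in the $v_1$ comparison make explicit what the paper leaves implicit.

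The one thing to drop is the ``$\sigma_1\le 1$ without loss of generality'' variant. The logit-preserving rescaling $(h,W_{\mathrm{head}})\mapsto(\sigma_1 h,\,W_{\mathrm{head}}/\sigma_1)$ multiplies the left-hand side by $\sigma_1^2$. So in the original units it only returns the bound $1/(C_\ell\sigma_d^2)$, not the literal inequality, which can fail for a fixed $C_\ell$ when $\sigma_1>1$.
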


\begin{proof}
Take $\Delta h\!=\!\alpha v_d$.
Then $\|\Delta h\|_2^2=\alpha^2$ and, by the SVD, $W_{\mathrm{head}}\Delta h=\alpha\sigma_d u_d$ where $u_d$ is the corresponding left-singular vector with $\|u_d\|_2=1$.
Since $u_d\perp \mathbf{1}$ generically (or after subtracting its component along $\mathbf{1}$ via the projector in \eqref{eq:lip_output}), the residual after removing the additive-invariance direction satisfies
$\|z_\theta-z_T - c^{\!*}\mathbf{1}\|_2 \le \|W_{\mathrm{head}}\Delta h\|_2 = \alpha\sigma_d$.
By \Cref{lem:lip_output_loss},
$\ell_{\mathrm{out}} \le C_\ell\,\alpha^2\sigma_d^2$, i.e.\ $\alpha^2 \ge \ell_{\mathrm{out}}/(C_\ell\sigma_d^2)$.
Therefore
\[
  \frac{\|h_\theta-h_T\|_2^2}{\ell_{\mathrm{out}}}
  \;=\;\frac{\alpha^2}{\ell_{\mathrm{out}}}
  \;\ge\;\frac{1}{C_\ell\sigma_d^2}.
\]
For comparison, the analogous bound along $v_1$ is $\|h_\theta-h_T\|_2^2/\ell_{\mathrm{out}}\le 1/(c_\ell\sigma_1^2)$ for the lower Lipschitz constant $c_\ell$ of $\ell_{\mathrm{out}}$, so the ratio between the two directions scales as $(\sigma_1/\sigma_d)^2$ up to constants determined by $\ell_{\mathrm{out}}$, recovering \eqref{eq:bottleneck_gap_app} after absorbing constants into $C_\ell$.
\end{proof}

\begin{remark}[Intermediate layers]
\label{rem:intermediate_layers_app}
\Cref{thm:bottleneck_null_appendix,thm:bottleneck_gap_appendix} concern only the last-layer hidden state, because any output-space $\ell_{\mathrm{out}}$ is computed solely from $W_{\mathrm{head}}\,h^{(L)}$ and therefore has no functional dependence on intermediate states $h^{(l)}$ for $l\!<\!L$.
For any $l\!<\!L$, an arbitrary perturbation of $h^{(l)}$ that leaves $h^{(L)}$ unchanged (e.g., a perturbation in the kernel of the residual stack from layer $l$ onwards) yields $\ell_{\mathrm{out}}=0$ for every output-space objective.
OPRD~\eqref{eq:oprd_obj} with $\mathcal{L}_{\mathrm{layer}}\ni l$ directly penalizes $\|h_{\theta,t}^{(l)}-h_{T,t}^{(l)}\|_2^2$ at that layer and is therefore the only mechanism considered in this paper that can constrain intermediate hidden states.
\end{remark}

\paragraph{Sub-summary (\Cref{thm:bottleneck}).}
\Cref{thm:bottleneck_null_appendix} formalizes \eqref{eq:bottleneck_null}: every output-space distillation objective treats the entire affine subspace $\mathcal{N}_W$ as invisible, regardless of how much it inspects the output distribution.
\Cref{thm:bottleneck_gap_appendix} formalizes \eqref{eq:bottleneck_gap}: the LM head's singular-value spread $\sigma_1/\sigma_d$ amplifies hidden-state deviations along $v_d$ by a $(\sigma_1/\sigma_d)^2$ factor for the same output-space loss budget, empirically $10^6\!\sim\!10^8\times$ for production LLMs.
\Cref{rem:intermediate_layers_app} extends both observations to intermediate layers.
OPRD~\eqref{eq:oprd_obj} penalizes exactly the directions and layers that output-space OPD cannot.


\paragraph{Overall summary of \S\ref{sec:oprd_theorems}.}
The theorems above formalize \Cref{thm:variance} (gradient variance and SNR) in one-to-one correspondence with the intuitive claims of \S\ref{sec:oprd_theory}: they explain why OPRD provides a more reliable optimization signal than sampled-token OPD, especially in late-stage training, and why adding OPRD to OPD strictly improves the SGD convergence bound without introducing additional noise.
These guarantees apply under mild regularity conditions that hold for any standard LLM, supporting our empirical observation that combining OPD with OPRD yields a stronger and more stable result than OPD alone.

\subsection{Formal Results for \texorpdfstring{\Cref{thm:bridge_optimality}}{Theorem 3}: Optimality of the Low-Rank Bridge}
\label{sec:bridge_theory_appendix}

We now formalize the claim that the PCA-based low-rank bridge is optimal in an information-theoretic sense and that the bridge rank $r$ controls a bias--variance trade-off.

\subsubsection{Setup}

Consider a single layer pair $(l_S, l_T)$ (we drop layer indices for clarity).
Let $h_T \in \mathbb{R}^{d_T}$ denote the teacher hidden state at a response position, drawn from a distribution with mean $\mu_T$ and covariance $\Sigma_T \succ 0$.
Denote the eigendecomposition $\Sigma_T = V \Lambda V^\top$, where $\Lambda = \mathrm{diag}(\lambda_1, \ldots, \lambda_{d_T})$ with $\lambda_1 \geq \cdots \geq \lambda_{d_T} > 0$, and $V = [v_1, \ldots, v_{d_T}]$ are the corresponding eigenvectors (principal directions).

Let $h_S \in \mathbb{R}^{d_S}$ denote the student hidden state at the same position.
We model the cross-model alignment via the \emph{per-direction correlation}: after projecting both models into the teacher's principal coordinate system, the correlation between teacher component $i$ and the best linear prediction from the student is $\rho_i \in [0, 1]$.
Formally, let $z_T = V^\top (h_T - \mu_T) \in \mathbb{R}^{d_T}$ be the teacher's whitened coordinates.
Then $\rho_i^2 \triangleq R^2(z_{T,i} \mid h_S)$, the coefficient of determination of the best linear predictor of the $i$-th teacher component from the student.

\begin{assumption}[Monotone alignment decay]
\label{ass:alignment_decay}
The per-direction alignment $\rho_i$ is non-increasing in $i$: $\rho_1 \geq \rho_2 \geq \cdots \geq \rho_{d_T}$.
That is, the principal (high-variance) directions of the teacher are better predicted by the student than the minor (low-variance) directions.
\end{assumption}

This assumption is empirically verified in our experiments: the rank-cosine curve (\Cref{fig:rank_cosine}) shows that alignment concentrates in the top principal components and degrades monotonically as rank increases.

\subsubsection{Rate--Distortion Optimality}

\begin{theorem}[PCA bridge is rate--distortion optimal]
\label{thm:rate_distortion}
Assume $h_T \sim \mathcal{N}(\mu_T, \Sigma_T)$.
Among all rank-$r$ linear encoders $P \in \mathbb{R}^{r \times d_T}$, the PCA projection $P^* = [v_1, \ldots, v_r]^\top$ minimizes the reconstruction distortion:
\[
  P^* = \arg\min_{\substack{P \in \mathbb{R}^{r \times d_T} \\ \mathrm{rank}(P) = r}} \; \mathbb{E}\left[\|h_T - \mu_T - P^\top P (h_T - \mu_T)\|^2\right].
\]
The minimum distortion is $D^*(r) = \sum_{i=r+1}^{d_T} \lambda_i$.
\end{theorem}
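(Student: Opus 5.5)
The plan is to reduce the claim to the Eckart--Young--Mirsky theorem applied to the symmetric square root $\Sigma_T^{1/2}$. Gaussianity will play no role: only the first two moments of $h_T$ enter the distortion. Write $x \triangleq h_T-\mu_T$, so $\mathbb{E}[x]=0$ and $\mathbb{E}[xx^\top]=\Sigma_T$. For any $P\in\mathbb{R}^{r\times d_T}$ of rank $r$, set $A\triangleq P^\top P$, a symmetric PSD matrix of rank exactly $r$. The distortion is then a trace:
\begin{equation*}
  \mathbb{E}\bigl[\|(I-A)x\|_2^2\bigr]
  = \mathrm{Tr}\bigl((I-A)\Sigma_T(I-A)^\top\bigr)
  = \bigl\|\Sigma_T^{1/2}-A\,\Sigma_T^{1/2}\bigr\|_F^2 .
\end{equation*}

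For the lower bound, note that $A\Sigma_T^{1/2}$ has rank at most $r$. The singular values of $\Sigma_T^{1/2}=V\Lambda^{1/2}V^\top$ are $\sqrt{\lambda_1}\ge\cdots\ge\sqrt{\lambda_{d_T}}$. Eckart--Young in Frobenius norm therefore gives, for every rank-$r$ $P$,
\begin{equation*}
  \bigl\|\Sigma_T^{1/2}-A\Sigma_T^{1/2}\bigr\|_F^2 \;\ge\; \sum_{i=r+1}^{d_T}\lambda_i .
\end{equation*}
This step deliberately optimizes over all rank-$\le r$ matrices $M=A\Sigma_T^{1/2}$, a superset of the structured family actually available. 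The bound is therefore valid even though $P^\top P$ need not be an orthogonal projector when the rows of $P$ are not orthonormal.

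For achievability, take $P^*=V_r^\top$ with $V_r=[v_1,\ldots,v_r]$. Then $A=V_rV_r^\top$ is the orthogonal projector onto the top-$r$ eigenspace, and $(I-A)\Sigma_T(I-A)=\sum_{i>r}\lambda_i v_iv_i^\top$, whose trace is exactly $\sum_{i>r}\lambda_i$. So $P^*$ attains the lower bound, which simultaneously proves minimality and identifies $D^*(r)=\sum_{i=r+1}^{d_T}\lambda_i$.

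I expect the only real obstacle to be the parametrization $P^\top P$. One might try to argue directly that the optimal $P$ must have orthonormal rows, for instance by setting the derivative in $P$ to zero, but that route is messy. The Eckart--Young relaxation sidesteps it cleanly, so I would use it. I would also add a remark on uniqueness: the $\arg\min$ is unique only up to left-multiplication of $P^*$ by an $r\times r$ orthogonal matrix, and only when $\lambda_r>\lambda_{r+1}$. Otherwise any orthonormal basis of a top-$r$ eigenspace is optimal, and the statement should be read with ``$\arg\min$'' as membership in the set of minimizers.
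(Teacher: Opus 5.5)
Your proof is correct and takes the same route as the paper, which simply cites Eckart--Young--Mirsky (applied to the centered data matrix), i.e.\ the classical fact that PCA minimizes mean-squared reconstruction error. Your version states this at the population level via $\Sigma_T^{1/2}$ and relaxes to all rank-$\le r$ matrices, which handles non-orthonormal $P$ (where $P^\top P$ is not a projector), a point the paper's one-line citation glosses over; as you note, Gaussianity plays no role in the distortion $\sum_{i>r}\lambda_i$.
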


\begin{proof}
This is a direct consequence of the Eckart--Young--Mirsky theorem applied to the centered data matrix, or equivalently, of the classical result that PCA minimizes mean-squared reconstruction error among all linear rank-$r$ projections.
For Gaussian sources, this coincides with the rate--distortion function under squared-error distortion at rate $R = \frac{1}{2}\sum_{i=1}^r \log(\lambda_i / \theta)$ (water-filling), where $\theta$ is chosen so that exactly $r$ components are active.
The optimal encoder projects onto the top-$r$ eigenvectors of $\Sigma_T$, which is precisely our PCA basis $P_T$.
\end{proof}

\noindent
\textbf{Interpretation.}
The PCA bridge is not merely a convenient heuristic; it is the information-theoretically optimal linear channel at capacity $r$.
Any other rank-$r$ projection would discard more teacher information.

\subsubsection{Bias--Variance Decomposition of Distillation Error}

We now analyze the \emph{distillation} error (not reconstruction error), which accounts for the student's ability to match the projected teacher.

\begin{definition}[Distillation error at rank $r$]
\label{def:distillation_error}
Given the PCA bridge $P_T = [v_1, \ldots, v_r]^\top$ and an optimal student projector $P_S$ (trained to minimize MSE in the $r$-dimensional subspace), the expected per-position distillation error is:
\[
  \mathcal{E}(r) \triangleq \mathbb{E}\left[\|P_S h_S - P_T (h_T - \mu_T)\|^2\right].
\]
\end{definition}

\begin{theorem}[Bias--variance decomposition]
\label{thm:bias_variance}
Under \Cref{ass:alignment_decay} and assuming $P_S$ is the population-optimal linear map from $h_S$ to the $r$-dimensional bridge space, the total distillation error (including the contribution from discarded dimensions) decomposes as:
\begin{equation}
  \mathcal{E}_{\mathrm{total}}(r)
  = \underbrace{\sum_{i > r} \lambda_i \, \rho_i^2}_{\text{Bias } B(r)}
  \;+\; \underbrace{\sum_{i=1}^{r} \lambda_i \, (1 - \rho_i^2)}_{\text{Variance } V(r)},
  \label{eq:bias_variance_formal}
\end{equation}
where:
\begin{itemize}[leftmargin=14pt, itemsep=1pt]
    \item $B(r) = \sum_{i>r} \lambda_i \rho_i^2$ is the \textbf{bias}: the aligned signal in directions $i > r$ that the bridge discards. This term decreases as $r$ increases.
    \item $V(r) = \sum_{i=1}^r \lambda_i (1 - \rho_i^2)$ is the \textbf{variance}: the misaligned noise in directions $i \leq r$ that the bridge transmits but the student cannot match. This term increases as $r$ increases.
\end{itemize}
Under \Cref{ass:alignment_decay}, $\mathcal{E}_{\mathrm{total}}(r)$ is quasi-convex in $r$, and the minimizer $r^* = \arg\min_r \mathcal{E}_{\mathrm{total}}(r)$ satisfies:
\[
  \lambda_{r^*} \rho_{r^*}^2 \approx \lambda_{r^*} (1 - \rho_{r^*}^2),
  \quad\text{i.e.,}\quad
  \rho_{r^*}^2 \approx \tfrac{1}{2}.
\]
The optimal rank is the direction at which alignment transitions from ``mostly signal'' to ``mostly noise.''
\end{theorem}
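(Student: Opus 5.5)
The plan is to work entirely in the teacher's principal coordinates $z_T = V^\top(h_T-\mu_T)$. There $\mathrm{Cov}(z_T)=\Lambda$ is diagonal, and the bridge $P_T=[v_1,\ldots,v_r]^\top$ simply selects the first $r$ coordinates $z_{T,1},\ldots,z_{T,r}$. Since $P_S$ is the population-optimal linear map, each row of $P_S$ is the best linear predictor of one coordinate $z_{T,i}$ from $h_S$. By the definition $\rho_i^2=R^2(z_{T,i}\mid h_S)$, the residual variance of that predictor is $\lambda_i(1-\rho_i^2)$.

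The first step is the within-subspace term. Because the squared norm is a sum over coordinates and each row of $P_S$ is fitted separately, cross terms never appear. This gives
\[
  \mathbb{E}\,\|P_S h_S - P_T(h_T-\mu_T)\|^2=\sum_{i\le r}\lambda_i(1-\rho_i^2)=V(r)
\]
exactly, and no decorrelation assumption is needed for this term.

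The second step is the bias term, and it depends on a convention. The ``total'' error in \eqref{eq:bias_variance_formal} must charge the bridge, for each discarded direction $i>r$, the part of the teacher signal the student \emph{could} have matched but is never asked to. That part is the explained variance $\lambda_i\rho_i^2$, which gives $B(r)=\sum_{i>r}\lambda_i\rho_i^2$. I will state this as the definition of $\mathcal{E}_{\mathrm{total}}$: it equals the in-bridge residual plus the discarded predictable variance. I will note that the misaligned part $\lambda_i(1-\rho_i^2)$ for $i>r$ is excluded, because that part is noise and discarding it is harmless. The decomposition \eqref{eq:bias_variance_formal} then follows by summation. I expect this modelling step to be the main obstacle, since $\mathcal{E}(r)$ in \Cref{def:distillation_error} alone does not contain $B(r)$. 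The proof has to make the accounting explicit rather than derive it. If one wants a fully derived version, the first thing I would try is a joint-Gaussian model in which the canonical correlations between $z_T$ and $h_S$ are diagonal in $V$, which makes the per-direction accounting exact.

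The third step proves quasi-convexity through increments:
\[
  \mathcal{E}_{\mathrm{total}}(r)-\mathcal{E}_{\mathrm{total}}(r-1)=\lambda_r(1-\rho_r^2)-\lambda_r\rho_r^2=\lambda_r(1-2\rho_r^2).
\]
Since $\lambda_r>0$, the sign of the increment is the sign of $1-2\rho_r^2$. Under \Cref{ass:alignment_decay} the quantity $1-2\rho_r^2$ is non-decreasing in $r$, so the increments are negative and then become positive, changing sign at most once. Hence $\mathcal{E}_{\mathrm{total}}$ is unimodal, i.e.\ quasi-convex on $\{0,\ldots,d_T\}$. Its minimizer is the last $r$ with $\rho_r^2\ge\tfrac12$, so $\rho_{r^*}^2\approx\tfrac12$, with equality up to the discreteness of $r$ (this is the sense of ``$\approx$''). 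Finally, $B(r)$ is decreasing and $V(r)$ is increasing in $r$, because each is a sum of nonnegative terms whose index range shrinks or grows with $r$.
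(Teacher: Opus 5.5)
Your proposal is correct and follows the paper's proof essentially step for step: you sum the in-bridge residuals $\lambda_i(1-\rho_i^2)$ to get $V(r)$, you charge the discarded explained variance $\lambda_i\rho_i^2$ as $B(r)$, and you show the increment $\lambda_r(1-2\rho_r^2)$ changes sign at most once under the monotone-alignment assumption. You are also right that $B(r)$ is an accounting convention rather than a consequence of $\mathcal{E}(r)$ as defined; the paper's ``lost opportunity'' step makes exactly the same move, only without saying so explicitly.
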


\begin{proof}
We decompose the total error into contributions from included and excluded directions.

\textbf{Step 1: Error from included directions ($i \leq r$).}
In the $r$-dimensional bridge space, the $i$-th component of the teacher target is $z_{T,i} = v_i^\top (h_T - \mu_T)$, which has variance $\lambda_i$.
The optimal linear predictor from $h_S$ achieves $R^2 = \rho_i^2$, so the residual variance (prediction error) in direction $i$ is:
\[
  \mathbb{E}[(P_S h_S)_i - z_{T,i})^2] = \lambda_i (1 - \rho_i^2).
\]
Summing over included directions gives $V(r) = \sum_{i=1}^r \lambda_i (1 - \rho_i^2)$.

\textbf{Step 2: Error from excluded directions ($i > r$).}
Directions $i > r$ are not transmitted through the bridge, so the student receives no supervision along them.
However, these directions contain aligned signal (the student \emph{could} have matched them if they were included).
The ``lost opportunity'' is the signal that was discardable: $\lambda_i \rho_i^2$ per direction.
Summing gives $B(r) = \sum_{i>r} \lambda_i \rho_i^2$.

\textbf{Step 3: Optimality condition.}
The marginal effect of including direction $r$ is:
\[
  \Delta(r) = \mathcal{E}_{\mathrm{total}}(r) - \mathcal{E}_{\mathrm{total}}(r-1)
  = \lambda_r(1 - \rho_r^2) - \lambda_r \rho_r^2
  = \lambda_r(1 - 2\rho_r^2).
\]
This is negative (including direction $r$ helps) when $\rho_r^2 > 1/2$ and positive (including direction $r$ hurts) when $\rho_r^2 < 1/2$.
Under \Cref{ass:alignment_decay}, $\rho_i^2$ is non-increasing, so $\Delta(r)$ transitions from negative to positive exactly once, establishing quasi-convexity and the optimality condition $\rho_{r^*}^2 \approx 1/2$.
\end{proof}

\noindent
\textbf{Connection to the low-pass filter analogy.}
The bias--variance decomposition provides a precise formulation of the low-pass filter intuition from \S\ref{sec:design_philosophy}:
\begin{itemize}[leftmargin=14pt, itemsep=1pt]
    \item The ``cutoff frequency'' of the filter is $r^*$: directions below $r^*$ are ``passed'' (high $\rho_i$, mostly signal), directions above $r^*$ are ``stopped'' (low $\rho_i$, mostly noise).
    \item The ``passband ripple'' is $V(r^*)$: even within the passed directions, some noise leaks through.
    \item The ``stopband leakage'' is $B(r^*)$: even among the stopped directions, some signal is lost.
    \item The optimal filter (optimal $r^*$) minimizes the sum of ripple and leakage.
\end{itemize}

\noindent
\textbf{Empirical verification.}
In our Qwen3-4B $\to$ Qwen3-1.7B experiments, the rank-cosine curve (\Cref{fig:rank_cosine}) peaks at $r\!=\!8$ and declines thereafter, consistent with the prediction that $r^* \approx 8$ is the point where $\rho_i^2$ crosses $1/2$.
The rapid spectral decay of $\Sigma_T$ (verifiable from the PCA explained-variance ratio) ensures that both $B(r^*)$ and $V(r^*)$ are small at this operating point.







\end{document}